\renewcommand{\arraystretch}{1.2}
\definecolor{mygray}{gray}{0.95}
\newtcolorbox{mybox}[2][]{%
  attach boxed title to top center
               = {yshift=-8pt},
  colback      = black,
  colframe     = black,
  fonttitle    = \bfseries,
  colbacktitle = white,
  title        = #2,#1,
  enhanced,
}
\def\bfzero{{\boldsymbol{0}}}
\def\bfone{{{\bf1}}}
\def\bfb{{\boldsymbol b}}
\def\bfe{{\boldsymbol e}}
\def\bfg{{\boldsymbol g}}
\def\bfo{{\boldsymbol o}}
\def\bfp{{\boldsymbol p}}
\def\bfu{{\boldsymbol u}}
\def\bfw{{\boldsymbol w}}
\def\bfx{{\boldsymbol x}}
\def\bfy{{\boldsymbol y}}
\def\bfz{{\boldsymbol z}}
\def\bfB{{\boldsymbol B}}
\def\bfI{{\boldsymbol I}}
\def\bfJ{{\boldsymbol J}}
\def\bfM{{\boldsymbol M}}
\def\bfU{{\boldsymbol U}}
\def\bfV{{\boldsymbol V}}
\def\bfW{{\boldsymbol W}}
\def\bfX{{\boldsymbol X}}
\newtheorem{theorem}{Theorem}
\newtheorem{pro}{Proposition}
\newtheorem{lemma}{Lemma}
\newtheorem{definition}{Definition}
\newenvironment{customthm}[1]
  {\innercustomthm}
  {\endinnercustomthm}
\newcommand{\argmax}{\operatornamewithlimits{argmax}}
\newcommand{\vm}[2]{\langle  ~#1~, ~#2~ \rangle}
\newcommand{\Revise}[1]{\textcolor{black}{#1}}
\newcommand{\ICLRRevise}[1]{\textcolor{black}{#1}}
\title{Joint Edge-Model Sparse Learning is Provably Efficient for Graph Neural Networks}
\author{Shuai Zhang\\
Rensselaer Polytechnic Institute\\
\texttt{zhangs29@rpi.edu}
  \And
  Meng Wang \\
  Rensselaer Polytechnic Institute\\
  \texttt{wangm7@rpi.edu}\\
  \And
  Pin-Yu Chen\\
  IBM Research\\
  \texttt{Pin-Yu.Chen@ibm.com}
  \And
  Sijia Liu\\
  Michigan State University\\
  \texttt{liusiji5@msu.edu}
\And
Songtao Lu\\
  IBM Research\\
  \texttt{songtao@ibm.com}
    \And
  Miao Liu\\
  IBM Research\\
  \texttt{miao.liu1@ibm.com}
  }
\begin{document}

\maketitle

\begin{abstract}
Due to the significant computational challenge of training large-scale graph neural networks (GNNs), various sparse learning techniques have been exploited to reduce memory and storage costs. Examples include  \textit{graph sparsification} that samples a subgraph to reduce the amount of data aggregation
and \textit{model sparsification} that prunes the neural network to reduce the number of trainable weights. Despite the empirical successes in reducing the training cost while maintaining the test accuracy, the theoretical generalization analysis of sparse learning for GNNs remains elusive. To the best of our knowledge, this paper provides the first theoretical characterization of joint edge-model sparse learning from the perspective of sample complexity and convergence rate in achieving zero generalization error. It proves analytically that both sampling important nodes   and pruning neurons with lowest-magnitude
can reduce the sample complexity and improve convergence without compromising the test accuracy. Although the analysis is centered on
two-layer GNNs with structural constraints on data, the insights are applicable to more general setups and  justified by both synthetic and practical citation datasets.

\end{abstract}

\section{Introduction}
Graph neural networks
(GNNs) can represent graph structured data effectively and find applications in objective detection \citep{SR20,YXL18}, recommendation system \citep{YHCE18,ZZWDMS21}, rational learning \citep{SKBBTW18}, and machine translation \citep{WPCZY21,WSCL16}. However, training GNNs directly on large-scale graphs such as   scientific citation networks \citep{HK87,HYL17,XLTSKJ18}, social networks \citep{KW17,SaM14,JM10}, and symbolic networks \citep{RGLK20} becomes   computationally challenging or even infeasible, resulting from both the exponential aggregation of neighboring features and the excessive model complexity, e.g., training a two-layer GNN on Reddit data \citep{TFL20} containing 232,965 nodes with an average degree of 492 can be twice as costly  as ResNet-50 on ImageNet \citep{CPC16} in computation resources.

The approaches to accelerate GNN training can be categorized into two paradigms: (i) sparsifying the graph topology \citep{HYL17,CMX18,PAS14,ZHWJSG19},  and (ii) sparsifying  the network model \citep{CSCZW21,YLZFL22}. Sparsifying the graph topology means selecting a subgraph instead of the original graph to reduce the computation of neighborhood aggregation. One could either use a fixed subgraph (e.g., the graph typology  \citep{HKBG08}, graph shift operator \citep{AZABP17,CFH16}, or the degree distribution \citep{LF06,VSP16,EJPRS18} is preserved) or apply sampling algorithms, such as edge sparsification \citep{HYL17}, or node sparsification \citep{CMX18,ZHWJSG19} to select a different subgraph in each iteration.  Sparsifying the network model means reducing the complexity of the neural network model, including removing the non-linear activation   \citep{WSZF19,HDWLZW20},    quantizing neuron weights \citep{TFL20,BBZ21} and  output of the intermediate layer \citep{LZYLCH21}, pruning network \citep{FC18}, or knowledge distillation \citep{YQSTW20,HVD15,YZWJW20,JMCDW21}. Both sparsification frameworks can be combined, such as joint edge sampling and network model pruning in \citep{CSCZW21,YLZFL22}. 

{Despite many empirical successes in accelerating GNN training without sacrificing test accuracy, the theoretical evaluation of training GNNs with sparsification techniques remains largely unexplored. Most theoretical analyses  are centered on the expressive power of sampled graphs \ICLRRevise{\citep{HYL17,CRM21,CMX18,ZHWJSG19,RHXH19}} or pruned networks \citep{MYSS20,ZJZZZR21,DNV22}.
However, there is limited \textit{generalization} analysis, i.e., whether the learned model performs well on testing data.  Most existing generalization analyses are limited to   two-layer cases, even for the simplest form of feed-forward neural networks (NNs), see, e.g., \citep{ZWLC20_2,OS20,HLSY21,SWL22} as examples. To the best of our knowledge, only \citet{LWLCX22,ALL19} go beyond two layers by considering three-layer GNNs and NNs, respectively. However, \citet{LWLCX22} requires a strong   assumption, which cannot be justified empirically or theoretically, that the sampled graph indeed presents the mapping from data to labels. Moreover,  \citet{LWLCX22,ALL19} focus on a linearized model around the initialization, 
and the learned weights only stay near the  initialization \citep{AL22}. The linearized model cannot justify the advantages of using multi-layer (G)NNs and network pruning. 
As far as we know, there is no finite-sample generalization analysis for the joint sparsification, even for two-layer GNNs.}

\textbf{Contributions}. 
\textit{This paper provides the first theoretical generalization analysis of joint topology-model sparsification in training GNNs}, including (1) explicit bounds of the required number of known labels, referred to as the sample complexity, and the convergence rate of stochastic gradient {descent} (SGD) to return a model that predicts the unknown labels accurately; (2) quantitative proof for that joint topology  and model sparsification is a win-win strategy in improving the learning performance from the  sample complexity and convergence rate perspectives. 

We consider the following problem setup to establish our theoretical analysis: node classification on a \ICLRRevise{one-hidden-layer} GNN, assuming that some node features are class-relevant {\citep{SWL22}}, {which determines the labels}, while some node features are class-irrelevant, {which contains only irrelevant information for labeling, and the labels of nodes are affected by the class-relevant features of their neighbors.} The data model with this structural constraint characterizes the phenomenon that some nodes are more influential than other nodes, {such as} in social networks \citep{CMX18,VCCRLB18}, or {the case where} the graph contains redundancy information \citep{ZZCSN20}.

Specifically, the sample complexity is quadratic in $(1-\beta)/\alpha$, where $\alpha$ in $(0,1]$ is the probability of sampling nodes of class-relevant features, and a larger $\alpha$ means class-relevant features are sampled more frequently.  $\beta$ in $[0,1)$ is the fraction of pruned neurons in the network model using the magnitude-based pruning method such as  \citep{FC18}. The number of SGD iterations to reach a desirable model is linear in $(1-\beta)/\alpha$.  Therefore, our results formally prove that graph sampling reduces both the sample complexity and number of iterations more significantly provided that  nodes with class-relevant features are sampled more frequently. The intuition is that  importance sampling helps the algorithm learns the {class-relevant} features more efficiently and thus reduces the sample requirement and convergence time. The same learning improvement is also observed when the pruning rate increases as long as $\beta$ does not exceed a threshold close to $1$. 

\section{Graph Neural Networks: Formulation and Algorithm}\label{sec: problem_formulation}

\subsection{Problem Formulation}

\begin{wrapfigure}{r}{0.45\linewidth}
\vspace{-0.6in}
    \centering
    \includegraphics[width=1.0\linewidth]{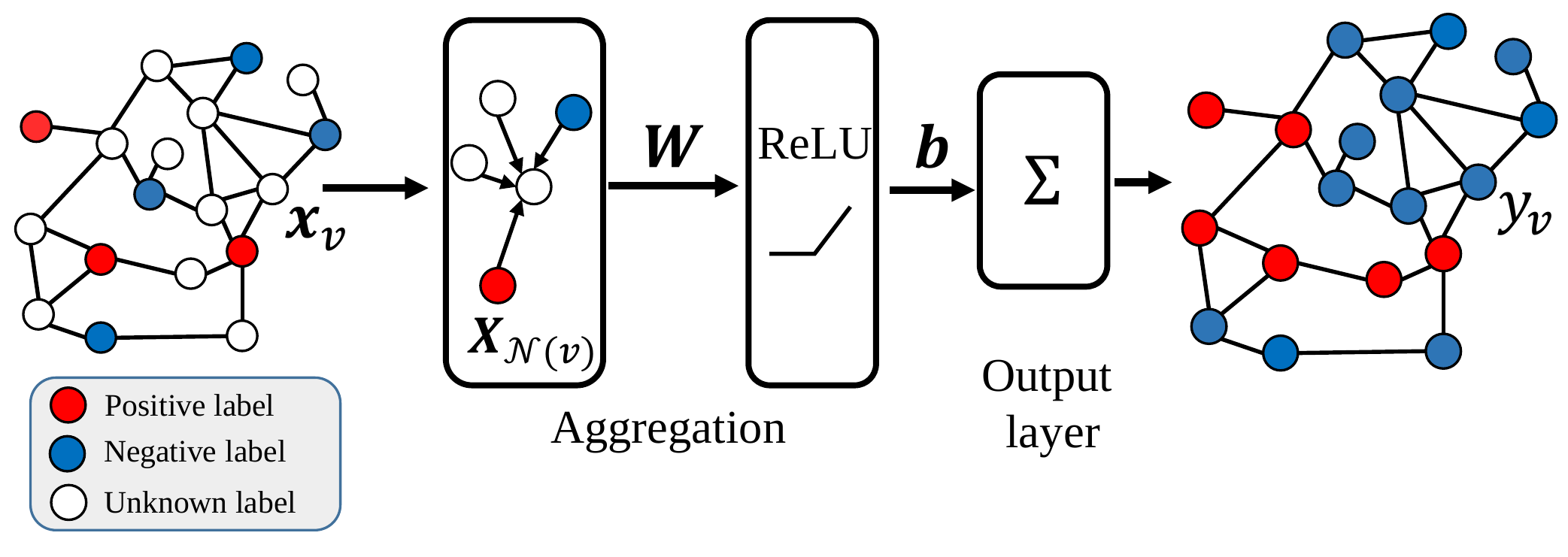}
    \caption{\ICLRRevise{Illustration of node classification in the GNN}}
    \label{fig:GNN} 
\end{wrapfigure}
Given 
an undirected graph $\mathcal{G}(\mathcal{V},\mathcal{E})$, where $\mathcal{V}$ is the set of nodes, $\mathcal{E}$ is the set of edges.  Let $R$ denote the maximum node degree.  
For any node $v\in\mathcal{V}$,  let  $\bfx_v\in\mathbb{R}^d$ and $y_v\in \{+1, -1\}$   denote its input feature and corresponding label\footnote{The analysis can be extended to multi-class classification, see Appendix \ref{sec: multi-class}.}, respectively.
Given all node features $\{\bfx_v\}_{v\in \mathcal{V}}$ and partially known  labels $\{y_v \}_{v\in\mathcal{D}}$ for nodes 
in $\mathcal{D}\subset \mathcal{V}$, the semi-supervised node classification problem   aims to predict all unknown labels $y_v$  for $v \in \mathcal{V}/\mathcal{D}$. 

This paper considers  a graph neural network with non-linear aggregator functions, as shown in Figure \ref{fig:GNN}. The weights of the $K$ neurons in the hidden layer are denoted as $\{\bfw_k \in \mathbb{R}^{d}\}_{k=1}^{K}$, and 
the weights in the linear layer are denoted as $\{b_k \in \mathbb{R} \}_{k=1}^{K}$. 
Let $\bfW\in \mathbb{R}^{d\times K}$ and $\bfb\in \mathbb{R}^{K}$ be the concatenation
of $\{\bfw_k \}_{k=1}^K$ and $\{b_k \}_{k=1}^K$, respectively. 
For any node $v$, 
let $\mathcal{N}(v)$ denote the set of its (1-hop) neighbors (with self-connection),
and  $\bfX_{\mathcal{N}(v)}\in\mathbb{R}^{d\times |\mathcal{N}(v)|}$ contains features in $\mathcal{N}(v)$.  

Therefore, the output of the GNN for node $v$ can be written as:
\begin{equation}\label{eqn: useless1}
    g(\bfW, \bfb;\bfX_{\mathcal{N}(v)}) = \frac{1}{K}\sum_{k=1}^{K} b_k \cdot\text{AGG}(\bfX_{\mathcal{N}(v)},\bfw_k),
\end{equation}
where $\text{AGG}(\bfX_{\mathcal{N}(v)},\bfw)$ denotes a general aggregator using features   $\bfX_{\mathcal{N}(v)}$ and weight $\bfw$, e.g., weighted sum of neighbor nodes \citep{VCCRLB18}, max-pooling \citep{HYL17}, or min-pooling \citep{CCBLV20} with some non-linear activation function. We consider ReLU as $\phi(\cdot)=\max\{\cdot,0\}$. Given the GNN, the label  $y_v$ at node $v$ is predicted by    $\text{sign}(g(\bfW, \bfb;\bfX_{\mathcal{N}(v)}) )$.

We only update $\bfW$ due to the \ICLRRevise{homogeneity} of ReLU function, which is a common practice to simplify the analysis \citep{ALL19,ADHLW19,OS20,HLSY21}.
The training problem minimizes the following empirical risk function (ERF):
\begin{equation}\label{eqn: ERF}
\begin{split}
    \min_\bfW: \quad \hat{f}_{\mathcal{D}}(\bfW,\bfb^{(0)}) 
    := &-\frac{1}{|\mathcal{D}|}\sum_{ v \in \mathcal{D} } y_v\cdot g\big(\bfW,\bfb^{(0)};\bfX_{\mathcal{N}(v)} \big).
\end{split}
\end{equation} 
The test error is evaluated by the following generalization error function:
\begin{equation}\label{eqn: generalization_error}
    I(g(\bfW,\bfb)) = \frac{1}{|\mathcal{V}|}\sum_{v\in\mathcal{V}} \max \big\{1 - y_v\cdot g\big(\bfW,\bfb;\bfX_{\mathcal{N}(v)}\big) , 0 \big\}.
\end{equation}
If $I(g(\bfW,\bfb))=0$, 
$y_v = \text{sign}(g(\bfW,\bfb;\bfX_{\mathcal{N}(v)}))$ for all $v$, indicating zero test error.

{Albeit different from practical GNNs, the model considered in this paper can be viewed as a one-hidden-layer GNN,  which is the state-of-the-art practice in generalization and convergence analyses  with structural data \citep{BG21,DLS22,SWL22,AL22}.
Moreover, the optimization problem of \eqref{eqn: ERF} is already highly \textit{non-convex} due to the non-linearity of ReLU functions. For example, as indicated in \citep{LSLS18,SS18}, one-hidden-layer (G)NNs contains intractably many spurious local minima. 
In addition, the VC dimension of the GNN model and data distribution considered in this paper is proved to be at least an exponential function of the data dimension (see Appendix \ref{app: VC} for the proof). This model is highly expressive, and it is extremely nontrivial to obtain a polynomial sample complexity, which is one contribution of this paper.} 

\subsection{GNN Learning Algorithm via Joint Edge and Model Sparsification}



The GNN learning problem (\ref{eqn: ERF}) is solved via a mini-batch SGD algorithm, as summarized in Algorithm \ref{Alg}.  The coefficients $b_k$'s    are randomly selected from $+1$ or $-1$ and remain unchanged during training.  
The weights  $\bfw_k$ in the hidden layer are initialized  from a multi-variate Gaussian  $\mathcal{N}(0, \delta^2\bfI_d)$ with a small constant $\delta$, e.g. $\delta=0.1$. 
The training data are divided into disjoint subsets,
and one subset is used in each iteration to update $\bfW$ through SGD.
Algorithm \ref{Alg} contains two training stages: pre-training on $\bfW$ (lines 1-4) with few iterations, and re-training on the pruned model $\bfM\odot \bfW$ (lines 5-8), where $\odot$ stands for entry-wise multiplication. Here neuron-wise magnitude pruning is used to   obtain a weight mask $\bfM$ and graph topology sparsification is achieved by node sampling (line 8).
During each iteration, only part of the neighbor nodes is fed into the aggregator function in \eqref{eqn: useless1} at each iteration, where
$\mathcal{N}^{(s)}(t)$ denotes the sampled subset of neighbors of node $v$ at iteration $t$.




\begin{algorithm}[h]
	\caption{Training GNN via Joint Edge and Model Sparsification}\label{Alg}
	\begin{algorithmic}[1]
	\small
		\Require 
		Node features $\bfX$, known node labels$\{y_v\}_{v\in\mathcal{D}}$ with \ICLRRevise{$\mathcal{D}\subseteq\mathcal{V}$},
		step size $c_{\eta} = 10\delta$ with constant $\delta$, the number of sampled edges $r$, the pruning rate $\beta$, the pre-training iterations $T' = {\|\bfX\|_\infty}/{c_\eta}$, the number of iterations $T$.
		
		\renewcommand{\algorithmicensure}{\textbf{Initialization:}}
		\Ensure {$\bfW^{(0)}, \bfb^{(0)}$} as
		$\bfw_k^{(0)} \sim \mathcal{N}(0, \delta^2\bfI_d)$ and $b_k^{(0)}\sim \text{Uniform}(\{-1, +1\})$ for $k\in[K]$;
		\renewcommand{\algorithmicensure}{\textbf{Pre-training:}}
		\Ensure update model weights $\bfW$ through mini-batch SGD with edge sampling
		\State  Divide $\mathcal{D}$ into disjoint subsets  $\{\mathcal{D}^{(t')}\}_{t'=1}^{T'}$
		\For   {$t'=0, 1, 2, \cdots, T'-1$}
		\State \textbf{Sample} 
 		$\mathcal{N}_s^{(t')}(v)$ for every node $v$ in $\mathcal{D}^{(t')}$;
		\State $\bfW^{(t'+1)}=\bfW^{(t')}-c_{\eta}\cdot \nabla_\bfW \hat{f}^{(t)}_{\mathcal{D}}(\bfW^{(t')},\bfb^{(0)})$;
		\EndFor
		\renewcommand{\algorithmicensure}{\textbf{Pruning:}}
	    \Ensure set $\beta$ fraction of neurons in $\bfW^{(T')}$  with the
		lowest magnitude weights  to $0$, 
		and obtain the corresponding binary mask $\bfM$;
		\renewcommand{\algorithmicensure}{\textbf{Re-training:}}
		\Ensure
		rewind the weights to the original initialization as $\bfM\odot\bfW^{(0)}$, and update model weights through SGD with edge sampling;
	 	\State 
	Divide $\mathcal{D}$ into disjoint subsets  $\{\mathcal{D}^{(t)}\}_{t=1}^T$
		\For  {$t=0, 1, 2, \cdots , T$ }
 		\State \textbf{Sample} 
 		$\mathcal{N}_s^{(t)}(v)$ for every node $v$ in $\mathcal{D}^{(t)}$;
		\State $\bfW^{(t+1)}=\bfW^{(t)}-c_{\eta}\cdot \nabla_\bfW \hat{f}^{(t)}_{\mathcal{D}^{(t)}}(\bfM \odot \bfW^{(t)}, \bfb^{(0)})$;
		\EndFor
		\renewcommand{\algorithmicensure}{\textbf{Return:}}
		\Ensure $\bfW^{(T)}$ and $\bfb^{(0)}$.
	\end{algorithmic}
\end{algorithm}

\textbf{Edge sparsification}  samples a subset of neighbors, rather than the entire  $\mathcal{N}{(v)}$, for every node $v$ in computing \eqref{eqn: useless1}  to reduce the per-iteration computational complexity.   
This paper follows the GraphSAGE framework \citep{HYL17}, where $r$ ($r\ll R$) neighbors  
are sampled (all these neighbors are sampled if $|\mathcal{N}(v)|\le r$) for each node at each iteration. At iteration $t$, the gradient is 
\begin{equation}\label{eqn: ERF_t}
    \begin{split}
        \nabla\hat{f}_{\mathcal{D}}^{(t)}(\bfW,\bfb^{(0)}) 
        = &-\frac{1}{|\mathcal{D}|}\sum_{ v \in \mathcal{D}} y_v\cdot \nabla_\bfW g\big(\bfW, \bfb^{(0)};\bfX_{\mathcal{N}_s^{(t)}(v)} \big).
\end{split}
\end{equation}
\ICLRRevise{where $\mathcal{D}\subseteq\mathcal{V}$ is the subset of training nodes with labels.} The aggregator function used in this paper is the max-pooling function, i.e.,
\begin{equation}\label{eqn:pool_agg}
     \text{AGG}(\bfX_{\mathcal{N}_s^{(t)}(v)},\bfw) = \max_{n\in\mathcal{N}^{(t)}_s(v)}\phi(\vm{\bfw}{\bfx_{n}}),
\end{equation}
which has been widely used in GraphSAGE \citep{HYL17} and its variants \citep{GHZZ21,OCB19,ZSL22,LLSGP22}.
This paper considers an importance sampling strategy with the idea that some nodes are sampled with a higher probability than other nodes, like the sampling strategy in \citep{CMX18,ZHWJSG19,CSCZW21}.

\textbf{Model sparsification} first pre-trains the neural network (often by only a few iterations) and then prunes the network   by setting some neuron weights to zero. It then re-trains the pruned model with fewer parameters and is less computationally expensive to train. 
Existing pruning methods include neuron pruning and weight pruning. The former sets all entries of a neuron $\bfw_k$ to zeros simultaneously, while the latter sets  entries of  $\bfw_k$  to zeros independently. 

This paper considers neuron pruning.   Similar to \citep{CSCZW21}, we first train the original GNN until the algorithm converges. Then, magnitude pruning is applied to neurons via removing a $\beta$ ($\beta\in [0,1)$) ratio of  neurons with  the   smallest norm. Let  $\bfM\in\{ 0 , 1 \}^{d\times K}$ be the binary mask matrix with all zeros in column $k$ if neuron $k$ is removed. 
Then, we rewind the remaining GNN to  the original initialization (i.e., $\bfM\odot\bfW^{(0)}$) and re-train on the model $\bfM\odot \bfW$.

\section{Theoretical Analysis}\label{sec: theoretical_results}


\subsection{Takeaways of the Theoretical Findings}

Before formally presenting our data model and theoretical results, we first briefly introduce the key takeaways of our results. 
We consider the general setup that the node features as a union of the noisy realizations of some class-relevant features and class-irrelevant ones, and $\delta$ denotes the upper bound of the additive noise. 
The label $y_v$ of node $v$ is determined by class-relevant features in $\mathcal{N}(v)$. We assume for simplicity that $\mathcal{N}(v)$ contains the class-relevant features for exactly one class.  Some major parameters are summarized in Table \ref{table:problem_formulation}. The highlights include: 

\AtBeginEnvironment{tabular}{\small}
\begin{table}[h]
    \centering
    \caption{Some Important Notations}
    \vspace{-2mm}
    \begin{tabular}{c|p{5.5cm}||c|p{6cm}}
    \hline
    \hline
    $~K~$ & Number of neurons in the hidden layer;  
    & $~\sigma~$ & Upper bound of additive noise in input features; 
    \\
    \hline
    $r$ & Number of sampled edges for each node;  
    & $R$ & The maximum degree of original graph $\mathcal{G}$; 
    \\
    \hline
    \ICLRRevise{$L$} & \multicolumn{3}{l}{\ICLRRevise{the number of class-relevant and class-irrelevant patterns;}}  

    \\
    \hline
    $\alpha$ &     
    \multicolumn{3}{l}{the probability of containing class-relevant nodes in the sampled neighbors of one node;} 
     \\
    \hline
    $\beta$ & \multicolumn{3}{l}{Pruning rate of model weights; $\beta \in [0,1-1/L)$; $\beta=0$ means no pruning;}
    \\
    \hline
    \hline
    \end{tabular}
    \label{table:problem_formulation}
\vspace{-5mm}
\end{table}

\textbf{(T1) Sample complexity and convergence analysis for  zero generalization error.} We prove that the learned model (with or without any sparsification) can achieve zero generalization \ICLRRevise{error} with high probability over the randomness in the initialization and the SGD steps. The  sample complexity is linear in $\sigma^2$ and $K^{-1}$. The number of iterations is linear in $\sigma$ and $K^{-1/2}$. Thus, the learning performance is enhanced in terms of smaller sample complexity and faster convergence if the neural network is slightly over-parameterized.

\textbf{(T2) Edge sparsification and importance sampling improve the learning performance.} The sample complexity is a  quadratic function of $r$, indicating that edge sparsification reduces the sample complexity. \Revise{The intuition is that edge sparsification reduces the level of aggregation of class-relevant with class-irrelevant features, making it easier to learn class-relevant patterns for the considered data model, which improves the learning performance.} The sample complexity and the number of iterations are quadratic and linear in $\alpha^{-1}$, respectively. As a larger $\alpha$ means the class-relevant features are sampled with a higher probability, this result is consistent with the intuition that a successful importance sampling strategy helps to learn the class-relevant features faster with fewer samples.    

\textbf{(T3) Magnitude-based
model pruning improves the learning performance.} Both the sample complexity and the computational time are   linear in $(1-\beta)^2$, indicating that if more neurons with small magnitude are pruned, the sample complexity and the computational time are reduced. The intuition is that neurons that accurately learn class-relevant features tend to have a larger magnitude than other neurons', and removing other neurons makes learning more efficient.

\textbf{(T4) Edge  and model sparsification is a win-win strategy in GNN learning.}
Our theorem provides a theoretical validation for the success of joint edge-model sparsification.  The sample complexity and the number of iterations are quadratic and linear in $\frac{1-\beta}{\alpha}$, respectively, indicating that  both techniques can be applied together to effectively enhance learning performance. 

\subsection{Formal Theoretical Results}\label{sec: formal_theoretical}

\textbf{Data model.} 
Let 
$\mathcal{P}=\{\bfp_i\}_{i=1}^L$ ($\forall L \leq d$) denote an arbitrary   set of orthogonal vectors\footnote{The orthogonality constraint simplifies the analysis and has been employed in \citep{BG21}. We relaxed this constraint in the experiments on the synthetic data in Section \ref{sec: numerical_experiment}.}  in $\mathbb{R}^d$. Let $\bfp_+:=\bfp_1$ and $\bfp_-:=\bfp_2$ be the positive-class and negative-class pattern, respectively, which bear the causal relation with the labels. The remaining  vectors in $\mathcal{P}$ are class irrelevant patterns. The node features  $\bfx_v$ of every node $v$ is a noisy version of one of these patterns, i.e.,
$\bfx_v = \bfp_v + \bfz_v,  \textrm{where } \bfp_v \in \mathcal{P}$, 
 and $\bfz_v$ is an arbitrary noise at node $v$ with $\|\bfz\|_{2}\le \sigma$ for some $\sigma$.

$y_v$ is $+1$ (or $-1$) if node $v$ or any of its neighbors contains $p_{+}$ (or $p_{-}$). Specifically, divide $\mathcal{V}$ into 
four disjoint sets  $\mathcal{V}_+$, $\mathcal{V}_-$, $\mathcal{V}_{N+}$ and $\mathcal{V}_{N-}$  based on whether the node feature is relevant or not ($N$ in the subscript) and the label,  i.e., $\bfp_v=\bfp_+,~\forall v \in \mathcal{V}_+$;
$\bfp_v=\bfp_-,~\forall v \in \mathcal{V}_-$;  and 
$\bfp_v\in \{\bfp_3,...,\bfp_L\},~\forall v \in \mathcal{V}_{N+}\cup\mathcal{V}_{N-}$.
Then, we have $y_v=1, \forall v \in \mathcal{V}_+ \cup \mathcal{V}_{N+}$, and  $y_v=-1, \forall v \in \mathcal{V}_- \cup \mathcal{V}_{N-}$.

\begin{wrapfigure}{r}{0.4\textwidth}
\vspace{-0.05in}
    \centering
    \includegraphics[width=0.9\linewidth]{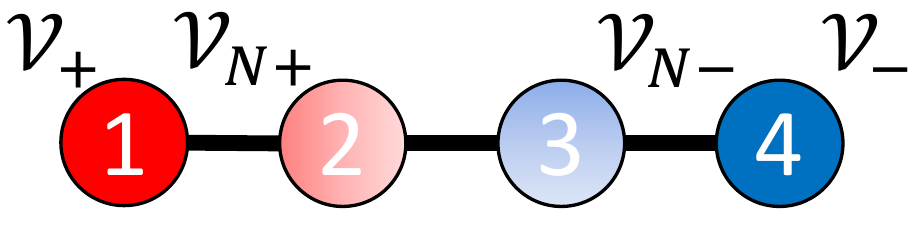}
    \caption{\small Toy example of the data model. Node 1 and 2 have label $+1$. Nodes 3 and 4 are labeled as $-1$. Nodes 1 and 4 have class-relevant features. Nodes 2 and 3 have class-irrelevant features. $\mathcal{V}_{+}=\{1\}$, $\mathcal{V}_{N+}=\{2\}$, $\mathcal{V}_{N-}=\{3\}$, $\mathcal{V}_{-}=\{4\}$}
    \label{fig: data_distribution}
    \vspace{-0.25in}
\end{wrapfigure}
We assume 

(\textbf{A1}) Every $v$ in $\mathcal{V}_{N+}$ (or $\mathcal{V}_{N-}$) is connected to at least one node in $\mathcal{V}_{+}$ (or $\mathcal{V}_{-}$). 
There is no edge between $\mathcal{V}_{+}$ and $\mathcal{V}_{N-}$. There is no edge between $\mathcal{V}_{-}$ and $\mathcal{V}_{N+}$. 

(\textbf{A2}) The positive and negative labels in $\mathcal{D}$ are balanced, i.e., $\big| |\mathcal{D}\cap(\mathcal{V}_{+}\cup\mathcal{V}_{N+})|-|\mathcal{D}\cap(\mathcal{V}_{-}\cup\mathcal{V}_{N-})| \big|=O(\sqrt{|\mathcal{D}|})$.

(\textbf{A1}) indicates that connected nodes in the graph tend to have the same labels and eliminates the case that node $v$ is connected to both $p_{+}$ and $p_{-}$ to simplify the analysis. A numerical justification of such an assumption in Cora dataset can be found in Appendix \ref{app: data_model}. (\textbf{A2}) can be relaxed to the case that the observed labels are unbalanced. One only needs to up-weight the minority class  in the ERF in \eqref{eqn: ERF} accordingly, which is a common trick in imbalance GNN learning \citep{CLZLZS21}, and our analysis holds with minor modification.  
The data model of orthogonal patterns is introduced in \citep{BG21} to analyze the advantage of CNNs over fully connected neural networks. It simplifies the analysis by eliminating the interaction of class-relevant and   class-irrelevant patterns.
Here we generalize to the case that the node features contain additional noise and are no longer orthogonal. 

To analyze the impact of importance sampling quantitatively, let $\alpha$ denote a lower bound of the probability that the sampled neighbors of $v$ contain at least one node in $\mathcal{V}_{+}$ or $\mathcal{V}_{-}$ for any node $v$ \Revise{(see Table \ref{table:problem_formulation})}. Clearly, $\alpha=r/R$ is a lower bound for uniform sampling \footnote{\Revise{The lower bounds of $\alpha$ for some sampling strategy are provided in Appendix \ref{app:alpha}.}}.  A larger $\alpha$ indicates that the sampling strategy indeed selects nodes with class-relevant features more frequently. 

\Revise{Theorem \ref{Thm:major_thm} concludes the sample complexity (\textbf{C1}) and convergence rate (\textbf{C2}) of Algorithm \ref{Alg} in learning graph-structured data via graph sparsification.
Specifically, the returned model achieves zero generalization error (from \eqref{eqn: zero_generalization})  with enough samples (\textbf{C1}) after enough number of iterations (\textbf{C2}).}

\begin{theorem}\label{Thm:major_thm}
Let the step size $c_\eta$ be some positive constant and the pruning rate $\beta \in [0 , 1-1/L)$.
Given the bounded noise such that $\sigma < {1}/{L}$ and sufficient large model such that $K>L^2\cdot\log q$ for some constant $q>0$. 
Then, with probability at least $1-q^{-10}$, when
\begin{enumerate}
    \item[(\textbf{C1})] the number of labeled nodes  satisfies 
    \begin{equation}\label{eqn: number_of_samples}
    |\mathcal{D}| = \Omega\big({ (1+L^2\sigma^2+K^{-1})\cdot \alpha^{-2}}\cdot (1+r^2)\cdot  (1-\beta)^2 \cdot  L^2\cdot \log q\big),
\end{equation}
    \item[(\textbf{C2})] 
    the number of iterations in the re-training stage satisfies
    \begin{equation}
    \label{eqn: number_of_iteration}
        T = \Omega\big(c_\eta^{-1}(1+|\mathcal{D}|^{-1/2})\cdot(1+L\sigma+K^{-1/2}) \cdot (1-\beta)\cdot\alpha^{-1} \cdot L \big),
    \end{equation}
\end{enumerate}
the model returned by Algorithm \ref{Alg} achieves zero generalization error, i.e.,
\begin{equation}\label{eqn: zero_generalization}
    I\big(g(\bfW^{(T)},\bfU^{(T)})\big) = 0.
\end{equation}
\end{theorem}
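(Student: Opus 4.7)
The plan is to prove the theorem via a feature-learning analysis that tracks how each neuron's alignment with the fixed pattern set $\mathcal{P}$ evolves under SGD, and then to use the induced decomposition of $\bfW$ to control both the empirical loss and the max-pooled generalization error. The first step is to decompose each neuron as $\bfw_k = \sum_{i=1}^{L} \langle \bfw_k, \bfp_i \rangle \bfp_i + \bfw_k^{\perp}$ using the orthogonality of $\mathcal{P}$, so that $\langle \bfw_k, \bfx_v \rangle = \langle \bfw_k, \bfp_v \rangle + \langle \bfw_k, \bfz_v \rangle$ with the noise contribution bounded by $\sigma\|\bfw_k\|_2$. At initialization $\bfw_k^{(0)} \sim \mathcal{N}(0,\delta^2\bfI_d)$, the correlations $\langle \bfw_k^{(0)}, \bfp_i \rangle$ are independent Gaussians of scale $\delta$, and by anti-concentration roughly a $1/(2L)$ fraction of neurons with each sign of $b_k$ has its dominant correlation on $\bfp_+$ (resp.\ $\bfp_-$). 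Since $K \gtrsim L^2 \log q$, a Chernoff bound guarantees $\Theta(K/L)$ ``lucky'' neurons per class with high probability, a fact that will be reused at the rewinding step.

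The second step analyzes the pre-training phase. Because the aggregator is $\max_{n}\phi(\langle\bfw_k,\bfx_n\rangle)$ and on any neighborhood (\textbf{A1}) forces the class-relevant pattern present in $\mathcal{N}(v)$ to match the label, one can argue that for lucky neurons the argmax of the max-pool coincides with a class-relevant node with probability at least $\alpha$ over the edge sampling, while for other neurons the argmax falls on class-irrelevant patterns. The SGD update then pushes $\langle \bfw_k, \bfp_{\pm}\rangle$ upward at rate proportional to $\alpha\cdot c_\eta\cdot b_k y_v$ for lucky neurons, and pushes $\langle\bfw_k,\bfp_i\rangle$ for $i\geq 3$ only within a zero-mean random-walk shell whose variance is controlled by $1/|\mathcal{D}|$. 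After $T' = \|\bfX\|_\infty/c_\eta$ iterations the separation in norms becomes detectable: lucky neurons have $\|\bfw_k^{(T')}\|_2 = \Theta(1)$, while other neurons remain near $O(\delta + \sigma)$ scale. Magnitude pruning at rate $\beta < 1-1/L$ therefore retains all lucky neurons and removes only neurons with negligible projection onto $\bfp_{\pm}$; a probabilistic counting argument using the $\Theta(K/L)$ lucky-neuron lower bound closes this claim.

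The third step handles re-training on $\bfM\odot\bfW$ from the rewound initialization $\bfM\odot\bfW^{(0)}$. With only lucky neurons surviving, the loss landscape restricted to the retained coordinates becomes effectively one-pattern-per-sign: one can show by an induction on $t$ that $\langle \bfw_k^{(t)},\bfp_+\rangle$ grows at rate at least $\alpha c_\eta/L$ for positive-sign lucky neurons, while cross-pattern correlations $\langle \bfw_k^{(t)},\bfp_i\rangle$ for $i\neq v_k$ stay $O(\sigma + \delta + |\mathcal{D}|^{-1/2})$ via a martingale concentration over the SGD mini-batches. Consequently, after $T = \Theta(c_\eta^{-1}(1-\beta)\alpha^{-1}L)$ iterations the signed margin $y_v g(\bfW^{(T)},\bfb^{(0)};\bfX_{\mathcal{N}(v)})$ exceeds $1$ on every training node, yielding $\hat{f}_\mathcal{D}=0$, and the gap between training and population margin is bounded by a Rademacher-style concentration term of order $(1+L\sigma+K^{-1/2})(1+r)(1-\beta)\alpha^{-1}L/\sqrt{|\mathcal{D}|}$, which explains the stated sample complexity.

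The main obstacle will be controlling the max-pool aggregator together with the edge sampling: the argmax inside the pooling is a highly nonsmooth function of $\bfw_k$, so the per-iteration gradient is not a smooth function of the weights and standard descent lemmas do not apply directly. My plan is to handle this by a case split based on whether the current sampled neighborhood $\mathcal{N}_s^{(t)}(v)$ contains a class-relevant node (an event of probability $\geq \alpha$), combining a deterministic ``useful'' update on this event with an unbiased-noise bound on the complementary event, and then accumulating these across $T$ iterations via a Freedman-type martingale inequality. A secondary obstacle is ensuring that the pre-training stage does not accidentally promote neurons aligned with irrelevant patterns to large magnitude when $\sigma L$ is nontrivial; this is where the assumption $\sigma < 1/L$ is used to guarantee that the signal-to-noise ratio on $\bfp_{\pm}$ remains dominant, and a careful bookkeeping of $O(L\sigma)$ perturbation terms across (\textbf{C1}) and (\textbf{C2}) closes the argument.
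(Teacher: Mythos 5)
Your first two stages track the paper's actual argument closely: the paper likewise identifies a $\Theta(K/L)$ fraction of ``lucky'' neurons at Gaussian initialization via a Chernoff bound (its Lemma 1/Proposition 1, with the $1-L\sigma/\pi$ correction for noise), shows their projections on $\bfp_\pm$ grow at rate $c_\eta\alpha$ per step while all other projections drift at rate $O(c_\eta(1+\sigma)\sqrt{(1+r^2)\log q/|\mathcal{D}|})$ (Propositions 2 and Lemmas 2--5), and concludes that magnitude pruning at rate $\beta<1-1/L$ retains every lucky neuron (Proposition 3/Lemma 7). One under-specified point in your step 2 is the claim that unlucky-neuron updates form a ``zero-mean random walk'': the argmax inside the max-pool couples the selected class-irrelevant pattern to the label through the graph and the edge sampling, so $y_v$ and $\mathcal{M}^{(t)}(v;\bfw_k)$ are not independent; the paper has to construct auxiliary reference data (the mappings $\mathcal{H}$ and $\mathcal{F}$ in the proof of Lemma 4) before it can apply a Chernoff bound for partly dependent variables. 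That is a fixable omission, not a wrong turn.

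The genuine gap is your final step. You conclude by establishing zero \emph{training} margin loss and then invoking a Rademacher-style bound of order $1/\sqrt{|\mathcal{D}|}$ on the train-to-population gap. This cannot deliver the theorem as stated, for two reasons. First, the claimed conclusion is $I(g(\bfW^{(T)},\bfU^{(T)}))=0$ exactly, i.e., $y_v\, g(\cdot;\bfX_{\mathcal{N}(v)})\ge 1$ for \emph{every} node $v\in\mathcal{V}$; a uniform-convergence bound controls only the average hinge loss over unseen nodes, so it yields $I(g)\le O(1/\sqrt{|\mathcal{D}|})$, never exactly zero, and gives no per-node margin guarantee. Second, the nodes are not i.i.d.\ samples --- labels are determined by neighbors' features on a fixed graph, and the VC dimension of this hypothesis class over this data family is exponential in $L$ (the paper proves this in its appendix), so a complexity-based gap of the polynomial order you write down is not available off the shelf; the paper explicitly notes that the margin-bound route of Brutzkus--Globerson breaks under edge sampling precisely because sampled neighborhoods may lose the class-relevant pattern. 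The paper instead closes the argument \emph{directly}: since every retained positive-sign neuron set contains all of $\mathcal{K}_+$, the learned weights satisfy $\langle\bfw_i^{(T)},\bfp_+\rangle\gtrsim c_\eta\alpha T$ and all cross-pattern projections are $O(c_\eta(1+\sigma)\sqrt{(1+r^2)/|\mathcal{D}|}\,T)$, and assumption (A1) guarantees that the full (unsampled) neighborhood of any node with $y_v=+1$ contains $\bfp_+$ and never $\bfp_-$; plugging these into the max-pool output bounds the margin below by $[\alpha|\mathcal{K}_+|/K-(1-\beta)(1+\sigma)\sqrt{(1+r^2)/|\mathcal{D}|}]c_\eta T>1$ for every node, train or test alike. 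The sample-complexity condition (C1) enters only to make the drift term smaller than the signal term in this single inequality, not through a generalization-gap bound. To repair your proposal, replace the Rademacher step with this deterministic per-node margin computation on the whole of $\mathcal{V}$.
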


\subsection{Technical contributions} 

\ICLRRevise{Although our data model is inspired by the feature learning framework of analyzing CNNs \citep{BG21},  the technical framework in analyzing the learning dynamics differs from existing ones in the following aspects.}

\ICLRRevise{First, our work provides the first polynomial-order sample complexity bound in (6) that quantitatively characterizes the parameters' dependence with zero generalization error. In \citep{BG21}, the generalization bound is obtained by updating the weights in the second layer (linear layer) while the weights in the non-linear layer are fixed. However, the high expressivity of neural networks mainly comes from the weights in non-linear layers. Updating weights in the hidden layer can achieve a smaller generalization error than updating the output layer. Therefore, this paper obtains a polynomial-order sample complexity bound by characterizing the weights update in the non-linear layer (hidden layer), which cannot be derived from \citep{BG21}. In addition, updating weights in the hidden layer is a non-convex problem, which is more challenging than the case of updating weights in the output layer as a convex problem.} 

\ICLRRevise{Second, the theoretical framework in this paper can characterize the magnitude-based pruning method while the approach in \citep{BG21} cannot. Specifically, our analysis provides a tighter bound such that the lower bound of “lucky neurons” can be much larger than the upper bound of “unlucky neurons” (see Lemmas 2-5), which is the theoretical foundation in characterizing the benefits of model pruning but not available in \citep{BG21} (see Lemmas 5.3 \& 5.5). On the one hand, \citep{BG21} only provides a uniform bound for “unlucky neurons” in all directions, but  Lemma 4 in this paper provides specific bounds in different directions. On the other hand, this paper considers the influence of the sample amount, and we need to characterize the gradient offsets between positive and negative classes. The problem is challenging due to the existence of class-irrelevant patterns and edge sampling in breaking the dependence between the labels and pattern distributions, which leads to unexpected distribution shifts. We characterize groups of special data as the reference such that they maintain a fixed dependence on labels and have a controllable distribution shift to the sampled data.}

\ICLRRevise{Third, the theoretical framework in this paper can characterize the edge sampling while the approach in \citep{BG21} cannot. \citep{BG21} requires the data samples containing class-relevant patterns in training samples via margin generalization bound \citep{SB14,BM02}. However, with data sampling, the sampled data may no longer contain class-relevant patterns. Therefore, updating on the second layer is not robust, but our theoretical results show that updating in the hidden layer is robust to outliers caused by egde sampling.}

\subsection{The proof sketch}\label{sec: proof_sketch}

\ICLRRevise{
Before presenting the formal roadmap of the proof, we provide a high-level illustration by borrowing the concept of ``lucky neuron'', where such a node has good initial weights, from \citep{BG21}. We emphasize that only the concept is borrowed, and all the properties of the ``lucky neuron'', e.g., \eqref{eqn:lucky} to \eqref{eqn: proof}, are developed independently with excluded theoretical findings from other papers. In this paper, we justify that the magnitude of the "lucky neurons" grows at a rate of sampling ratio of class-relevant features, while the magnitude of the "unlucky neurons" is upper bounded by the inverse of the size of training data (see proposition 2). With large enough training data, the "lucky neurons" have large magnitudes  and dominate the output value. By pruning neurons with small magnitudes, we can reserve the "lucky neurons" and potentially remove "unlucky neurons" (see proposition 3). In addition, we prove that the primary direction of "lucky neurons" is consistence with the class-relevant patterns, and the ratio of "lucky neurons" is sufficiently large (see proposition 1). Therefore, the output is determined by the primary direction of the ``lucky neuron'', which is the corresponding class-relevant pattern Specifically, we will prove that, for every node $v$ with $y_v=1$, the prediction by the learned weights $\bfW^{(T)}$ is accurate, i.e., $g(\bfM\odot\bfW^{(T)},\bfb^{(0)};\bfX_{\mathcal{N}(v)})>1$. The arguments for nodes with negative labels are the same. Then, the zero test error is achieved from the defined generalization error in (\ref{eqn: generalization_error}).}

Divide the neurons into two subsets $\mathcal{B}_+ = \{k\mid b_k^{(0)} =+1\}$ and $\mathcal{B}_-=\{k\mid b_k^{(0)} =-1\}$. 
We first show that there exist some neurons $i$ in $\mathcal{B}_+$ with weights $\bfw_i^{(t)}$ that are close to $\bfp_+$ for all iterations $t\geq 0$. These neurons, referred to as ``lucky neurons,''  play a dominating role in classifying $v$, and the fraction of these neurons is at least close to $1/L$. Formally,

\begin{pro}\label{pro: lucky}
Let $\mathcal{K}_+\subseteq \mathcal{B}_+$
denote the set of ``lucky neurons'' that for any $i$ in $\mathcal{K}_+$, 
\begin{equation}
    \min_{\|\bfz\|_2\le \sigma} \vm{\bfw_i^{(t)}}{\bfp_++\bfz}\ge \max_{\bfp\in \mathcal{P}/\bfp_+,\|\bfz\|_2\le \sigma} \vm{\bfw_i^{(t)}}{\bfp+\bfz} \quad \text{for all}\quad t.
    \end{equation}
Then it holds that 
\vspace{-0.04in}
\begin{equation}\label{eqn:lucky}
    |\mathcal{K}_+|/K \geq (1-K^{-1/2} -L\sigma)/L.
\end{equation}
\end{pro}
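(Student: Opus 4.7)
The plan is to establish the lower bound in two stages: first show that at initialization a sufficient fraction of neurons in $\mathcal{B}_+$ are ``lucky'' (with a quantitative margin), and then show that these neurons remain lucky throughout training, so they are all counted in $\mathcal{K}_+$. For the initialization stage, since the patterns $\{\bfp_j\}_{j=1}^L$ are orthogonal and $\bfw_i^{(0)} \sim \mathcal{N}(0, \delta^2 \bfI_d)$, the random variables $Z_j := \vm{\bfw_i^{(0)}}{\bfp_j}$ for $j=1,\dots,L$ are i.i.d.\ Gaussians. The lucky condition at $t=0$ rewrites, after evaluating the min and max over $\|\bfz\|_2 \le \sigma$, as
\[
 Z_1 - \max_{j \ne 1} Z_j \;\ge\; 2\sigma\,\|\bfw_i^{(0)}\|_2.
\]
Standard Gaussian concentration yields $\|\bfw_i^{(0)}\|_2 = \Theta(\delta\sqrt{d})$ with high probability, and by symmetry $\Pr[Z_1 = \max_j Z_j] = 1/L$. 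An anti-concentration estimate for the gap between the top two order statistics of $L$ i.i.d.\ Gaussians shows that imposing the margin costs a probability factor of only $O(L\sigma)$, so each neuron in $\mathcal{B}_+$ is lucky at $t=0$ with probability at least $(1 - L\sigma)/L$. A Hoeffding bound over the $|\mathcal{B}_+| = \Theta(K)$ independent neurons, together with the hypothesis $K \gtrsim L^2 \log q$, converts this probability into a fraction and absorbs the $K^{-1/2}$ slack.

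For the preservation stage, I would analyze one SGD step using the structure of the max-pooling aggregator in \eqref{eqn:pool_agg}. For a neuron $i \in \mathcal{B}_+$ that is lucky at step $t$, the argmax inside the pooling operator selects, for each sampled neighborhood containing a $\bfp_+$-bearing node, exactly that node; consequently the gradient $\nabla_{\bfw_i} \hat f^{(t)}_{\mathcal{D}}$ has a dominant component along $\bfp_+$ and only an $O(\sigma)$ component in the orthogonal directions. Projecting the update onto each pattern, one finds that $Z_1(t+1)$ grows by at least $\Omega(c_\eta \alpha / L)$ in expectation, while $\max_{j \ne 1} Z_j(t+1)$ changes by at most $O(c_\eta \sigma)$ and $\|\bfw_i^{(t+1)}\|_2$ grows commensurately. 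An induction on $t$ then preserves the margin inequality and, via a union bound over $t$ absorbed in the high-probability factor $1-q^{-10}$, ensures every neuron lucky at $t=0$ remains lucky for all $t$.

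The main obstacle will be this preservation step, because the margin requirement $2\sigma\|\bfw_i^{(t)}\|_2$ is a moving target that grows as the weights evolve. The key technical ingredients are (i) the orthogonality of $\mathcal{P}$, which decouples the $\bfp_+$-direction update from the other pattern directions to leading order; (ii) the noise bound $\sigma < 1/L$, which guarantees the max-pooling argmax is not flipped by the perturbation $\bfz$; and (iii) a careful bound on the leakage of the gradient into the subspace orthogonal to $\mathrm{span}(\mathcal{P})$, controlled by $\sigma$ and the sampling ratio $\alpha$. Combining these three yields the one-step invariant, completing the induction and transferring the initialization-based lower bound to $|\mathcal{K}_+|/K$.
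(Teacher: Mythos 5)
Your two-stage structure (count the lucky neurons at initialization, then show they stay lucky) is exactly the paper's structure, and your initialization stage matches the paper's argument: orthogonality makes the projections $Z_j=\vm{\bfw_i^{(0)}}{\bfp_j}$ independent, symmetry gives probability $1/L$ of $Z_1$ being the maximum, the $\sigma$-margin costs an extra factor of order $1-O(L\sigma)$ (the paper computes this via the angles, obtaining $\frac{1}{L}(1-\frac{L\sigma}{\pi})$), and Hoeffding over the $K$ neurons with $K\gtrsim L^2\log q$ yields the $K^{-1/2}$ slack. That part is fine.

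The gap is in your preservation step. You claim that for a lucky neuron the per-iteration drift in the directions $\bfp\in\mathcal{P}/\bfp_+$ is $O(c_\eta\sigma)$, attributing the leakage entirely to the noise $\bfz$. This is not true: with edge sampling, a positive-labeled node whose sampled neighborhood happens to miss the $\bfp_+$-bearing neighbor (probability up to $1-\alpha$) contributes a full $+\bfp_j$ term for some class-irrelevant $\bfp_j$ to the gradient of a $\mathcal{B}_+$ neuron, i.e., an $O(c_\eta)$ per-sample contribution independent of $\sigma$. The reason the aggregate drift is small is statistical cancellation: class-irrelevant patterns are identically distributed over $\mathcal{D}_+$ and $\mathcal{D}_-$, and the negative-labeled samples enter with opposite sign, so the net drift concentrates at order $c_\eta(1+\sigma)\sqrt{(1+r^2)\log q/|\mathcal{D}|}$. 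Establishing this requires a Chernoff-type bound for \emph{partly dependent} variables (neighboring nodes share labels, so the summands are not independent; this is the role of Lemma~\ref{Lemma: partial_independent_var}) and, crucially, the sample-complexity condition $|\mathcal{D}|\gtrsim \alpha^{-2}(1+r^2)\log q$ so that this drift is dominated by the $c_\eta(\alpha-\sigma\sqrt{(1+r^2)\log q/|\mathcal{D}|})$ per-step growth along $\bfp_+$ (Lemmas~\ref{Lemma: update_lucky_neuron} and \ref{Lemma: number_of_lucky_filter}). Your plan never invokes $|\mathcal{D}|$ in the preservation argument, and without it the induction does not close: a deterministic $O(c_\eta\sigma)$ bound on the orthogonal drift is simply unavailable, and the moving margin $2\sigma\|\bfw_i^{(t)}\|_2$ that you correctly identify as the obstacle cannot be controlled.
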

We next show in Proposition \ref{pro: update}  that when $|\mathcal{D}|$ is large enough, the projection of the weight   $\bfw_i^{(t)}$ of a lucky neuron $i$ on $\bfp_+$  grows   at a rate  of $c_\eta\alpha$. Then importance sampling with a  large $\alpha$ corresponds to a high rate.  In contrast, the neurons in $\mathcal{B}_-$ increase much slower in all directions except for $\bfp_-$. 
\begin{pro}\label{pro: update}
~
\vspace{-4mm}
    \begin{equation}\label{eqn: pro2_1}
    \begin{gathered}
        \vm{\bfw_i^{(t)}}{\bfp_+} \ge c_\eta \big( \alpha - \sigma \sqrt{(1+r^2)/|\mathcal{D}|} \big)t, \ \  \forall i\in \mathcal{K}_+, \forall t \\
        |\vm{\bfw_j^{(t)}}{\bfp}| \le c_\eta (1+\sigma) \sqrt{(1+r^2)/|\mathcal{D}|}\cdot t, \ \ \forall j\in\mathcal{B}_-, 
        \forall \bfp \in \mathcal{P}/\bfp_-,  \forall t.
    \end{gathered}
    \end{equation}
\end{pro}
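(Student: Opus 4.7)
The plan is to establish both bounds by induction on $t$, carrying as part of the hypothesis not only the two inequalities but also the lucky-neuron condition of Proposition~\ref{pro: lucky} for every $i\in\mathcal{K}_+$. Writing out a single SGD step yields
\[
\bfw_k^{(t+1)} - \bfw_k^{(t)} \;=\; \frac{c_\eta}{|\mathcal{D}^{(t)}|K}\sum_{v\in\mathcal{D}^{(t)}} y_v\, b_k^{(0)}\,\mathds{1}\{\phi\text{ active at }v\}\,\bfx_{n^*(v,k)},
\]
where $n^*(v,k)\in\mathcal{N}_s^{(t)}(v)$ is the argmax selected by the max-pool in \eqref{eqn:pool_agg}. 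I would project this update onto the pattern of interest and then telescope over $t$.

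For $i\in\mathcal{K}_+$ (so $b_i^{(0)}=+1$), projecting onto $\bfp_+$, I split $\mathcal{D}^{(t)}$ by the sign of $y_v$. When $y_v=+1$, assumption (\textbf{A1}) forces $\mathcal{N}(v)$ to contain a $\bfp_+$-node; the edge-sampling event that this node also lands in $\mathcal{N}_s^{(t)}(v)$ has probability at least $\alpha$, and on this event the lucky condition forces $\bfp_{n^*(v,i)}=\bfp_+$, so the contribution to $\vm{\bfw_i^{(t+1)}-\bfw_i^{(t)}}{\bfp_+}$ is at least $\vm{\bfp_+}{\bfp_+}=1$ per positive example. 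When $y_v=-1$, assumption (\textbf{A1}) rules out $\bfp_+$ in $\mathcal{N}(v)$, so orthogonality of $\mathcal{P}$ zeros the deterministic part. The two remaining error sources are (i) the feature noise $\vm{\bfz_{n^*(v,i)}}{\bfp_+}$, bounded in magnitude by $\sigma$, and (ii) the deviation of the empirical sampling rate from its mean $\alpha$. A Hoeffding/Bernstein bound on the $|\mathcal{D}|$ independent mini-batch draws, together with an $r$-dependent variance factor coming from the $r$-fold max inside each neighborhood, bounds the total error by $\sigma\sqrt{(1+r^2)/|\mathcal{D}|}$ per step. Telescoping and absorbing the $1/K$ and label-balance factors into $c_\eta$ yield the first inequality.

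For $j\in\mathcal{B}_-$ (so $b_j^{(0)}=-1$) and an arbitrary $\bfp\in\mathcal{P}\setminus\{\bfp_-\}$, the same decomposition applies but no signal survives: by (\textbf{A1}) combined with the label-balance assumption (\textbf{A2}), the deterministic projections onto $\bfp$ cancel across $\mathcal{D}\cap(\mathcal{V}_+\cup\mathcal{V}_{N+})$ and $\mathcal{D}\cap(\mathcal{V}_-\cup\mathcal{V}_{N-})$ regardless of whether $\bfp=\bfp_+$ or $\bfp$ is one of the $L-2$ class-irrelevant patterns. What remains is a pure concentration-plus-noise term of order $c_\eta(1+\sigma)\sqrt{(1+r^2)/|\mathcal{D}|}$ per iteration, which telescopes to the second bound.

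The central obstacle is that $n^*(v,i)$ depends on the current weights, so the gradient is not a clean function of the data and one cannot concentrate term-by-term before pinning down the argmax. The fix is a joint induction: along with the two bounds, I maintain the strict margin
\[
\min_{\|\bfz\|_2\le\sigma}\vm{\bfw_i^{(t)}}{\bfp_++\bfz} \;-\; \max_{\bfp\in\mathcal{P}/\bfp_+,\|\bfz\|_2\le\sigma}\vm{\bfw_i^{(t)}}{\bfp+\bfz} \;>\;0
\]
for every $i\in\mathcal{K}_+$. Using $\sigma<1/L$ and $K\gtrsim L^2\log q$, the growth on $\bfp_+$ strictly dominates the growth on every other pattern, so the margin is preserved at step $t+1$ and the argmax identification used in the next step remains valid. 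A union bound over the iterations and the concentration events then delivers the failure probability $q^{-10}$ claimed in Theorem~\ref{Thm:major_thm}.
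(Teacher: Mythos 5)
Your skeleton---per-step projection bounds plus telescoping, with a joint induction that preserves the lucky-neuron margin---matches the paper's (the per-step bounds are its Lemmas 2--5, the margin preservation its Lemma 6), and your signal term of at least $\alpha$ per positive example for lucky neurons is right. But there are two genuine gaps in the error analysis. First, the summands you want to concentrate are \emph{not} independent: the labels $y_v$ and the sampled neighborhoods of distinct training nodes are correlated through shared neighbors, so a plain Hoeffding bound ``on the $|\mathcal{D}|$ independent mini-batch draws'' is not available. The paper instead invokes a Chernoff-type bound for partly dependent variables (its Lemma 8, after Janson), in which each $y_v$ is dependent on at most $1+r^2$ others; that dependency degree is where the $(1+r^2)$ factor comes from, not from ``an $r$-dependent variance factor coming from the $r$-fold max.'' As written, your concentration step is invalid and would not produce the stated $r$-dependence.

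Second, for $j\in\mathcal{B}_-$ and $\bfp\in\mathcal{P}/\bfp_-$ the asserted cancellation between $\mathcal{D}_+$ and $\mathcal{D}_-$ does not follow from (A1)--(A2) alone. The pattern selected by the max-pool depends on whether $\bfp_+$ (resp.\ $\bfp_-$) is present in $\mathcal{N}_s^{(t)}(v)$, which is determined by the label; hence $\mathbb{E}_{v\in\mathcal{D}_+}\vm{\mathcal{M}^{(t)}(v;\bfw_j)}{\bfp}$ and $\mathbb{E}_{v\in\mathcal{D}_-}\vm{\mathcal{M}^{(t)}(v;\bfw_j)}{\bfp}$ need not match even for class-irrelevant $\bfp$. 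Your margin induction pins down the argmax only for lucky neurons and says nothing about these. The paper closes this by constructing reference data---a map replacing $\bfp_+$ by $\bfp_-$ (its $\mathcal{H}$ in \eqref{eqn: construct1}) or zeroing out the class-relevant patterns (its $\mathcal{F}$ in \eqref{eqn: construct2}), together with tracking the set of negative nodes whose argmax is $\bfp_-$ until it empties---so that the selected pattern becomes independent of $y_v$, the expectation factorizes as $\mathbb{E}[y_v]\cdot\mathbb{E}[\vm{\cdot}{\bfp}]$, and only then does label balance deliver the $1/\sqrt{|\mathcal{D}|}$ decay. Some such decoupling device is needed; without it the second inequality is unproven.
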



  Proposition \ref{pro: pruning} shows that 
the weights magnitude of a ``lucky neuron'' in $\mathcal{K}_+$ is larger than that of a neuron in $\mathcal{B}_+/\mathcal{K}_+$. Combined with (\ref{eqn:lucky}),   ``lucky neurons'' will not be pruned by magnitude pruning, as long as $\beta<1-1/L$.   Let $\mathcal{K}_\beta$ denote the set of neurons after pruning with $|\mathcal{K}_\beta|=(1-\beta) K$.  
\begin{pro}\label{pro: pruning}
        There exists a small positive integer $C$ such that
        \vspace{-0.05in}
    \begin{equation}
    \vspace{-0.05in}
      \|\bfw_i^{(t)}\|_2>\|\bfw_j^{(t)}\|_2,  \ \ \forall i\in\mathcal{K}_+, \forall j\in\mathcal{B}_+/\mathcal{K}_+,   \forall t\geq C.  
    \end{equation}
    Moreover, $\mathcal{K}_\beta\cap \mathcal{K}_+ =\mathcal{K}_+$ for all $\beta \leq 1-1/L$.      \vspace{-1mm}
\end{pro}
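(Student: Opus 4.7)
The plan is to establish the weight-norm separation in two conceptual steps and then translate that separation into the surviving-set identity via a counting argument on $|\mathcal{K}_+|$. Specifically, I would first lower bound $\|\bfw_i^{(t)}\|_2$ for $i\in\mathcal{K}_+$ by exploiting the projection bound already supplied by Proposition~\ref{pro: update}, then derive a matching upper bound on $\|\bfw_j^{(t)}\|_2$ for $j\in\mathcal{B}_+/\mathcal{K}_+$ by an argument analogous to the one behind the second inequality of \eqref{eqn: pro2_1}, and finally pick $C$ so that the lower bound overtakes the upper bound.

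For the lower bound, since $\|\bfp_+\|_2=1$, Cauchy--Schwarz together with the first inequality in \eqref{eqn: pro2_1} gives
\begin{equation*}
\|\bfw_i^{(t)}\|_2 \;\ge\; \vm{\bfw_i^{(t)}}{\bfp_+} \;\ge\; c_\eta\bigl(\alpha-\sigma\sqrt{(1+r^2)/|\mathcal{D}|}\bigr)\,t,\qquad i\in\mathcal{K}_+,
\end{equation*}
which grows linearly with rate $\Theta(\alpha)$ under condition (C1). For the upper bound on neurons $j\in\mathcal{B}_+/\mathcal{K}_+$, I would mimic the proof behind the second inequality of \eqref{eqn: pro2_1} but tailored to these neurons. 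The key observation is that the primary activation direction of such a neuron at initialization is some class-irrelevant pattern $\bfp_l$ ($l\ge 3$); by assumption (A2) the positive- and negative-labeled samples in each mini-batch contribute gradient mass to the $\bfp_l$ component with opposite signs and equal expectations, so only a statistical fluctuation of order $c_\eta(1+\sigma)\sqrt{(1+r^2)/|\mathcal{D}|}$ per step survives. Decomposing $\bfw_j^{(t)}$ in the orthonormal basis formed by $\mathcal{P}$ and its orthogonal complement, and noting that the stochastic gradient lies in the span of the noisy inputs (whose orthogonal-to-$\mathcal{P}$ component has norm at most $\sigma$), one obtains
\begin{equation*}
\|\bfw_j^{(t)}\|_2 \;\le\; \|\bfw_j^{(0)}\|_2 + c_\eta\sqrt{L}\,(1+\sigma)\sqrt{(1+r^2)/|\mathcal{D}|}\,t.
\end{equation*}
Since $\|\bfw_j^{(0)}\|_2=O(\delta\sqrt{d})$ with high probability under the Gaussian initialization and the linear-in-$t$ rate on the right is $o(\alpha)$ by (C1), there exists a constant $C$ such that for every $t\ge C$ the lower bound strictly dominates the upper bound, yielding the first claim.

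For the second claim, the magnitude inequality just proved, together with the symmetric statement obtained by swapping $+\leftrightarrow-$, implies that the $\beta K$ pruned neurons must be drawn from $(\mathcal{B}_+/\mathcal{K}_+)\cup(\mathcal{B}_-/\mathcal{K}_-)$, i.e., from outside $\mathcal{K}_+\cup\mathcal{K}_-$. Standard Gaussian anti-concentration on the $K$ i.i.d.\ initializations shows that with probability $1-q^{-10}$ (absorbed into the high-probability event of Proposition~\ref{pro: lucky}) each of the $L$ admissible primary directions captures at most $(1+o(1))/L$ of the neurons in $\mathcal{B}_+$ (and separately in $\mathcal{B}_-$), so $|\mathcal{K}_+|+|\mathcal{K}_-|\le K/L$ to leading order. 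Hence whenever $\beta\le 1-1/L$, the $(1-\beta)K\ge K/L$ largest-magnitude neurons retained by pruning necessarily contain all of $\mathcal{K}_+$, i.e., $\mathcal{K}_\beta\cap \mathcal{K}_+=\mathcal{K}_+$.

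The main obstacle is deriving the upper bound on $\|\bfw_j^{(t)}\|_2$ for $j\in\mathcal{B}_+/\mathcal{K}_+$: unlike the $\mathcal{B}_-$ neurons covered by Proposition~\ref{pro: update}, these carry $b_j^{(0)}=+1$ and are exposed to positive-labeled nodes whose neighborhoods do contain $\bfp_+$, so one has to rule out the possibility that $\bfp_+$ signal leaks into $\bfw_j^{(t)}$. The way out is to combine assumption (A1), the definition of non-lucky neuron, and the max-pool aggregator \eqref{eqn:pool_agg}: because $\vm{\bfw_j^{(0)}}{\bfp_+}$ is not the largest projection, the max over a positive node's sampled neighborhood is attained at a class-irrelevant neighbor with high probability (as long as such a neighbor exists, which the sampling guarantees with the leftover $1-\alpha$ probability), so the $\bfp_+$ direction picks up no coherent gradient and the analysis collapses to the balanced-class calculation used for class-irrelevant patterns. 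Carrying out this cancellation carefully across all $L-2$ possible primary directions is the technical core that I would develop in detail.
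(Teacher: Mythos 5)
Your overall skeleton matches the paper's proof of this proposition (its Lemma 7 in the appendix): lower-bound $\|\bfw_i^{(t)}\|_2$ for a lucky neuron by its projection on $\bfp_+$, which grows at rate $c_\eta\alpha$ by Proposition 2; upper-bound $\|\bfw_j^{(t)}\|_2$ for a non-lucky neuron by summing the per-direction fluctuation bounds of order $c_\eta(1+\sigma)\sqrt{(1+r^2)/|\mathcal{D}|}\,t$ over the $L$ patterns (the paper gets a factor $L$ where you get $\sqrt{L}$; either suffices); conclude separation once $t\geq C$ under (C1); and obtain $\mathcal{K}_\beta\cap\mathcal{K}_+=\mathcal{K}_+$ by noting that $|\mathcal{K}_+|$ concentrates at roughly $K/L\le(1-\beta)K$ while the retained neurons are exactly the largest-magnitude ones. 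Your counting argument for the second claim is essentially the paper's.

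The one place you genuinely diverge is the step you yourself flag as the technical core: controlling $\vm{\bfw_j^{(t)}}{\bfp_+}$ for $j\in\mathcal{B}_+/\mathcal{K}_+$. You propose to show the gradient in the $\bfp_+$ direction is incoherent because the max-pool is won by a class-irrelevant neighbor, invoking the ``leftover $1-\alpha$ probability.'' This does not work as stated: $\alpha$ lower-bounds the probability that a class-relevant neighbor \emph{is} sampled, so nothing prevents both a class-relevant and class-irrelevant neighbor from being present, and for positive-labeled nodes the gradient contribution along $\bfp_+$ is nonnegative whenever the $\bfp_+$-node wins the max, so you would additionally have to show an initially-unlucky neuron never starts winning on $\bfp_+$ at later iterations --- a self-referential claim your sketch does not close. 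The paper avoids this entirely with a one-line observation: by the very definition of a non-lucky neuron at iteration $t$, its $\bfp_+$ projection is dominated (up to $O(\sigma\|\bfw_j\|)$) by $\max_{\bfp\in\mathcal{P}/\bfp_+}\vm{\bfw_j^{(t)}}{\bfp}$, which is already controlled by the fluctuation bounds; hence no gradient analysis in the $\bfp_+$ direction is needed at all. You should replace your cancellation argument with this domination step; otherwise your proof has a real gap at exactly the point you identified.
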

Therefore, with a sufficiently large number of samples, the magnitudes of lucky neurons increase much faster than those of other neurons (from proposition \ref{pro: update}).  Given a sufficiently large fraction of lucky neurons (from proposition \ref{pro: lucky}), the outputs of the learned model will be strictly positive. Moreover, with a proper pruning rate, the fraction of lucky neurons can be further improved (from proposition \ref{pro: pruning}), which leads to a reduced sample complexity and faster convergence rate.     

In the end, we consider the   case of no feature noise to illustrate the main computation
\vspace{-0.05in}
\begin{equation}\label{eqn: proof}
\vspace{-0.02in}
\footnotesize
\begin{split}
    &g(\bfM\odot\bfW^{(T)},\bfM\odot\bfb^{(0)};\bfX_{\mathcal{N}(v)})\\
    =&~\textstyle\frac{1}{K}\big(\sum_{i\in \mathcal{B}_+\cap \mathcal{K}_\beta}\max_{u\in\mathcal{N}(v)}\phi(\vm{\bfw_i^{(T)}}{\bfx_u})
    - \sum_{j\in \mathcal{B}_-\cap \mathcal{K}_\beta}\max_{u\in\mathcal{N}(v)}\phi(\vm{\bfw_j^{(T)}}{\bfx_u})\big)\\
    \ge&~\textstyle\frac{1}{K}\sum\limits_{i\in \mathcal{K}_+}\max\limits_{u\in\mathcal{N}(v)}\phi(\vm{\bfw_i^{(T)}}{\bfx_u} )- \frac{1}{K}\sum\limits_{j\in\mathcal{B}_- \cap \mathcal{K}_\beta } \max\limits_{\bfp\in \mathcal{P}/\bfp_-} |\vm{\bfw_j^{(T)}}{\bfp}|\\
    \ge&~ \big[\alpha|\mathcal{K}_+|/K - (1-\beta)(1+\sigma) \sqrt{(1+r^2)/|\mathcal{D}|}\big]  c_\eta T \quad >~1,
\end{split}
\end{equation}
where the first inequality follows from the fact that  $\mathcal{K}_\beta\cap \mathcal{K}_+ =\mathcal{K}_+$, $\phi$ is the nonnegative ReLU function, and $\mathcal{N}(v)$ does not contain $\bfp_-$ for a node $v$ with $y_v=+1$. The second   inequality follows from Proposition \ref{pro: update}. 
The last inequality follows from  (\ref{eqn:lucky}), and conclusions 
(C1) \& (C2). That completes the proof. 
Please see the supplementary material for details. 

\section{Numerical Experiments}\label{sec: 
numerical_experiment}

\subsection{Synthetic Data Experiments}

We generate a graph with  $10000$ nodes, and the node degree is $30$. The one-hot vectors $\bfe_1$ and $\bfe_2$ are selected as  $\bfp_{+}$ and $\bfp_{-}$, respectively. The class-irrelevant patterns are randomly selected from the null space of $\bfp_{+}$ and $\bfp_{-}$. That is relaxed from the orthogonality constraint in the data model. $\|\bfp\|_2$ is normalized to $1$ for all patterns.  The noise  $\bfz_v$ belongs to Gaussian $\mathcal{N}(0, \sigma^2)$.  The node features and labels satisfy (A1) and (A2), and details of the construction can be found in Appendix \ref{sec:app_experiment}.  
 The test error is the percentage of incorrect predictions of unknown labels. The learning process is considered as a \textit{success} if the returned model achieves zero test error.

\textbf{Sample Complexity.}
We first verify our sample complexity bound in \eqref{eqn: number_of_samples}. Every result is    averaged over $100$ independent trials.  A white block indicates that all the trials are successful, while  a black block means all failures. 
In these experiments, we vary one parameter and fix all others. 
In Figure \ref{fig: sample_s2}, $r=15$ and we vary the importance sampling probability $\alpha$. The sample complexity is linear in $\alpha^{-2}$.  
Figure \ref{fig: pruning_test} indicates that the sample complexity is almost linear in $(1-\beta)^2$ \ICLRRevise{up to a certain upper bound}, where $\beta$ is the pruning rate. 
All these are consistent with our theoretical predictions in \eqref{eqn: number_of_samples}. 

\vspace{-2mm}
\begin{figure}[h]
\vspace{-2mm}
\begin{minipage}{0.32\linewidth}
    \centering
    \includegraphics[width=0.80\linewidth]{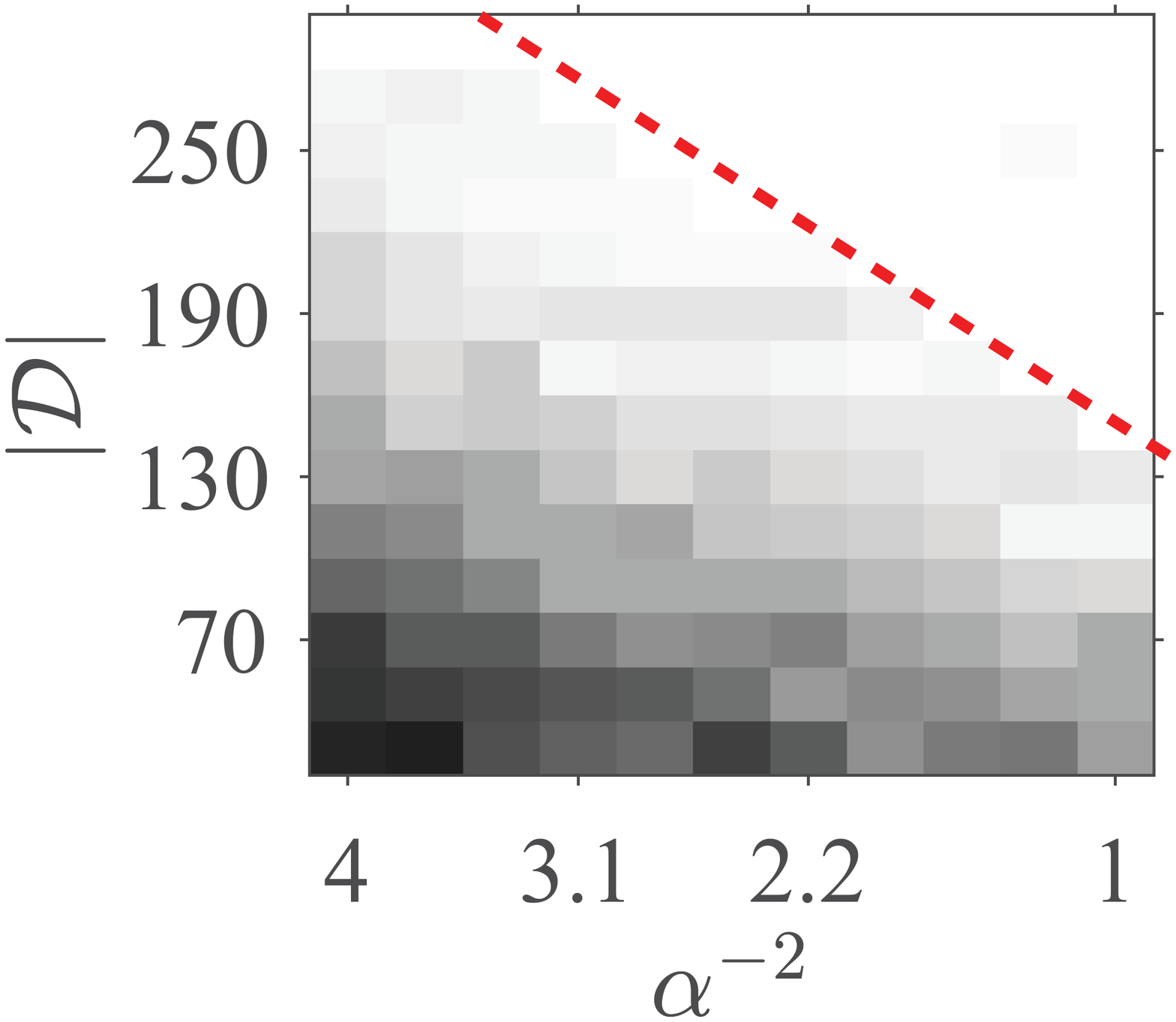}
    \vspace{-2mm}
    \caption{\small$|\mathcal{D}|$ against the importance sampling probability $\alpha$}
    \label{fig: sample_s2}    
\end{minipage}
~
\begin{minipage}{0.32\linewidth}
    \centering
    \includegraphics[width=0.90\linewidth]{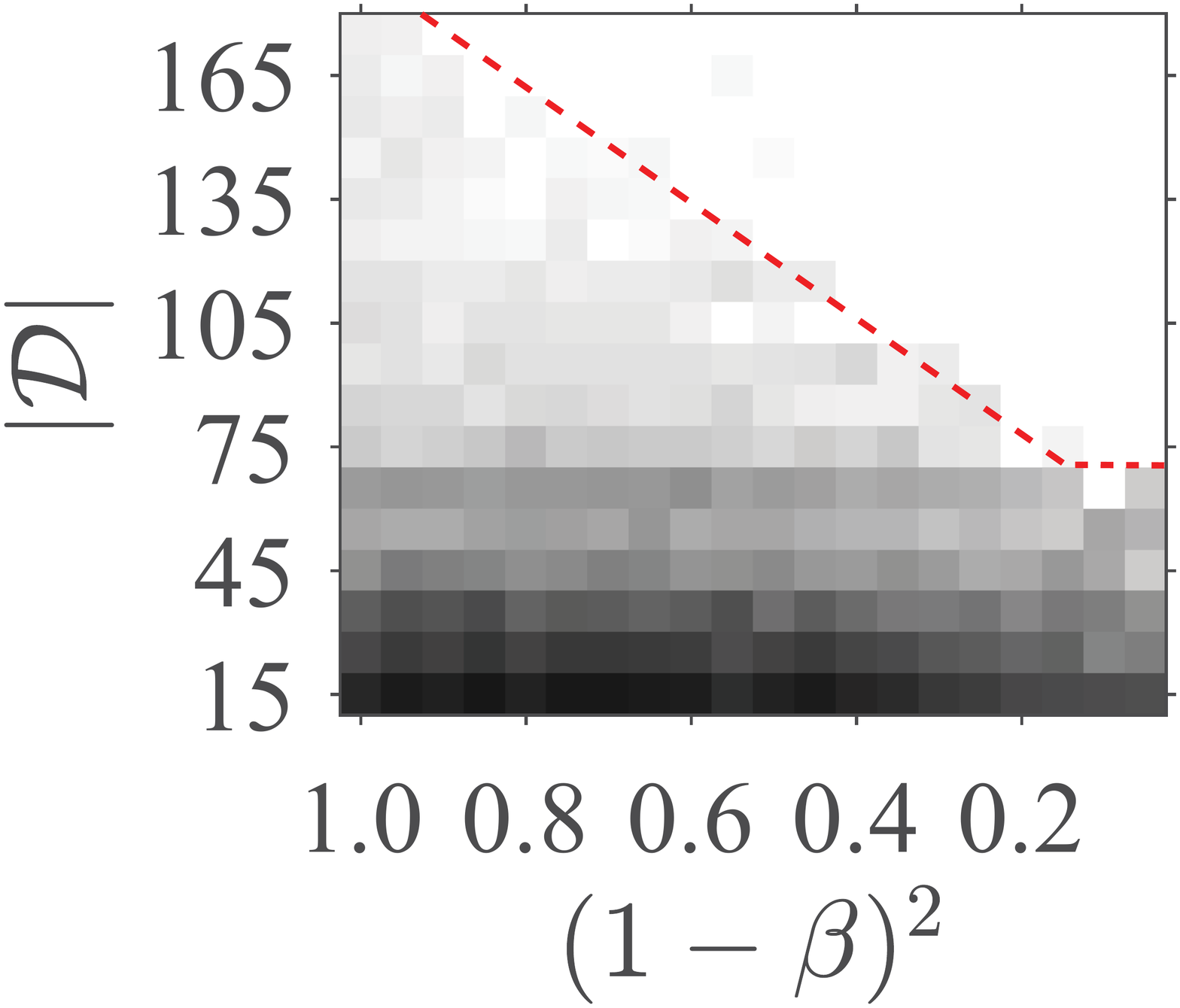}
    \vspace{-2mm}
    \caption{\small  $|\mathcal{D}|$ against the pruning rate $\beta$}
    \label{fig: pruning_test} 
\end{minipage}
~
\begin{minipage}{0.32\linewidth}
    \centering
    \includegraphics[width=0.85\linewidth]{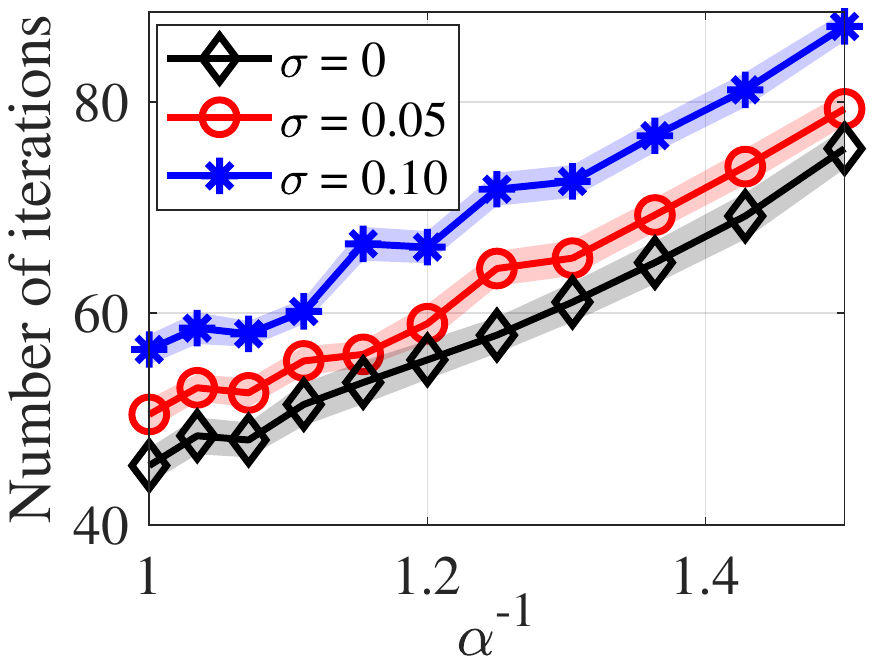}
    \vspace{-1mm}
    \caption{\small The   number of iterations against $\alpha$}
    \label{fig: sample_ite}    
\end{minipage}
\vspace{-3mm}
\end{figure}
\textbf{Training Convergence Rate.}
Next, we evaluate how sampling and pruning reduce the required number of iterations to reach zero generalization error.
Figure \ref{fig: sample_ite} shows the required number of iterations for different $\alpha$ under different noise level $\sigma$.
Each point is averaged over $1000$ independent realizations,
and the regions in low transparency denote the error bars with one standard derivation. 
We can see that the number of iterations is linear in $1/\alpha$, which verifies our theoretical findings in \eqref{eqn: number_of_samples}.  
Thus,  importance sampling reduces  the number of iterations for convergence. 
Figure \ref{fig: pruning_ite} illustrates the required number of iterations for convergence with various pruning rates. 
The baseline is the average iterations of training the dense networks.  
The required number of iterations by magnitude pruning is almost linear in $\beta$, which verifies our theoretical findings in \eqref{eqn: number_of_iteration}. In comparison, random pruning degrades the performance by requiring more iterations than the baseline to converge. 

\textbf{Magnitude pruning removes neurons with irrelevant information.} 
Figure \ref{fig: pruning_weights} shows the distribution of neuron weights after the algorithm converges.  
There are $10^4$ points by collecting the neurons in $100$ independent trials. 
The y-axis is the norm of the neuron weights $\bfw_k$, and the y-axis stands for the angle of the neuron weights between $\bfp_+$ (bottom) or $\bfp_-$ (top). 
The blue points in cross represent  $\bfw_k$'s with $b_k = 1$, and the red ones in circle represent $\bfw_k$'s with $b_k = -1$.  In both cases,  $\bfw_k$ with a small norm indeed has a large angle with  $\bfp_+$ (or $\bfp_-$) and thus, contains class-irrelevant information for classifying class $+1$ (or $-1$). Figure \ref{fig: pruning_weights} verifies Proposition \ref{pro: pruning} showing that magnitude pruning removes neurons with class-irrelavant information. 

\textbf{Performance enhancement with joint edge-model sparsification.}
Figure \ref{fig: double} illustrates the learning success rate when the importance sampling probability $\alpha$ and the pruning ratio $\beta$ change. 
For each pair of  $\alpha$ and $\beta$, the result is averaged over $100$ independent trials. We can observe when either $\alpha$ or  $\beta$ increases, it becomes more likely to learn a  desirable model. 
\begin{figure}[h]
\begin{minipage}{0.32\linewidth}
    \centering
    \includegraphics[width=0.85\linewidth]{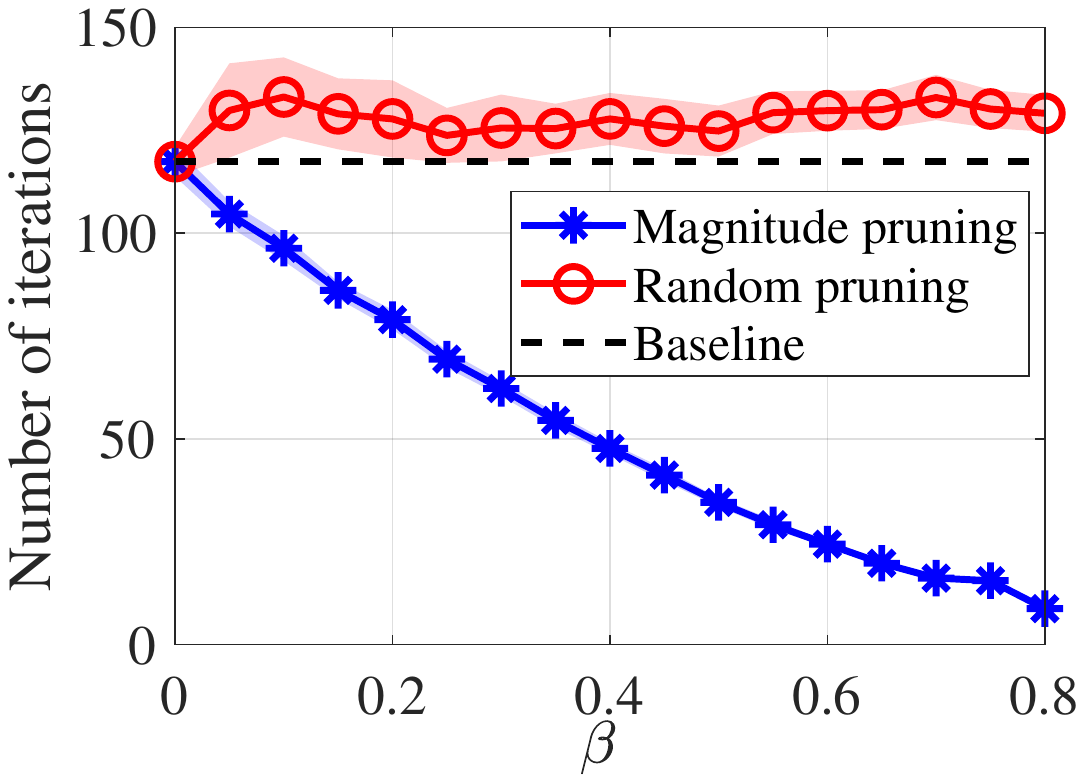}
    \caption{\small Number of iterations against the pruning rate $\beta$}
    \label{fig: pruning_ite} 
\end{minipage}
~
\begin{minipage}{0.32\linewidth}
    \centering
    \includegraphics[width=0.85\linewidth]{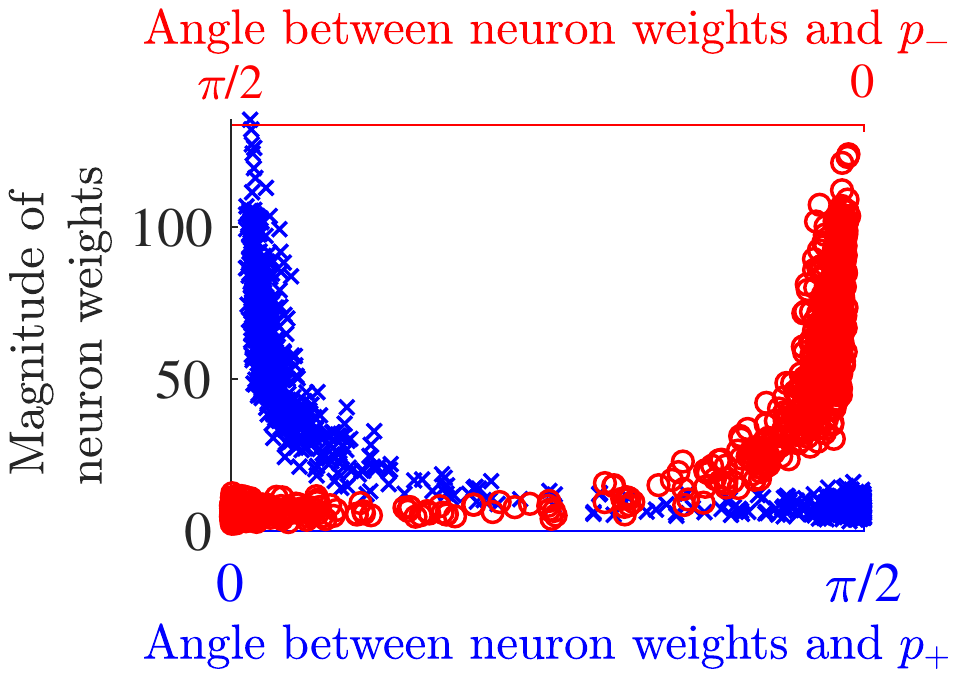}
    \caption{\small Distribution of the neuron weights}
    \label{fig: pruning_weights} 
\end{minipage}
~
\begin{minipage}{0.32\linewidth}
    \centering
    \includegraphics[width=0.75\linewidth]{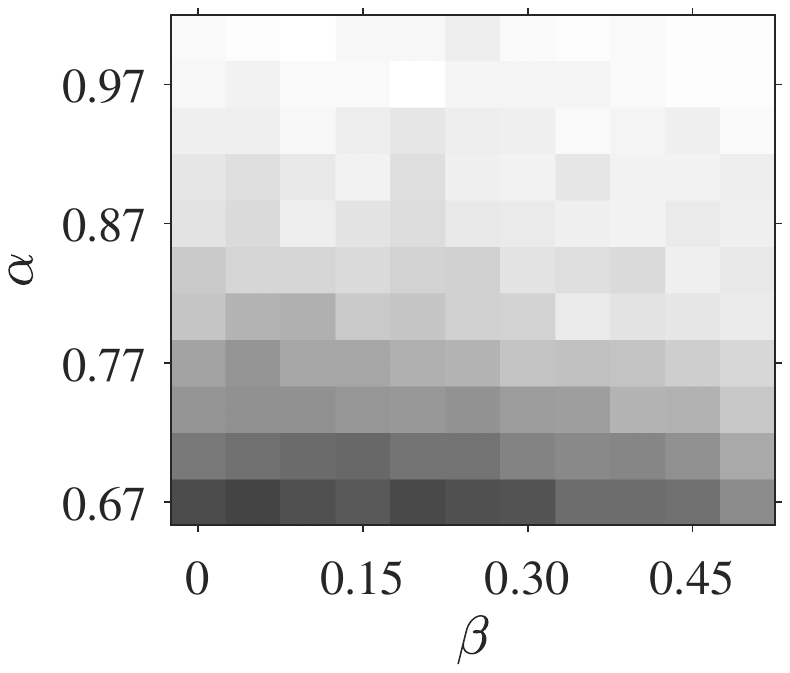}
    \vspace{-3mm}
    \caption{\small  Learning success rate of join edge-model sparsification}
    \label{fig: double}
\end{minipage}
\vspace{-4mm}
\end{figure}

\subsection{Joint-sparsification on Real Citation Datasets}\label{sec: real_experiment}
We evaluate the joint edge-model sparsification algorithms in real citation datasets (Cora, Citeseer, and Pubmed) \citep{SNBGGE08} on the standard GCN (a two-message passing GNN) \citep{KW17}.
The \textit{Unified GNN Sparsification (UGS)} in \citep{CSCZW21} is implemented here as the edge sampling method, and the model pruning approach  is  magnitude-based pruning.  

{Figure \ref{fig: heat_UGS} shows the performance of node classification on Cora dataset. As we can see, the joint sparsification helps reduce the sample complexity required to meet the same test error of the original model. For example, $P_2$, with the joint rates of sampled edges and pruned neurons as (0.90,0.49), and $P_3$, with the joint rates of sampled edges and pruned neurons as (0.81,0.60),  return models that have better testing performance than the original model ($P_1$) trained on a larger data set. By varying the training sample size, we find the characteristic behavior of our proposed theory: the sample complexity reduces with the joint sparsification.}

{Figure \ref{fig: citeseer} shows the test errors on the Citeseer dataset under different sparsification rates, and darker colors denote lower errors. In both figures, we observe that the joint edge sampling and pruning can reduce the test error even when more than $90\%$ of neurons are pruned and $25\%$ of edges are \ICLRRevise{removed}, which justifies the efficiency of joint edge-model sparsification. In addition, joint model-edge sparsification with a smaller number of training samples  can achieve similar or even better performance  than that without sparsification. For instance, when we have $120$ training samples, the test error is $30.4\%$ without any specification. However, the joint sparsification can improve the test error to $28.7\%$ with only $96$ training samples.}   
We only include partial results due to the space limit. Please see the supplementary materials for more experiments on synthetic and real datasets.


\vspace{-2mm}
\begin{figure}[h]
\vspace{-2mm}
\begin{minipage}{0.36\linewidth}
    \centering
    \includegraphics[width=0.85\linewidth]{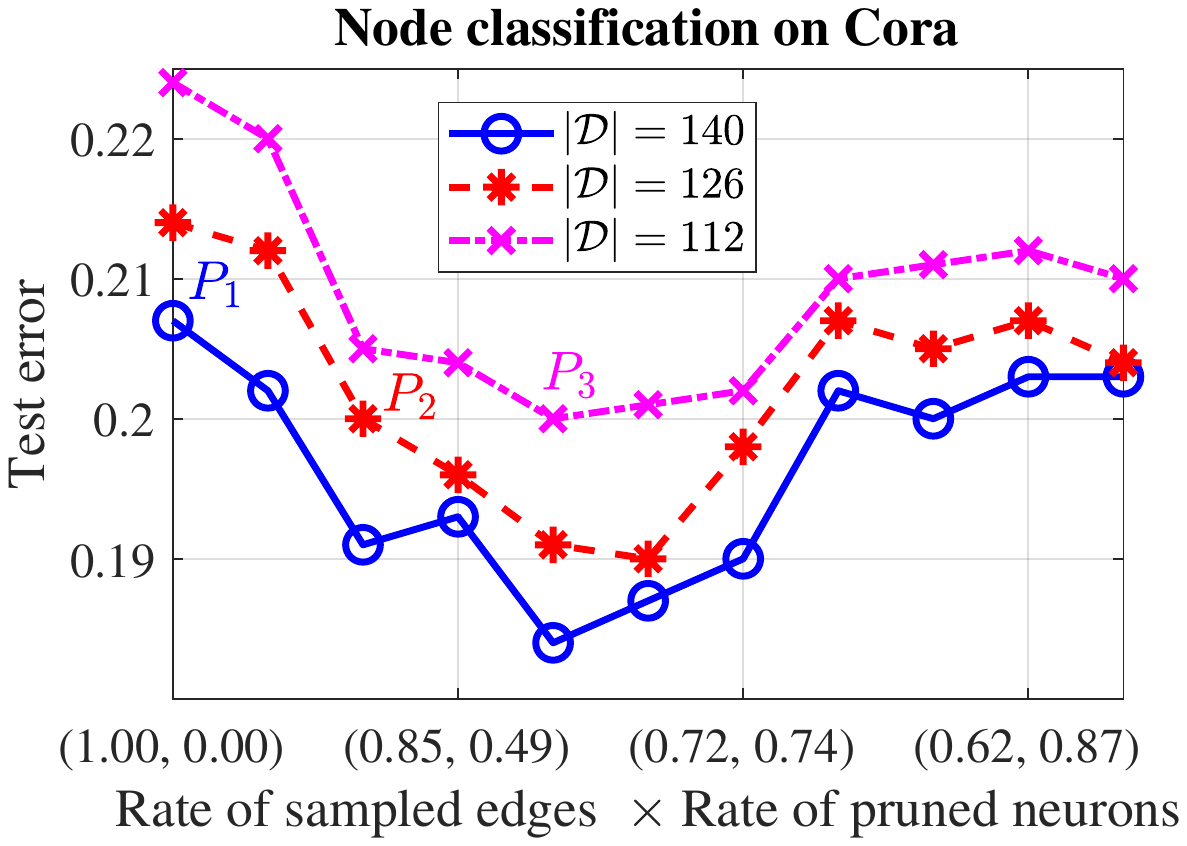}
    \caption{\small Test error on Cora.}
    \label{fig: heat_UGS} 
\end{minipage}
~
\begin{minipage}{0.6\linewidth}
    \centering
    \includegraphics[width=0.80\linewidth]{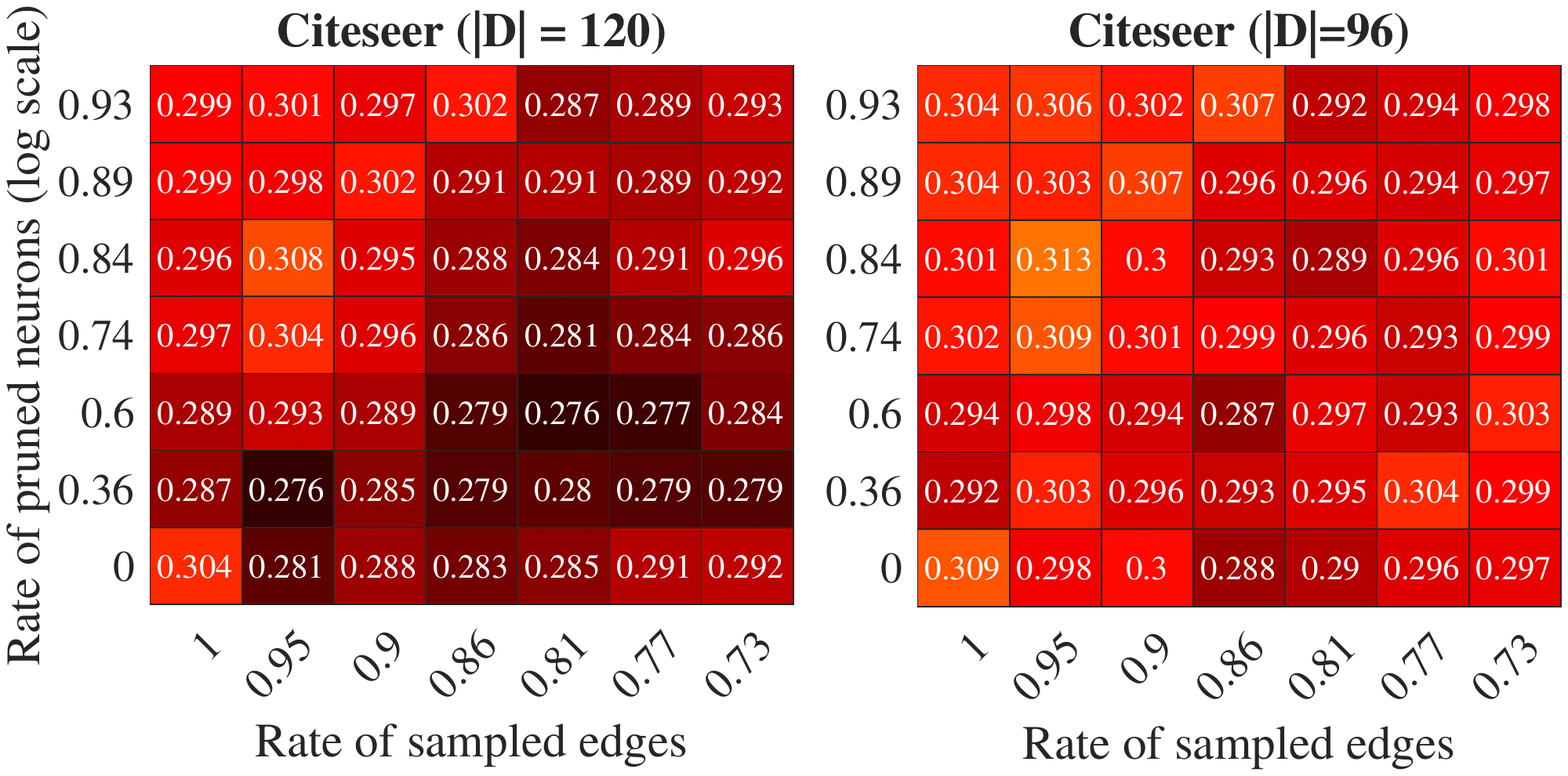}
    \vspace{-2mm}
    \caption{\small Heatmaps depicting the test error on Citeseer with 96 and 120 training nodes.}
    \label{fig: citeseer} 
\end{minipage}
\end{figure}
\vspace{-5mm}

\section{Conclusions}\label{sec: conclusion}
Encouraged by the empirical success of sparse learners in accelerating GNN training, this paper characterizes the impact of graph sampling and neuron pruning on the sample complexity and convergence rate for a desirable test accuracy quantitatively. To the best of our knowledge, this is the first theoretical generalization analysis of joint edge-model sparsification in training GNNs. Future directions include generalizing the analysis to multi-layer cases, other graph sampling strategies, e.g., FastGCN \citep{CMX18}, or link-based classification problems. 

\section*{Acknowledgement}
 This work was supported by AFOSR FA9550-20-1-0122, ARO W911NF-21-1-0255, NSF 1932196 and the Rensselaer-IBM AI Research Collaboration (http://airc.rpi.edu), part of the IBM AI Horizons Network (http://ibm.biz/AIHorizons).  We thank  Kevin Li and Sissi Jian at Rensselaer Polytechnic Institute for the help in formulating numerical experiments. We thank all anonymous reviewers for their constructive comments.



\section*{Reproducibility Statement}
 For the theoretical results in Section \ref{sec: formal_theoretical}, we provide the necessary lemmas in Appendix \ref{sec: lemma} and a complete proof of the major theorems based on the lemmas in Appendix \ref{app:proof_of_main_theorem}. The proof of all the lemmas are included in Appendix \ref{app: proof_of_lemma}. For experiments in Section \ref{sec: numerical_experiment}, the implementation details in generating the data and figures are summarized in the Appendix \ref{sec:app_experiment}, and the source code can be found in the supplementary material.

\newpage
\appendix
{\centering
{\bf \Huge 
Supplementary Materials for:}\\}
\begin{center}
\Large Joint Edge-Model Sparse Learning is Provably Efficient for Graph Neural Networks
\end{center}
\vspace{2cm}
In the following contexts, 
the related works are included in Appendix \ref{sec: related}. 
Appendix \ref{sec: max-pooling} provides a high-level idea for the proof techniques. 
Appendix \ref{app:notation} summarizes the notations for the proofs, and the useful lemmas are included in Appendix \ref{sec: lemma}.
Appendix \ref{app:proof_of_main_theorem} provides the detailed proof of Theorem \ref{Thm:appendix}, which is the formal version of Theorem \ref{Thm:major_thm}. 
Appendix \ref{sec:app_experiment} describes the details of synthetic data experiments in Section \ref{sec: numerical_experiment}, and several other experimental results  are included because of the limited space in the main contexts.
The lower bound of the VC-dimension is proved in Appendix \ref{app: VC}.
Appendix \ref{app:alpha} provides the bound of $\alpha$ for some edge sampling strategies.
Additional proofs for the useful lemmas are summarized in Appendix \ref{app: proof_of_lemma}. 
In addition, we provide a high-level idea in extending the framework in this paper to a multi-class classification problem in
Appendix \ref{sec: multi-class}.

\section{Related Works}\label{sec: related}

\textbf{Generalization analysis of GNNs.} Two recent papers \citep{DHPSWX19,XZJK21} exploit the neural tangent kernel (NTK) framework \citep{MYSS20,ZLS19,JGH18,DZPS18,LBNSSPS18} for the generalization analysis of GNNs.  
It is shown in \citep{DHPSWX19} that the graph neural tangent kernel (GNTK)
achieves a bounded generalization error \ICLRRevise{only if the labels are generated from some special function, e.g., the function needs to be linear or even.}
\citep{XZJK21} analyzes the generalization of deep linear GNNs with skip connections. 
The NTK approach considers the regime that the model is sufficiently over-parameterized, i.e., the number of neurons is a polynomial function of the sample amount, such that the landscape of the risk function becomes almost convex near any initialization. 
The required model  complexity is much more significant than the practical case, and the results are irrelevant of the data distribution.  As the neural network learning process is    strongly correlated with the input structure \citep{SWL22},    distribution-free analysis, such as NTK, might not accurately explain the learning performance on data with special structures. 
Following the model recovery frameworks \citep{ZSJB17,ZWLCX22,ZWLC20_3},
\citet{ZWLC20_2} analyzes the generalization of one-hidden-layer GNNs assuming the features belong to Gaussian distribution, but the analysis requires a special tensor initialization method and does not explain the practical success of SGD with random initialization. 
Besides these, the generalization gap between the training and test errors is characterized through the classical Rademacher complexity in \citep{STH18,GJJ20} and uniform stability framework in \citep{VZ19,ZW21}. 

\textbf{Generalization analysis with structural constraints on data.} 
Assuming the data come  from mixtures of well-separated distributions,  \citep{LL18} analyzes the generalization of one-hidden-layer fully-connected neural networks.   Recent works \citep{SWL22,BG21, AL22, KWLS21, WL21,LWLC23} analyze one-hidden-layer neural networks assuming the data can be divided into discriminative and background patterns. 
Neural networks with non-linear activation functions memorize  the discriminative features  and have guaranteed generalization in the unseen data with same structural constraints, while no linear classifier with random initialization can learn the data mapping in polynomial sizes and time \citep{SWL22,DM20}. Nevertheless, none of them has considered GNNs or  sparsification.

\section{Overview of the techniques}\label{sec: max-pooling}
Before presenting the proof details, we will provide a high-level overview of the proof techniques in this section. 
To warm up,
we first summarize the proof sketch without edge and model sparsification methods. Then, we illustrate the major challenges in deriving the results for edge and model sparsification approaches.

\subsection{Graph neural network learning on data with structural constraints}\label{sec: GNN_overview}
For the convenience of presentation, we use $\mathcal{D}_+$ and $\mathcal{D}_-$ to denote the set of nodes with positive and negative labels in $\mathcal{D}$, respectively, where $\mathcal{D}_+ = ( \mathcal{V}_+ \cup \mathcal{V}_{N+})\cap\mathcal{D}$ and $\mathcal{D}_- = ( \mathcal{V}_- \cup \mathcal{V}_{N-})\cap\mathcal{D}$.
Recall that $\bfp_+$ only exists in the neighbors of node $v\in\mathcal{D}_+$, and $\bfp_-$ only exists in the neighbors of node $v\in\mathcal{D}_-$. 
In contrast, class irrelevant patterns are distributed identically for data in $\mathcal{D}_+$ and $\mathcal{D}_-$.
In addition,
for some neuron, the gradient direction will always be near $\bfp_+$ for $v\in\mathcal{D}_+$, while the gradient derived from  $v\in\mathcal{D}_-$ is always almost orthogonal to $\bfp_+$. Such neuron is the
\textit{lucky neuron} defined in Proposition \ref{pro: lucky} in Section \ref{sec: proof_sketch}, and  we will formally define the \textit{lucky neuron} in Appendix \ref{app:notation} from another point of view.

Take the neurons in $\mathcal{B}_{+}$ for instance, where $\mathcal{B}_+=\{k\mid b_k=+1\}$ denotes the set of neurons with positive coefficients in the linear layer.
For a lucky neuron $k\in\mathcal{K}_+$, the projection of the weights on $\bfp_+$ strictly increases
(see Lemma \ref{Lemma: update_lucky_neuron} in Appendix \ref{sec: lemma}). For other neurons, which are named as \textit{unlucky neurons}, class irrelevant patterns are identically distributed and independent of $y_v$, the gradient generated from $\mathcal{D}_+$ and $\mathcal{D}_-$ are similar. Specifically, because of the offsets between $\mathcal{D}_+$ and $\mathcal{D}_-$, the overall gradient is in the order of $\sqrt{{(1+R^2)}/{|\mathcal{D}|}}$, where $R$ is the degree of graph. With a sufficiently large amount of training samples, the projection of the weights on class irrelevant patterns grows much slower than that on class relevant patterns.
One can refer to Figure \ref{fig: fliter} for an illustration of neuron weights update. 
\begin{figure}[h]
     \centering
     \begin{subfigure}[b]{0.49\textwidth}
         \centering
         \includegraphics[width=\textwidth]{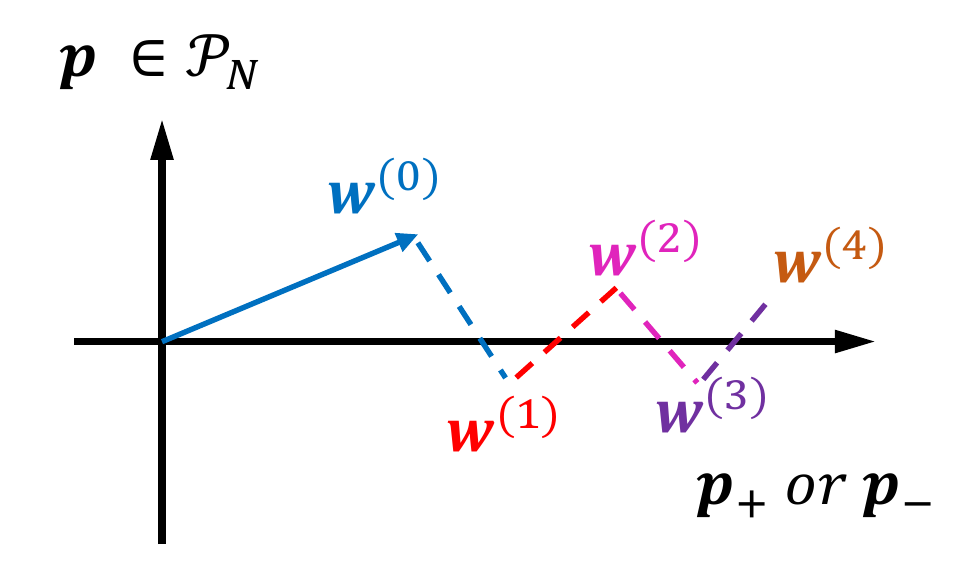}
         \caption{}
         \label{fig: lucky}
     \end{subfigure}
     \hfill
     \begin{subfigure}[b]{0.49\textwidth}
         \centering
         \includegraphics[width=\textwidth]{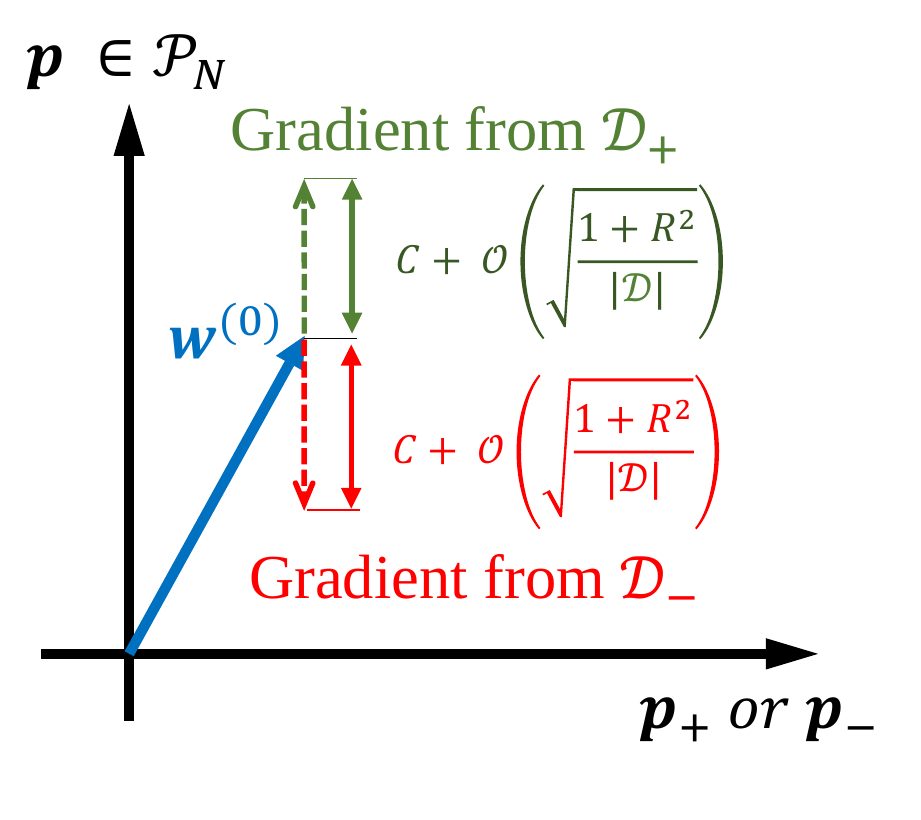}
         \caption{}
         \label{fig: unlucky}
     \end{subfigure}
        \caption{Illustration of iterations $\{\bfw^{(t)}\}_{t=1}^T$: (a) \textit{lucky neuron}, and (b) \textit{unlucky neuron}.}
        \label{fig: fliter}
\end{figure}

Similar to the derivation of $\bfw_k^{(t)}$ for $k\in\mathcal{B}_{+}$, we can show that neurons in $\mathcal{K}_-$, which are the lucky neurons with respect to $k\in\mathcal{B}_-$, have their weights updated mainly in the direction of $\bfp_-$. 
Recall that the output of GNN model is written as \eqref{eqn: output1}, the corresponding coefficients in the linear layer for $k\in\mathcal{B}_{+}$ are all positive. With these in hand, we know that the neurons in $\mathcal{B}_+$ have a relatively large magnitude in the direction of $\bfp_+$ compared with other patterns, and the corresponding coefficients $b_k$ are positive.
Then, for the node $v\in\mathcal{V}_+\cup \mathcal{V}_{N+}$, the calculated label  will be strictly positive.
Similar to the derivation above, the calculated label for the node $v\in\mathcal{V}_-\cup \mathcal{V}_{N-}$ will be strictly negative. 

\textbf{Edge sparsification.} Figure \ref{fig: fliter}(b) shows that the gradient in the direction of any class irrelevant patterns is a linear function of $\sqrt{1+R^2}$ without sampling. Sampling on edges can significantly reduce the degree of the graph, i.e., the degree of the graph is reduced to $r$ when only sampling $r$ neighbor nodes. Therefore, the projection of neuron weights on any $\bfp\in\mathcal{P}_N$ becomes smaller, and the neurons are less likely to learn class irrelevant features. In addition, the computational complexity per iteration is reduced since we only need to traverse a subset of the edges.
Nevertheless, sampling on graph edges may lead to missed class-relevant features in some training nodes (a smaller $\alpha$), which will degrade the convergence rate and need a larger number of iterations.

\textbf{Model sparsification.} Comparing Figure \ref{fig: fliter}(a) and \ref{fig: fliter}(b), we can see that the magnitudes of a lucky neuron grow much faster than these of an unlucky neuron. 
In addition, 
from Lemma \ref{Lemma: number_of_lucky_filter}, we know that the lucky neuron at initialization will always be the lucky neuron in the following iterations.
Therefore, the magnitude-based pruning method on the original dense model 
removes unlucky neurons but preserves the lucky neurons. 
When the fraction of lucky neurons is improved, the neurons learn the class-relevant features faster. Also, the algorithm can tolerate a larger gradient noise derived from the class irrelevant patterns in the inputs, which is in the order of $1/\sqrt{|\mathcal{D}| }$ from Figure \ref{fig: fliter}(b).
Therefore, the required samples for convergence can be significantly  reduced. 

\textbf{Noise factor $\bfz$.} 
The noise factor $\bfz$ degrades the generalization mainly in the following aspects. 
First, in \cite{BG21}, the sample complexity depends on the size of $\{\bfx_v\}_{v\in\mathcal{D}}$, which, however, can be as large as $|\mathcal{D}|$ when there is noise.
Second, the fraction of lucky neurons is reduced as a function of the noise level. With a smaller fraction of lucky neurons, we require a larger number of training samples and iterations for convergence. 


\section{Notations}\label{app:notation}

In this section, we implement the details of data model and problem formulation described in Section \ref{sec: formal_theoretical}, and some important notations are defined to simplify the presentation of the proof.
In addition, all the notations used in the following proofs are summarized in Tables \ref{table: set} and \ref{table: notation}.

\subsection{Data model with structural constraints}
Recall the definitions in Section \ref{sec: formal_theoretical}, the node feature for node $v$ is written as
\begin{equation}\label{eqn: noiseless_x}
    \bfx_v = \bfp_v + \bfz_v,
\end{equation}
where $\bfp_v\in\mathcal{P}$, and $\bfz_v$ is bounded noise with $\|\bfz_v\|_2\le \sigma$. In addition, there are $L$ orthogonal patterns in $\mathcal{P}$, denoted as $\{\bfp_\ell\}_{\ell=1}^L$. $\bfp_+:=\bfp_1$ is the positive class relevant pattern, $\bfp_-:=\bfp_2$ is the negative class relevant pattern, and the rest of the patterns, denoted as $\mathcal{P}_N$, are the class irrelevant patterns. For node $v$, its label $y_v$ is positive or negative if its neighbors contain $\bfp_+$ or $\bfp_-$.    
By saying a node $v$ contains class relevant feature, we indicate that $\bfp_v = \bfp_+$ or $\bfp_-$.

Depending on $\bfx_v$ and $y_v$, we divide the nodes in $\mathcal{V}$ into four disjoint partitions, i.e., $\mathcal{V}=\mathcal{V}_+\cup \mathcal{V}_-\cup \mathcal{V}_{N+}\cup \mathcal{V}_{N-}$, where 
\begin{equation}
    \begin{split}
        &\mathcal{V}_+ : = \{v\mid \bfp_v = \bfp_+ \};\\
        &\mathcal{V}_- : = \{v\mid \bfp_v = \bfp_- \};\\
        &\mathcal{V}_{N+} : = \{v\mid \bfp_v \in \mathcal{P}_N, y_v = +1 \};\\
        &\mathcal{V}_{N-} : = \{v\mid \bfp_v \in \mathcal{P}_N, y_v = -1 \}.\\
    \end{split}
\end{equation}
Then, we consider the model such that (i) the distribution of $\bfp_+$ and $\bfp_-$ are identical, namely,
\begin{equation}\label{eqn: app_a1}
    \text{Prob}(\bfp_v = \bfp_+) = \text{Prob}(\bfp_v = \bfp_-),
\end{equation}
and (ii) $\bfp\in\mathcal{P}_N$ are identically distributed in $\mathcal{V}_{N+}$ and $\mathcal{V}_{N-}$, namely, 
\begin{equation}\label{eqn: app_a2}
    \text{Prob}(\bfp \mid v\in\mathcal{V}_{N+}) 
    = \text{Prob}(\bfp \mid v\in\mathcal{V}_{N-}) \quad \text{for any} \quad \bfp\in\mathcal{P}_N.
\end{equation}
It is easy to verify that, when \eqref{eqn: app_a1} and \eqref{eqn: app_a2} hold, the number of positive and negative labels in $\mathcal{D}$ are balanced, such that 
\begin{equation}
    \text{Prob}(y_v = +1)= \text{Prob}(y_v = -1) = \frac{1}{2}.
\end{equation}
If $\mathcal{D}_+$ and $\mathcal{D}_-$ are highly unbalanced, namely, $\big||\mathcal{D}_+|-|\mathcal{D}_-|\big| \gg \sqrt{|\mathcal{D}|}$, the objective function in \eqref{eqn: ERF} can be modified as 
\begin{equation}\label{eqn: ERF2}
    \begin{split}
    \hat{f}_{\mathcal{D}}
    := &-\frac{1}{2|\mathcal{D}_+|}\sum_{ v \in \mathcal{D}_+ } y_v\cdot g\big(\bfW;\bfX_{\mathcal{N}(v)} \big)
    -\frac{1}{2|\mathcal{D}_-|}\sum_{ v \in \mathcal{D}_- } y_v\cdot g\big(\bfW;\bfX_{\mathcal{N}(v)} \big),
\end{split}
\end{equation}
and the required number of samples $|\mathcal{D}|$ in \eqref{eqn: number_of_samples} is replaced with $\min\{|\mathcal{D}_+|,|\mathcal{D}_-|\}$.

\subsection{Graph neural network model}
It is easy to verify that \eqref{eqn: useless1} is equivalent to the model
\begin{equation}\label{eqn: output1}
     g(\bfW,\bfU;\bfx) = \frac{1}{K}\sum_{k} \text{AGG}(\bfX_{\mathcal{N}(v)},\bfw_k) - \frac{1}{K} \sum_{k} \text{AGG}(\bfX_{\mathcal{N}(v)},\bfu_k),
\end{equation}
where the neuron weights $\{\bfw_k\}_{k=1}^K$ in \eqref{eqn: output1} are with respect to the neuron weights with $b_k=+1$ in \eqref{eqn: useless1}, and the neuron weights $\{\bfu_k\}_{k=1}^K$ in \eqref{eqn: output1} are respect to the neuron weights with $b_k=-1$ in \eqref{eqn: useless1}. Here, we abuse $K$ to represent the number of neurons in $\{k|b_k = +1\}$ or $\{k|b_k=-1 \}$, which differs from the $K$ in \eqref{eqn: useless1} by a factor of $2$. Since this paper aims at providing order-wise analysis, the bounds for $K$ in \eqref{eqn: useless1}  and $\eqref{eqn: output1}$ are the same.

Corresponding to the model in \eqref{eqn: output1}, we denote $\bfM_+$ as the mask matrix after pruning with respect to $\bfW$ and $\bfM_-$ as the mask matrix after pruning with respect to $\bfU$. For the convenience of analysis, we consider balanced pruning in $\bfW$ and $\bfU$, i.e., $\|\bfM_+\|_0=\|\bfM_- \|_0=(1-\beta)Kd$.

\subsection{Additional notations for the proof}
\textbf{Pattern function $\mathcal{M}(v)$.} Now, recall that at iteration $t$, the aggregator function for node $v$ is written as
\begin{equation}
    \text{AGG}(\bfX_{\mathcal{N}^{(t)}(v)},\bfw_k^{(t)}) = \max_{n\in \mathcal{N}^{(t)}(v)}~\phi(\vm{\bfw_k^{(t)}}{\bfx_n}).
\end{equation}
Then,  at iteration $t$,
we define the 
pattern function $\mathcal{M}^{(t)}: \mathcal{V} \rightarrow \mathcal{P}\bigcup\{\bfzero\}$ at iteration $t$ as
\begin{equation}\label{eqn: pattern_function}
    \mathcal{M}^{(t)}(v;\bfw) = \begin{cases}
        \mathbf{0}, \quad \text{if} \quad \displaystyle\max_{n\in\mathcal{N}^{(t)}(v)} \phi( \vm{\bfw}{\bfx_{n}} )\le 0\\
       \displaystyle\argmax_{\{\bfx_n \mid n\in\mathcal{N}^{(t)}(v)\}} \phi( \vm{\bfw}{\bfx_{n}} ), \quad \text{otherwise}
    \end{cases}.
\end{equation}
Similar to the definition of $\bfp_v$ in \eqref{eqn: noiseless_x}, we define $\mathcal{M}_{p}^{(t)}$ and $\mathcal{M}_z^{(t)}$ such that $\mathcal{M}_{p}^{(t)}$ is the noiseless pattern with respect to $\mathcal{M}^{(t)}$ while $\mathcal{M}_z^{(t)}$ is noise with respect to $\mathcal{M}^{(t)}$.

In addition, we define $\mathcal{M}: \mathcal{V} \rightarrow \mathcal{P}\bigcup\{\bfzero\}$ for the case without edge sampling such that
\begin{equation}\label{eqn: pattern_function_M}
    \mathcal{M}(v;\bfw) = \begin{cases}
        \mathbf{0}, \quad \text{if} \quad \displaystyle\max_{n\in\mathcal{N}(v)} \phi( \vm{\bfw}{\bfx_{n}} )\le 0\\
       \displaystyle\argmax_{\{\bfx_n \mid n\in\mathcal{N}(v)\}} \phi( \vm{\bfw}{\bfx_{n}} ), \quad \text{otherwise}
    \end{cases}.
\end{equation} 

\textbf{Definition of lucky neuron.} We call a neuron is the \textit{lucky neuron} at iteration $t$ if and only if  its weights vector in $\{\bfw_k^{(t)} \}_{k=1}^K$ satisfies
\begin{equation}\label{defi:lucky_filter}
\begin{gathered}
    \mathcal{M}_{p}{(v;\bfw_k^{(t)})} \equiv \bfp_+ \quad \text{for any} \quad v\in\mathcal{V}
\end{gathered}
\end{equation} 
or its weights vector in $\{\bfu_k^{(t)}\}_{k=1}^K$ satisfies 
\begin{equation}\label{defi:lucky_filter_v}
        \mathcal{M}_{p}{(v;\bfu_k^{(t)})} \equiv \bfp_- \quad \text{for any} \quad v\in\mathcal{V}.
\end{equation}
Let $\mathcal{W}(t)$, $\mathcal{U}(t)$ be the set of the \textit{lucky neuron at $t$-th iteration} such that
\begin{equation}\label{eqn: defi_lucky}
    \begin{gathered}
        \mathcal{W}(t) = \{ k \mid \mathcal{M}_{p}^{(t)}(v;\bfw_k^{(t)}) = \bfp_+ \quad \text{for any}\quad  v\in \mathcal{V} \},\\
        \mathcal{U}(t) = \{ k \mid \mathcal{M}_{p}^{(t)}(v;\bfu_k^{(t)}) = \bfp_-  \quad \text{for any}\quad  v\in \mathcal{V} \}
    \end{gathered}
\end{equation}
All the other other neurons, denoted as $\mathcal{W}^c(t)$ and $\mathcal{U}^c(t)$, are the \textit{unlucky neurons at iteration $t$}.
Compared with the definition of ``lucky neuron'' in Proposition \ref{pro: lucky} in Section \ref{sec: proof_sketch}, 
we have $\cap_{t=1}^T \mathcal{W}(t) = \mathcal{K}_+$.
From the contexts below (see Lemma \ref{Lemma: number_of_lucky_filter}), one can verify that $  \mathcal{K}_+=\cap_{t=1}^T \mathcal{W}(t)=\mathcal{W}(0)$.

\textbf{Gradient of the \textit{lucky neuron} and \textit{unlucky neuron}.} 
We can rewrite the gradient descent in \eqref{eqn: ERF_t} as 
\begin{equation}
\begin{split}
    &\frac{\partial \hat{f}^{(t)}_{\mathcal{D}}}{\partial \bfw_k^{(t)}}\\ 
    =&- \frac{1}{|\mathcal{D}|} \sum_{v\in\mathcal{D}} ~\frac{\partial g(\bfW^{(t)},\bfU^{(t)};\bfX_{\mathcal{N}^{(t)}(v)})}{\partial \bfw_k^{(t)}}\\ 
    =& - \frac{1}{2|\mathcal{D}_+|} \sum_{v\in\mathcal{D}_+} ~\frac{\partial g(\bfW^{(t)},\bfU^{(t)};\bfX_{\mathcal{N}^{(t)}(v)})}{\partial \bfw_k^{(t)}}
    + \frac{1}{2|\mathcal{D}_-|} \sum_{v\in\mathcal{D}_-} ~\frac{\partial g(\bfW^{(t)},\bfU^{(t)};\bfX_{\mathcal{N}^{(t)}(v)})}{\partial \bfw_k^{(t)}},
\end{split}
\end{equation}
where $\mathcal{D}_+$ and $\mathcal{D}_-$ stand for the set of nodes in $\mathcal{D}$ with positive labels and negative labels, respectively.
According to the definition of $\mathcal{M}^{(t)}$ in \eqref{eqn: pattern_function}, it is easy to verify that 
 \begin{equation}
     \frac{\partial g(\bfW^{(t)},\bfU^{(t)};\bfX_{\mathcal{N}(v)})}{\partial \bfw_k^{(t)}} = \mathcal{M}^{(t)}(v;\bfw_k^{(t)}),
 \end{equation}
and the update of $\bfw_k$ is
\begin{equation}\label{eqn: update_g_2}
\begin{gathered}
    \bfw_k^{(t+1)} 
    =~ \bfw_k^{(t)} + c_\eta\cdot\mathbb{E}_{v \in \mathcal{D}}~y_v\cdot \mathcal{M}^{(t)}(v;\bfw_k^{(t)}),\\
    \text{or} \quad  \bfw_k^{(t+1)} 
    =~ \bfw_k^{(t)} + c_\eta\cdot\mathbb{E}_{v \in \mathcal{D}_+}\mathcal{M}^{(t)}(v;\bfw_k^{(t)}) - c_\eta\cdot \mathbb{E}_{v\in \mathcal{D}_-}\mathcal{M}^{(t)}(v;\bfw_k^{(t)}),
\end{gathered}
\end{equation}
where we abuse the notation $\mathbb{E}_{v\in \mathcal{S}}$ to denote
\begin{equation}
    \mathbb{E}_{v\in \mathcal{S}} f(v) = \frac{1}{|\mathcal{S}|}\sum_{v\in \mathcal{S}}f(v) 
\end{equation}
for any set $\mathcal{S}$ and some function $f$.
Additionally, without loss of generality, the neuron that satisfies $\max_{\bfp \in \mathcal{P}}\vm{\bfw_k^{(0)}}{\bfp}<0$ is not considered because (1) such neuron is not updated at all; (2) the probability of such neuron is negligible as ${2}^{-L}$.

Finally, as the focus of this paper is order-wise analysis, some constant numbers may be ignored in part of the proofs. In particular, we use $h_1(L) \gtrsim h_2(L)$ to denote there exists some positive constant $C$ such that $h_1(L)\ge C\cdot h_2(L)$ when $L\in\mathbb{R}$ is sufficiently large. Similar definitions can be derived for $h_1(L)\eqsim h_2(L)$ and $h_1(L) \lesssim h_2(L)$.

\renewcommand{\arraystretch}{1.5}
\begin{table}[h]
    \caption{Important notations of sets} 
    \vspace{2mm}
    \centering
    \begin{tabular}{c|p{10cm}}
    \hline
    \hline
    $[Z], Z\in \mathbb{N}_+$ & The set of $\{1 , 2 , 3 ,\cdots, Z\}$\\
    \hline
    $\mathcal{V}$ & The set of nodes in graph $\mathcal{G}$\\
    \hline
    $\mathcal{E}$ & The set of edges in graph $\mathcal{G}$\\
    \hline
    $\mathcal{P}$ & The set of class relevant and class irrelevant patterns\\
    \hline
    $\mathcal{K}_{+}$ & The set of lucky neurons with respect to $\bfW^{(0)}$\\
    \hline
    $\mathcal{K}_{-}$ & The set of lucky neurons with respect to $\bfU^{(0)}$\\
    \hline
    $\mathcal{K}_{\beta+}(t)$ & The set of unpruned neurons with respect to $\bfW^{(t)}$\\
    \hline
    $\mathcal{K}_{\beta-}(t)$ & The set of unpruned neurons with respect to $\bfU^{(t)}$\\
    \hline
    $\mathcal{P}_N$ & The set of class irrelevant patterns\\
    \hline
    $\mathcal{D}$ & The set of training data\\
    \hline
    $\mathcal{D}_+$ & The set of training data with positive labels\\
    \hline
    $\mathcal{D}_-$ & The set of training data with negative labels\\
    \hline
    $\mathcal{N}(v), v\in\mathcal{V}$ & The neighbor nodes of node $v$ (including $v$ itself) in graph $\mathcal{G}$\\
    \hline
    $\mathcal{N}_s^{(t)}(v), v\in\mathcal{V}$ & The sampled nodes of node $v$ at iteration $t$\\
    \hline
    $\mathcal{W}(t)$ & The set of lucky neurons with respect to weights $\bfW^{(t)}$ at iteration $t$\\
    \hline
    $\mathcal{U}(t)$ & The set of lucky neurons with respect to weights $\bfU^{(t)}$ at iteration $t$\\
    \hline
    $\mathcal{W}^c(t)$ & The set of unlucky neurons with respect to weights $\bfW^{(t)}$ at iteration $t$\\
    \hline
    $\mathcal{U}^c(t)$ & The set of unlucky neurons with respect to weights $\bfU^{(t)}$ at iteration $t$\\
    \hline
    \hline
    \end{tabular}
    \label{table: set}
\end{table}

\begin{table}[h]
    \caption{Important notations of scalars and matrices} 
    \vspace{2mm}
    \centering
    \begin{tabular}{c|p{10cm}}
    \hline
    \hline
    $\pi$ &  Mathematical constant that is the ratio of a circle's circumference to its diameter\\
    \hline
    $d$ & The dimension of input feature\\
    \hline
    $K$ & The number of neurons in the hidden layer\\
    \hline
    $\bfx_v, v\in \mathcal{V}$ & The input feature for node $v$ in $\mathbb{R}^{d}$\\
    \hline
    $\bfp_v, v\in \mathcal{V}$ & The noiseless input feature for node $v$ in $\mathbb{R}^{d}$\\
    \hline
    $\bfz_v, v\in \mathcal{V}$ & The noise factor for node $v$ in $\mathbb{R}^{d}$\\
    \hline
    $y_v, v\in \mathcal{V}$ & The label of node $v$ in $\{ -1 , +1 \}$\\
    \hline
    $R$ & The degree of the original graph $\mathcal{G}$\\
    \hline
    $\bfW, \bfU$ & The neuron weights in hidden layer\\
    \hline
    $\bfX_{\mathcal{N}}$ & The collection of $\{\bfx_n\}_{n\in\mathcal{N}}$ in $\mathbb{R}^{d\times |\mathcal{N}|}$\\
    \hline
    $r$ & The size of sampled neighbor nodes\\
    \hline
    $L$ & The size of class relevant and class irrelevant features\\
    \hline
    $\bfp_+$ & The class relevant pattern with respect to the positive label\\
    \hline
    $\bfp_-$ & The class relevant pattern with respect to the negative label\\
    \hline
    $\bfz$ & The additive noise in the input features\\
    \hline
    $\sigma$ & The upper bound of  $\|\bfz\|_2$ for the noise factor $\bfz$\\
    \hline
    $\bfM_+$ & The mask matrix for $\bfW$ of the pruned model\\
    \hline
    $\bfM_-$ & The mask matrix for $\bfU$ of the pruned model\\
    \hline
    \hline
    \end{tabular}
    \label{table: notation}
\end{table}

\section{Useful Lemmas}\label{sec: lemma}
Lemma \ref{Lemma: number_of_lucky_neuron} indicates the relations of the number of neurons and the fraction of lucky neurons. When the number of neurons in the hidden layer is sufficiently large as \eqref{eqn:lemma1_1}, the fraction of lucky neurons is at least $(1-\varepsilon_K-L\sigma/\pi)/L$ from $\eqref{eqn:lemma1_2}$, where $\sigma$ is the noise level, and $L$ is the number of patterns.  
\begin{lemma}\label{Lemma: number_of_lucky_neuron}
Suppose the initialization $\{\bfw_k^{(0)}\}_{k=1}^K$ and $\{\bfu_k^{(0)}\}_{k=1}^K$ are generated through \textit{i.i.d.} Gaussian. Then, 
if the number of neurons $K$ is large enough as 
\begin{equation}\label{eqn:lemma1_1}
    K \ge \varepsilon_K^{-2}L^2\log q,
\end{equation}
the fraction of \textit{lucky neuron}, which is defined in \eqref{defi:lucky_filter} and \eqref{defi:lucky_filter_v}, satisfies
\begin{equation}\label{eqn:lemma1_2}
    \rho \ge (1-\varepsilon_K - \frac{L\sigma}{\pi}) \cdot \frac{1}{L}
\end{equation}
\end{lemma}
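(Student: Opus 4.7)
The plan is to reduce the lemma to a per-neuron probability estimate and then concentrate the empirical count over the $K$ independent initializations. By symmetry it suffices to bound $\Pr(k \in \mathcal{K}_+)$ for a single neuron; the analogous bound for $\Pr(k \in \mathcal{K}_-)$ follows by interchanging $\bfp_+ \leftrightarrow \bfp_-$, and the two are merged by a union bound that costs only a factor of two in the failure probability.

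For a single weight $\bfw_k^{(0)} \sim \mathcal{N}(0, \delta^2 I_d)$, the lucky-neuron condition is implied by the margin inequality
\[
\langle \bfw_k^{(0)}, \bfp_+ \rangle \;>\; \langle \bfw_k^{(0)}, \bfp_\ell \rangle + 2\sigma \|\bfw_k^{(0)}\|_2, \quad \forall \ell \neq 1,
\]
where the $2\sigma\|\bfw\|_2$ margin comes from Cauchy--Schwarz applied to the worst-case noise inner product $\langle \bfw, \bfz \rangle$. Since $\{\bfp_\ell\}_{\ell=1}^L$ is orthonormal, the projections $g_\ell := \langle \bfw_k^{(0)}, \bfp_\ell \rangle$ are i.i.d.\ centered Gaussians, and by exchangeability $\Pr(g_1 = \max_\ell g_\ell) = 1/L$. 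I would then control the margin failure using Gaussian anti-concentration: the density of each difference $g_1 - g_\ell$ at zero, combined with a union bound over the $L-1$ competitors, gives a margin-violation probability of at most $\sigma/\pi$. Subtracting yields $\Pr(k \in \mathcal{K}_+) \ge (1 - L\sigma/\pi)/L$.

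Given this per-neuron bound, $|\mathcal{K}_+| = \sum_k \mathbf{1}\{k \in \mathcal{K}_+\}$ is a sum of $K$ independent Bernoullis of mean at least $p := (1 - L\sigma/\pi)/L$. Applying Hoeffding's inequality with deviation $\varepsilon_K/L$,
\[
\Pr\bigl( |\mathcal{K}_+|/K < p - \varepsilon_K/L \bigr) \;\le\; \exp\bigl(-2 K \varepsilon_K^2 / L^2\bigr),
\]
and requiring this (together with its analog for $\mathcal{K}_-$) to be at most $q^{-10}$ reproduces exactly the sample-size condition $K \ge \varepsilon_K^{-2} L^2 \log q$ and delivers the claimed lower bound on $\rho$.

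The main technical obstacle is the margin estimate, since a naive Cauchy--Schwarz introduces a factor of $\|\bfw_k^{(0)}\|_2 \asymp \delta\sqrt{d}$ that would make the noise correction grow with the ambient dimension. To eliminate this $\sqrt{d}$ factor I would decompose both $\bfw$ and every admissible noise $\bfz$ into their pattern-subspace components (in $\mathrm{span}\{\bfp_\ell\}$) and their orthogonal components: the orthogonal part contributes only a pattern-independent offset that cancels in pairwise comparisons between $\bfp_+$ and $\bfp_\ell$, while in the $L$-dimensional pattern subspace the noise and projection scales are matched and the anti-concentration constant sharpens to the $1/\pi$ appearing in the bound.
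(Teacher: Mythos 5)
Your overall route coincides with the paper's: reduce the claim to a per-neuron probability of being lucky, show that probability is at least $\tfrac{1}{L}\bigl(1-\tfrac{L\sigma}{\pi}\bigr)$ by exploiting the symmetry of the Gaussian projections onto the $L$ orthogonal patterns (with the noise entering as an $O(\sigma)$ margin correction), and then apply Hoeffding to the $K$ i.i.d.\ indicators with deviation $\varepsilon_K/L$ — which is exactly where the condition $K\gtrsim \varepsilon_K^{-2}L^2\log q$ comes from in the paper as well. The exchangeability step ($\Pr(g_1=\max_\ell g_\ell)=1/L$) and the concentration step match the paper's proof essentially line for line.

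The one step that does not go through as written is your cancellation claim for the component of the noise orthogonal to $\mathrm{span}\{\bfp_\ell\}$. In the lucky-neuron condition the two sides of the comparison carry \emph{independent} noise vectors (they belong to different neighbor nodes, and the definition takes a $\min$ over $\bfz$ on the $\bfp_+$ side and a separate $\max$ over $\bfz$ on the competitor side), so the adversary aligns the noise with $-\bfw_k^{(0)}$ on one side and with $+\bfw_k^{(0)}$ on the other; both sides then incur the full $\sigma\|\bfw_k^{(0)}\|_2$ term, including the part of $\bfw_k^{(0)}$ outside the pattern subspace, and nothing cancels in the pairwise comparison. The required margin really is $2\sigma\|\bfw_k^{(0)}\|_2$, so your decomposition does not remove the $\sqrt{d}$ factor you were worried about. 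The paper handles this point differently: it divides the inequality through by $\|\bfw_k^{(0)}\|_2$ so that the condition becomes $\cos\theta_1\ge\cos\theta_\ell+2\sigma$ for the angles $\theta_\ell$ between $\bfw_k^{(0)}$ and the patterns, treats those angles as i.i.d.\ uniform, and reads the noise off as a dimension-free angular margin $\Delta\theta\eqsim\sigma$ before integrating over $\theta_1$ to get the $1-\tfrac{L\sigma}{\pi}$ factor. If you keep your projection-based formulation, the fix is to run the anti-concentration argument on the normalized projections $g_\ell/\|\bfw_k^{(0)}\|_2$ (i.e., on the angles), not to argue that the orthogonal noise cancels.
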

Lemmas \ref{Lemma: update_lucky_neuron} and \ref{Lemma: update_lucky_neuron2} illustrate the projections of the weights for a \textit{lucky neuron} in the direction of class relevant patterns and class irrelevant patterns. 
\begin{lemma}\label{Lemma: update_lucky_neuron}
    For lucky neuron $k\in\mathcal{W}{(t)}$, let $\bfw_k^{(t+1)}$ be the next iteration returned by Algorithm \ref{Alg}.
    Then, the neuron weights satisfy the following inequality:
    \begin{enumerate}
        \item In the direction of $\bfp_+$, we have
        $$\vm{\bfw_k^{(t+1)}}{\bfp_+} \ge \vm{\bfw_k^{(t)}}{\bfp_+}  + c_{\eta}\bigg(\alpha - \displaystyle\sigma\sqrt{\frac{(1+r^2)\log q}{|\mathcal{D}|}} \bigg);$$
    
        \item In the direction of $\bfp_-$ or class irrelevant patterns such that for any $\bfp \in \mathcal{P}/\bfp_+$, we have
        $$ \vm{\bfw_k^{(t+1)}}{\bfp} -\vm{\bfw_k^{(t)}}{\bfp}\ge -c_{\eta} - \sigma - c_\eta \cdot \displaystyle\sigma\sqrt{\frac{(1+r^2)\log q}{|\mathcal{D}|}},$$
        and
        $$\vm{\bfw_k^{(t+1)}}{\bfp} -\vm{\bfw_k^{(t)}}{\bfp}
        \le c_{\eta} \cdot (1+\sigma) \cdot \displaystyle\sqrt{\frac{(1+r^2)\log q}{|\mathcal{D}|}}.$$
    \end{enumerate}
\end{lemma}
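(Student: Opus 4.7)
The plan is to take inner products of the SGD update \eqref{eqn: update_g_2} with $\bfp_+$ and with each $\bfp\in\mathcal{P}/\bfp_+$, split the resulting expectation into its $\mathcal{D}_+$ and $\mathcal{D}_-$ parts, and then use the lucky-neuron property \eqref{defi:lucky_filter} together with assumption (A1) to identify a deterministic leading term; the remaining fluctuation should be controlled by a concentration inequality for partially dependent random variables.

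Taking $\vm{\cdot}{\bfp_+}$ of \eqref{eqn: update_g_2} yields
\begin{align*}
\vm{\bfw_k^{(t+1)}}{\bfp_+} - \vm{\bfw_k^{(t)}}{\bfp_+} = c_\eta\Bigl(\mathbb{E}_{v\in\mathcal{D}_+}\vm{\mathcal{M}^{(t)}(v;\bfw_k^{(t)})}{\bfp_+} - \mathbb{E}_{v\in\mathcal{D}_-}\vm{\mathcal{M}^{(t)}(v;\bfw_k^{(t)})}{\bfp_+}\Bigr).
\end{align*}
For $v\in\mathcal{D}_+$, assumption (A1) guarantees that at least one node in $\mathcal{N}(v)$ carries $\bfp_+$, and by the definition of $\alpha$ the sampled neighborhood $\mathcal{N}_s^{(t)}(v)$ contains such a node with probability at least $\alpha$. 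Because $k$ is lucky, whenever this event occurs the argmax in \eqref{eqn: pattern_function} is attained at a $\bfp_+$-node, so $\vm{\mathcal{M}^{(t)}(v;\bfw_k^{(t)})}{\bfp_+} = 1 + \vm{\bfz}{\bfp_+}$ with $|\vm{\bfz}{\bfp_+}|\leq\sigma$; otherwise the argmax pattern is class-irrelevant and, by orthogonality, the projection equals only $\vm{\bfz}{\bfp_+}$. For $v\in\mathcal{D}_-$, (A1) rules out $\bfp_+$ from $\mathcal{N}(v)$ altogether, so the projection always reduces to the noise $\vm{\bfz}{\bfp_+}$. Together these observations produce a deterministic contribution of at least $\alpha$, plus a mean-zero residual of per-sample magnitude $\leq\sigma$.

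I would then invoke Lemma \ref{Lemma: partial_independent_var} (a Chernoff-type bound for partly dependent variables following \citep{J04}) to aggregate the residuals. The variables are coupled in two ways: each $\mathcal{M}^{(t)}(v;\cdot)$ is a max over up to $r$ sampled neighbors, and different $v\in\mathcal{D}$ may share neighbors in $\mathcal{G}$. Careful accounting of the local dependency graph should then yield an aggregate deviation of order $\sigma\sqrt{(1+r^2)\log q/|\mathcal{D}|}$ with probability at least $1-q^{-10}$, producing the first inequality. For $\bfp\in\mathcal{P}/\bfp_+$ the same decomposition applies but without a constant leading term: the lucky-neuron property implies $|\vm{\mathcal{M}^{(t)}(v;\bfw_k^{(t)})}{\bfp}|\leq\sigma$ whenever $\mathcal{M}_{p}^{(t)}\neq\bfp$, so the upper bound follows directly from the same concentration estimate. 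The lower bound must additionally absorb the worst case in which one sample in $\mathcal{D}_-$ could have $\mathcal{M}_{p}^{(t)}=\bfp$ (for example $\bfp=\bfp_-$), contributing at most $-1-\sigma$ before being scaled by $c_\eta$, which is precisely the $-c_\eta-\sigma$ term.

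The main obstacle I anticipate is the concentration step: I must show that the dependency structure jointly induced by the graph topology and by the random edge sampling yields an effective variance proxy scaling as $1+r^2$ rather than $1+R^2$, since this is precisely the quantitative window through which edge sparsification will ultimately enter Theorem \ref{Thm:major_thm}. A secondary subtlety is that the ``probability $\geq\alpha$'' event for $v\in\mathcal{D}_+$ must aggregate into a deterministic $\alpha$ lower bound on the empirical mean over $\mathcal{D}_+$; I expect this to follow from the fact that $\alpha$ is defined pointwise for each $v$, so the lower bound holds after taking expectation over the sampling randomness, with any residual sampling variance folded into the same $\sqrt{(1+r^2)\log q/|\mathcal{D}|}$ correction.
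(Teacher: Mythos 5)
Your overall plan---split the one-step update into a deterministic pattern contribution plus a noise residual, extract the $\alpha$ lower bound from the lucky-neuron property for $v\in\mathcal{D}_+$, and control the residual with the dependent-variable Chernoff bound of Lemma \ref{Lemma: partial_independent_var}---is the same as the paper's ($I_1$/$I_2$ decomposition), and your arguments for statement 1 and for the lower bound in statement 2 are essentially correct.

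The gap is in the upper bound of statement 2 for class-irrelevant $\bfp$. You write that $|\vm{\mathcal{M}^{(t)}(v;\bfw_k^{(t)})}{\bfp}|\le\sigma$ whenever the selected pattern is not $\bfp$ and conclude that the bound ``follows directly from the same concentration estimate.'' But the dangerous terms are precisely those where the argmax \emph{does} land on a $\bfp$-node: each such term contributes roughly $1$, not $O(\sigma)$, and the lucky-neuron property does not exclude them, since under edge sampling a node $v\in\mathcal{D}_+$ whose sampled neighborhood misses $\bfp_+$ (probability up to $1-\alpha$) can have its max-pooling argmax land on $\bfp$. A one-sided bound of the kind you describe therefore only yields $I_1\le \Pr_{v\in\mathcal{D}_+}[\mathcal{M}_{p}^{(t)}(v;\bfw_k^{(t)})=\bfp]$, which can be of order $1-\alpha$, far larger than $\sqrt{(1+r^2)\log q/|\mathcal{D}|}$. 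What is actually required is a cancellation between the empirical frequencies of the event $\{\mathcal{M}_{p}^{(t)}=\bfp\}$ over $\mathcal{D}_+$ and over $\mathcal{D}_-$, and these frequencies are \emph{not} obviously equal because the presence of $\bfp_+$ in positive neighborhoods (which absorbs the argmax for a lucky neuron) is perfectly correlated with the label. The paper resolves this with the substitution map $\mathcal{H}$ in \eqref{eqn: construct1}, which replaces $\bfp_+$ by $\bfp_-$: the substituted pattern function agrees with the original on all terms surviving the reduction to $\mathcal{D}_+/\mathcal{D}_s$ and $\mathcal{D}_-$, and---crucially---is independent of $y_v$ by \eqref{eqn: app_a2}, so that $\mathbb{E}_{v\in\mathcal{D}}\,y_v\vm{\mathcal{M}_{p}^{(t)}(\mathcal{H}(v);\bfw_k^{(t)})}{\bfp}$ can be factored and controlled by the concentration of $\sum_v y_v$. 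Without this decoupling device (or an equivalent coupling argument) your statement-2 upper bound does not follow.
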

\begin{lemma}\label{Lemma: update_lucky_neuron2}
    For lucky neuron $k\in\mathcal{U}{(t)}$, let $\bfu_k^{(t+1)}$ be the  next iteration returned by Algorithm \ref{Alg}.
    Then, the neuron weights satisfy the following inequality:
    \begin{enumerate}
        \item In the direction of $\bfp_-$, we have
        $$\vm{\bfu_k^{(t+1)}}{\bfp_-} \ge \vm{\bfu_k^{(t)}}{\bfp_-}  + c_{\eta}\bigg(\alpha - \displaystyle\sigma\sqrt{\frac{(1+r^2)\log q}{|\mathcal{D}|}} \bigg);$$
    
        \item In the direction of $\bfp_+$ or class irrelevant patterns such that for any $\bfp \in \mathcal{P}/\bfp_-$, we have
        $$ \vm{\bfu_k^{(t+1)}}{\bfp} -\vm{\bfu_k^{(t)}}{\bfp}\ge -c_{\eta} - \sigma - c_\eta \cdot \displaystyle\sigma\sqrt{\frac{(1+r^2)\log q}{|\mathcal{D}|}},$$
        and
        $$\vm{\bfu_k^{(t+1)}}{\bfp} -\vm{\bfu_k^{(t)}}{\bfp}
        \le c_{\eta} \cdot (1+\sigma) \cdot \displaystyle\sqrt{\frac{(1+r^2)\log q}{|\mathcal{D}|}}.$$
    \end{enumerate}
\end{lemma}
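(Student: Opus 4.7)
The plan is to obtain Lemma \ref{Lemma: update_lucky_neuron2} by essentially transcribing the proof of Lemma \ref{Lemma: update_lucky_neuron} under the symmetry swap $\bfp_+\leftrightarrow \bfp_-$, $\mathcal{D}_+\leftrightarrow \mathcal{D}_-$, $\bfW\leftrightarrow \bfU$. The starting point will be the gradient recursion. Since the $\bfu_k$ neurons enter the output $g$ with a negative sign in \eqref{eqn: output1}, the analogue of \eqref{eqn: update_g_2} is
\begin{equation*}
\bfu_k^{(t+1)} = \bfu_k^{(t)} - c_\eta\,\mathbb{E}_{v\in\mathcal{D}_+}\mathcal{M}^{(t)}(v;\bfu_k^{(t)}) + c_\eta\,\mathbb{E}_{v\in\mathcal{D}_-}\mathcal{M}^{(t)}(v;\bfu_k^{(t)}),
\end{equation*}
so that for $k\in\mathcal{U}(t)$ the dominant positive contribution comes from $\mathcal{D}_-$, exactly mirroring the role of $\mathcal{D}_+$ in Lemma \ref{Lemma: update_lucky_neuron}.

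For the first bound (direction $\bfp_-$), I would use the defining property \eqref{defi:lucky_filter_v}: for every $v$ the maximiser satisfies $\mathcal{M}_p^{(t)}(v;\bfu_k^{(t)})=\bfp_-$. For $v\in\mathcal{D}_-$, assumption (A1) together with the sampling guarantee give, with probability at least $\alpha$, a sampled neighbour whose clean feature is exactly $\bfp_-$; projecting the resulting contribution onto $\bfp_-$ and using orthogonality in $\mathcal{P}$ yields the $c_\eta\alpha$ term. For $v\in\mathcal{D}_+$ the pattern selected is orthogonal to $\bfp_-$ (by (A1), $\bfp_-$ does not appear in neighbourhoods of $\mathcal{D}_+$), so its projection on $\bfp_-$ is confined to the noise factor $\bfz$, of norm at most $\sigma$. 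The remaining fluctuation across $\mathcal{D}$ is controlled by the same Chernoff-for-partly-dependent-variables argument already used in Lemma \ref{Lemma: update_lucky_neuron}, contributing the $\sigma\sqrt{(1+r^2)\log q/|\mathcal{D}|}$ correction.

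For the second bound (any $\bfp\in\mathcal{P}\setminus\{\bfp_-\}$), I would split $\mathbb{E}_{v\in\mathcal{D}_-}\mathcal{M}^{(t)}-\mathbb{E}_{v\in\mathcal{D}_+}\mathcal{M}^{(t)}$ into a ``signal'' part (projection onto $\bfp_-$, which does not touch the $\bfp$-direction except through $\bfz$) and a ``noise'' part coming from the class-irrelevant patterns. By (A2) class-irrelevant patterns are identically distributed in $\mathcal{V}_{N+}$ and $\mathcal{V}_{N-}$, so the two empirical means differ only by a sample fluctuation; applying the same concentration inequality as in Lemma \ref{Lemma: update_lucky_neuron} gives the symmetric upper bound $c_\eta(1+\sigma)\sqrt{(1+r^2)\log q/|\mathcal{D}|}$. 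The crude lower bound $-c_\eta-\sigma-c_\eta\sigma\sqrt{(1+r^2)\log q/|\mathcal{D}|}$ follows by noting that a single step can move a weight by at most $c_\eta\|\mathcal{M}^{(t)}\|_2\le c_\eta(1+\sigma)$ and separating out the worst-case signal and noise contributions.

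The main obstacle I anticipate is bookkeeping rather than conceptual novelty: one has to track carefully how $\bfz_v$ enters both the ``lucky'' pattern direction and the maximisers selected in $\mathcal{N}_s^{(t)}(v)$, and verify that the concentration step still applies after edge sampling has broken independence between the selected neighbours and the label distribution. As in the positive-class case, this will be handled by coupling each sampled dataset to a reference set (the analogues of the constructions cited around \eqref{eqn: pattern_function_M}) whose distribution shift from the true sample is bounded by $O(\sqrt{(1+r^2)/|\mathcal{D}|})$. Once that coupling is in place, the symmetry between $\bfp_+$ and $\bfp_-$ directly delivers the three claimed inequalities.
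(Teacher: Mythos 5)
Your proposal is correct and matches the paper's intent exactly: the paper in fact omits a separate proof of this lemma, deriving it from the proof of Lemma \ref{Lemma: update_lucky_neuron} by the symmetry $\bfp_+\leftrightarrow\bfp_-$, $\mathcal{D}_+\leftrightarrow\mathcal{D}_-$, $\bfW\leftrightarrow\bfU$, with the sign flip in the gradient recursion coming from the negative sign on the $\bfu_k$ branch of \eqref{eqn: output1}, precisely as you describe. Your handling of the $\alpha$-term via (A1), the noise projection, and the partly-dependent Chernoff bound reproduces the decomposition into $I_1$ and $I_2$ used in the paper's proof of the positive-class case.
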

Lemmas \ref{Lemma: update_unlucky_neuron} and \ref{Lemma: update_unlucky_neuron2} show the update of weights in an unlucky neuron in the direction of class relevant patterns and class irrelevant patterns.
\begin{lemma}\label{Lemma: update_unlucky_neuron}
        For an unlucky neuron $k\in \mathcal{W}^c(t)$, let $\bfw_k^{(t+1)}$ be the next iteration returned by Algorithm \ref{Alg}.
   Then, the neuron weights satisfy the following inequality. 
    \begin{enumerate}
        \item In the direction of $\bfp_+$, we have
        $$\vm{\bfw_k^{(t+1)}}{\bfp_+} \ge \vm{\bfw_k^{(t)}}{\bfp_+}-c_\eta \cdot \displaystyle\sigma\sqrt{\frac{(1+r^2)\log q}{|\mathcal{D}|}};$$
    
        \item 
        In the direction of $\bfp_-$, we have
        $$ \vm{\bfw_k^{(t+1)}}{\bfp_-} -\vm{\bfw_k^{(t)}}{\bfp_-}
        \ge -c_{\eta} - \sigma  - c_\eta \cdot \displaystyle\sigma\sqrt{\frac{(1+r^2)\log q}{|\mathcal{D}|}},$$
        and
        $$\vm{\bfw_k^{(t+1)}}{\bfp_-} -\vm{\bfw_k^{(t)}}{\bfp_-}
        \le c_{\eta} \cdot\sigma \cdot \displaystyle\sqrt{\frac{(1+r^2)\log q}{|\mathcal{D}|}};$$
    
        \item 
        In the direction of class irrelevant patterns such that any $\bfp\in \mathcal{P}_N$, we have
        $$\big|\vm{\bfw_k^{(t+1)}}{\bfp}-\vm{\bfw_k^{(t)}}{\bfp}\big| \le  \displaystyle c_{\eta}\cdot (1+\sigma) \sqrt{\frac{(1+r^2)\log q}{|\mathcal{D}|}}.$$
    \end{enumerate}
\end{lemma}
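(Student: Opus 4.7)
My plan is to follow the same decomposition used in the proofs of Lemmas~\ref{Lemma: update_lucky_neuron} and~\ref{Lemma: update_lucky_neuron2}, but replace the ``uniform selection of $\bfp_+$'' argument by a worst-case indicator bound, since an unlucky neuron is by definition one for which $\mathcal{M}_p^{(t)}(v;\bfw_k^{(t)})$ is no longer guaranteed to equal $\bfp_+$ for every $v$. Starting from
$$\bfw_k^{(t+1)} - \bfw_k^{(t)} = c_\eta\Big(\mathbb{E}_{v\in\mathcal{D}_+}\mathcal{M}^{(t)}(v;\bfw_k^{(t)}) - \mathbb{E}_{v\in\mathcal{D}_-}\mathcal{M}^{(t)}(v;\bfw_k^{(t)})\Big),$$
I would write $\mathcal{M}^{(t)} = \mathcal{M}_p^{(t)} + \mathcal{M}_z^{(t)}$ with $\mathcal{M}_p^{(t)}\in\mathcal{P}\cup\{\bfzero\}$ and $\|\mathcal{M}_z^{(t)}\|_2\le\sigma$, project onto a target pattern $\bfp\in\mathcal{P}$, and exploit the orthogonality of $\mathcal{P}$ so that $\vm{\mathcal{M}_p^{(t)}(v;\bfw)}{\bfp}\in\{0,1\}$ becomes a Bernoulli-type indicator. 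This reduces each directional bound to averaging $\{0,1\}$ indicators plus a centred noise term.

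For direction $\bfp_+$ (Case~1), assumption (A1) implies that $\bfp_+$ is never selected for any $v\in\mathcal{D}_-$, so the $\mathcal{D}_-$ noiseless contribution vanishes; the $\mathcal{D}_+$ noiseless contribution is a non-negative indicator average, hence dropping it preserves the lower bound, and only the centred noise term remains, concentrating at rate $\sigma\sqrt{(1+r^2)\log q/|\mathcal{D}|}$ via the dependent-variable Chernoff bound alluded to in the sketch. For direction $\bfp_-$ (Case~2), (A1) again gives a zero noiseless contribution on $\mathcal{D}_+$; on $\mathcal{D}_-$ the noiseless contribution is a non-negative indicator average entering the update with a minus sign, so in the worst case every negative node selects $\bfp_-$, producing the $-c_\eta$ term in the lower bound, while the upper bound is dominated purely by the noise because both noiseless sides are non-negative with signs that suppress growth in the $\bfp_-$ direction.

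For the class-irrelevant direction $\bfp\in\mathcal{P}_N$ (Case~3), the critical ingredient is the symmetry (A2): conditional on $\bfw_k^{(t)}$, the indicator $\mathds{1}\{\mathcal{M}_p^{(t)}(v;\bfw_k^{(t)})=\bfp\}$ is identically distributed whether $v$ is drawn from $\mathcal{V}_{N+}$ or $\mathcal{V}_{N-}$, because $\bfp$ is agnostic to the label and the non-label-dependent randomness at $v$ (including the sampled neighbourhood) has the same law on both sides. Consequently the population difference of the two expectations is zero and only the sampling fluctuation survives. Combining the $\{0,1\}$-indicator fluctuation (magnitude $1$) with the noise fluctuation (magnitude $\sigma$) then yields the stated $(1+\sigma)\sqrt{(1+r^2)\log q/|\mathcal{D}|}$ two-sided bound.

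The main obstacle will be obtaining a sharp concentration estimate in the presence of two coupled sources of dependency: the graph induces correlations between nodes with overlapping neighbourhoods, and the edge sampling $\mathcal{N}_s^{(t)}(v)$ adds extra randomness whose dependency across $v$ is again controlled by neighbourhood overlap. I plan to handle this by first conditioning on $\bfw_k^{(t)}$ so that the selection indicators depend only on the data and the sampling step, then invoking a Janson-type Chernoff inequality for partly dependent variables whose dependency graph has maximum degree of order $r$; this is precisely what produces the $\sqrt{1+r^2}$ factor. A secondary technical point is verifying that the noise contributions $\mathcal{M}_z^{(t)}$ do not interact constructively across samples, so that the $\sigma$ prefactor passes cleanly through the same dependency-graph argument — this is the step where the analysis must diverge slightly from the lucky-neuron proof, since no cancellation from a ``dominant $\bfp_+$ signal'' is available to absorb the worst-case indicator mass.
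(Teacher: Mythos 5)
Your decomposition into a noiseless indicator term plus a centred noise term, and your treatment of the $\bfp_+$ and $\bfp_-$ directions, match the paper's proof (your worst-case $-c_\eta$ bound for the $\bfp_-$ lower bound is in fact a slightly more direct route than the paper's induction, and implies the stated inequality). The dependency-graph Chernoff argument with maximum degree of order $r^2$ is also exactly what the paper invokes via its Lemma \ref{Lemma: partial_independent_var}.

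However, your Case 3 rests on a symmetry claim that is false as stated, and this is the crux of the lemma. You assert that, conditional on $\bfw_k^{(t)}$, the indicator $\vm{\mathcal{M}_{p}^{(t)}(v;\bfw_k^{(t)})}{\bfp}$ for $\bfp\in\mathcal{P}_N$ is identically distributed for $v\in\mathcal{V}_{N+}$ and $v\in\mathcal{V}_{N-}$ because ``the sampled neighbourhood has the same law on both sides.'' It does not: by (A1) the neighbourhood of a positive node contains a noisy copy of $\bfp_+$ while that of a negative node contains $\bfp_-$, and both compete inside the max-pooling. Whether $\bfp$ wins the argmax therefore depends on $\vm{\bfw_k^{(t)}}{\bfp_+}$ versus $\vm{\bfw_k^{(t)}}{\bfp_-}$, which are generally unequal for an unlucky neuron; e.g.\ if $\vm{\bfw_k^{(t)}}{\bfp_+}$ is large and $\vm{\bfw_k^{(t)}}{\bfp_-}$ is negative, the indicator is identically zero on $\mathcal{D}_+$ but of order $1/L$ on $\mathcal{D}_-$, so the population difference is $\Theta(1/L)$ rather than $O(1/\sqrt{|\mathcal{D}|})$ and your bound fails. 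The paper closes exactly this hole by introducing the reference data $\mathcal{F}(v)$ in \eqref{eqn: construct2}--\eqref{eqn: lemma4_d}, which zeroes out the class-relevant patterns from the candidate set, and then proving one-sided comparisons: for $y_v=+1$ the true indicator is dominated by the one computed on $\mathcal{F}(v)$, and for $y_v=-1$ equality holds once the set $\mathcal{S}_k(t)$ of negative nodes selecting $\bfp_-$ has emptied (which the paper tracks separately). Only the $\mathcal{F}$-based quantity is genuinely independent of $y_v$, after which your factorisation and concentration step go through. You need this (or an equivalent coupling) to make Case 3 rigorous; the rest of your plan is sound.
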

\begin{lemma}\label{Lemma: update_unlucky_neuron2}
    For an unlucky neuron $k\in\mathcal{U}^c(t)$, let $\bfu_k^{(t+1)}$ be the next iteration returned by Algorithm \ref{Alg}.
    Then, the neuron weights satisfy the following inequality. 
    \begin{enumerate}
        \item In the direction of $\bfp_-$, we have
        $$\vm{\bfu_k^{(t+1)}}{\bfp_-} \ge \vm{\bfu_k^{(t)}}{\bfp_-} -c_\eta \cdot \displaystyle\sigma\sqrt{\frac{(1+r^2)\log q}{|\mathcal{D}|}};$$
    
        \item 
        In the direction of $\bfp_+$, we have
        $$ \vm{\bfu_k^{(t+1)}}{\bfp_+} -\vm{\bfu_k^{(t)}}{\bfp_+}\ge -c_{\eta} - \sigma - c_\eta \cdot \displaystyle\sigma\sqrt{\frac{(1+r^2)\log q}{|\mathcal{D}|}},$$
        and
        $$\vm{\bfu_k^{(t+1)}}{\bfp_+} -\vm{\bfu_k^{(t)}}{\bfp_+}
        \le c_{\eta} \cdot\sigma \cdot \displaystyle\sqrt{\frac{(1+r^2)\log q}{|\mathcal{D}|}};$$
    
        \item 
        In the direction of class irrelevant patterns such that any $\bfp\in \mathcal{P}_N$, we have
        $$\big|\vm{\bfu_k^{(t+1)}}{\bfp}-\vm{\bfu_k^{(t)}}{\bfp}\big| \le  \displaystyle c_{\eta}\cdot (1+\sigma)\cdot  \sqrt{\frac{(1+r^2)\log q}{|\mathcal{D}|}}.$$
    \end{enumerate}
\end{lemma}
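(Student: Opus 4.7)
The plan is to obtain Lemma \ref{Lemma: update_unlucky_neuron2} as the mirror image of Lemma \ref{Lemma: update_unlucky_neuron}, by exploiting the symmetry in the two-branch GNN model \eqref{eqn: output1} under the simultaneous exchange $\bfW \leftrightarrow \bfU$, $\bfp_+ \leftrightarrow \bfp_-$, $\mathcal{D}_+ \leftrightarrow \mathcal{D}_-$, $\mathcal{W}(t) \leftrightarrow \mathcal{U}(t)$. Since the $\bfu_k$-branch enters $g$ with a minus sign, the gradient w.r.t.\ $\bfu_k$ has opposite sign to that w.r.t.\ $\bfw_k$, so the analog of \eqref{eqn: update_g_2} reads
\begin{equation*}
\bfu_k^{(t+1)} - \bfu_k^{(t)} = c_\eta \, \mathbb{E}_{v\in\mathcal{D}_-}\mathcal{M}^{(t)}(v;\bfu_k^{(t)}) - c_\eta \, \mathbb{E}_{v\in\mathcal{D}_+}\mathcal{M}^{(t)}(v;\bfu_k^{(t)}).
\end{equation*}
Assumption (A1) guarantees that $\bfp_-$ never appears in the neighborhood of any $v\in\mathcal{D}_+$ (symmetric to $\bfp_+$ being absent from neighborhoods of $v\in\mathcal{D}_-$), and \eqref{eqn: app_a2} makes each class-irrelevant pattern $\bfp\in\mathcal{P}_N$ identically distributed across $\mathcal{D}_+$ and $\mathcal{D}_-$. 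Thus every structural invariant used for Lemma \ref{Lemma: update_unlucky_neuron} transports verbatim to the $\bfu$-branch with the relabelling above.

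I would then project the displayed update onto each pattern and bound the three contributions separately. For Part 1, since no $v\in\mathcal{D}_+$ carries $\bfp_-$ in its neighborhood, $\vm{\mathcal{M}^{(t)}(v;\bfu_k)}{\bfp_-}=\vm{\mathcal{M}_z^{(t)}(v;\bfu_k)}{\bfp_-}\in[-\sigma,\sigma]$, while for $v\in\mathcal{D}_-$ this inner product is nonnegative up to noise $\sigma$; subtracting and concentrating the empirical $\mathcal{D}_+$-average to its expectation yields the $-c_\eta\sigma\sqrt{(1+r^2)\log q/|\mathcal{D}|}$ term. For Part 2, the upper bound $c_\eta\sigma\sqrt{(1+r^2)\log q/|\mathcal{D}|}$ uses that for $v\in\mathcal{D}_-$ only noise contributes in direction $\bfp_+$, so after subtracting the $\mathbb{E}_{\mathcal{D}_+}$ term the remaining contribution is a pure concentration error; the lower bound $-c_\eta-\sigma-c_\eta\sigma\sqrt{(1+r^2)\log q/|\mathcal{D}|}$ allows for the worst case where every $v\in\mathcal{D}_+$ selects $\bfp_+$ with unit coefficient. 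For Part 3, \eqref{eqn: app_a2} forces the populational expectations in the two class terms to cancel, so only the two-sided empirical concentration remains, bounded by $c_\eta(1+\sigma)\sqrt{(1+r^2)\log q/|\mathcal{D}|}$ using $|\vm{\mathcal{M}^{(t)}(v;\bfu_k)}{\bfp}|\le 1+\sigma$.

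The main obstacle is the concentration step. The summands $\{\mathcal{M}^{(t)}(v;\bfu_k^{(t)})\}_{v\in\mathcal{D}_\pm}$ are neither independent nor identically distributed across all nodes: neighborhoods in $\mathcal{G}$ can overlap, so label-consistent structure couples nearby $v$'s, and the sampling step $\mathcal{N}_s^{(t)}(v)$ adds further correlated randomness with a local radius controlled by $r$. Hence the standard Chernoff/Hoeffding bound is insufficient, and I would invoke a partly-dependent deviation inequality (of the Janson type that the paper uses under Lemma \ref{Lemma: partial_independent_var}), where the dependency degree scales like $1+r^2$; this is precisely what produces the characteristic factor $\sqrt{(1+r^2)\log q/|\mathcal{D}|}$ appearing in all three parts. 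Once this concentration is in hand, the three conclusions follow by bookkeeping the $\pm \sigma$ noise terms.
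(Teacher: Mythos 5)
Your proposal is correct and matches the paper's treatment: the paper proves Lemma \ref{Lemma: update_unlucky_neuron} in detail (decomposing the projected update into a noiseless term $I_3$ and a noise term $I_4$, bounding $I_4$ and the class-irrelevant part of $I_3$ via the partly-dependent Chernoff bound of Lemma \ref{Lemma: partial_independent_var} with dependency degree $1+r^2$, and using (A1) to kill the cross-class pattern term), and it obtains Lemma \ref{Lemma: update_unlucky_neuron2} exactly by the symmetry $\bfW\leftrightarrow\bfU$, $\bfp_+\leftrightarrow\bfp_-$, $\mathcal{D}_+\leftrightarrow\mathcal{D}_-$ that you describe, including the sign flip in the $\bfu$-branch gradient. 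Your identification of the dependence structure as the main obstacle and of the Janson-type inequality as its resolution is precisely the paper's route, so no gap remains.
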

Lemma \ref{Lemma: number_of_lucky_filter} indicates the lucky neurons at initialization are still the lucky neuron during iterations, and the number of lucky neurons is at least the same as the the one at initialization.  
\begin{lemma}\label{Lemma: number_of_lucky_filter}
    Let $\mathcal{W}(t)$, $\mathcal{U}(t)$ be the set of the \textit{lucky neuron at $t$-th iteration} in \eqref{eqn: defi_lucky}.
    Then, we have 
    \begin{equation}
        \mathcal{W}(t)\subseteq \mathcal{W}(t+1), \qquad \mathcal{U}(t)\subseteq \mathcal{U}(t+1)    
    \end{equation}
    if the number of samples $|\mathcal{D}|\gtrsim \alpha^{-2}(1+r^2)\log q.$
\end{lemma}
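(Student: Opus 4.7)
The plan is to prove the two containments $\mathcal{W}(t)\subseteq\mathcal{W}(t+1)$ and $\mathcal{U}(t)\subseteq\mathcal{U}(t+1)$ by induction on $t$. The two cases are completely symmetric (swap $\bfp_+\leftrightarrow\bfp_-$ and $\bfW\leftrightarrow\bfU$, using Lemma \ref{Lemma: update_lucky_neuron2} instead of Lemma \ref{Lemma: update_lucky_neuron}), so I would only argue the first. The overall strategy is to translate the combinatorial $\argmax$ condition defining $\mathcal{W}(t)$ into an analytic projection-gap condition, show that Lemma \ref{Lemma: update_lucky_neuron} forces this gap to widen by $\Omega(c_\eta\alpha)$ per step, and check that the widening outpaces the (much smaller) growth of the noise sensitivity $\sigma\|\bfw_k^{(t)}\|_2$.

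First, I would reformulate membership in $\mathcal{W}(t)$ as a projection inequality. Writing $\bfx_n=\bfp_n+\bfz_n$ with $\|\bfz_n\|_2\le\sigma$, one has $\vm{\bfw_k^{(t)}}{\bfx_n}=\vm{\bfw_k^{(t)}}{\bfp_n}+\vm{\bfw_k^{(t)}}{\bfz_n}$, and Cauchy--Schwarz bounds the noise term by $\sigma\|\bfw_k^{(t)}\|_2$. Consequently, $k\in\mathcal{W}(t)$ is implied by the uniform margin
\[
\vm{\bfw_k^{(t)}}{\bfp_+}-\max_{\bfp\in\mathcal{P}/\bfp_+}\vm{\bfw_k^{(t)}}{\bfp}\;>\;2\sigma\,\|\bfw_k^{(t)}\|_2,
\]
which is the property I will propagate forward in $t$. (For the base case $t=0$ this follows from the Gaussian initialization together with the counting in Lemma \ref{Lemma: number_of_lucky_neuron}; afterward one only needs the induction step.)

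Next, I would invoke Lemma \ref{Lemma: update_lucky_neuron}: for $k\in\mathcal{W}(t)$, the projection onto $\bfp_+$ grows by at least $c_\eta\bigl(\alpha-\sigma\sqrt{(1+r^2)\log q/|\mathcal{D}|}\bigr)$ while the projection onto any other $\bfp\in\mathcal{P}/\bfp_+$ grows by at most $c_\eta(1+\sigma)\sqrt{(1+r^2)\log q/|\mathcal{D}|}$. Subtracting,
\[
\bigl[\vm{\bfw_k^{(t+1)}}{\bfp_+}-\vm{\bfw_k^{(t+1)}}{\bfp}\bigr]-\bigl[\vm{\bfw_k^{(t)}}{\bfp_+}-\vm{\bfw_k^{(t)}}{\bfp}\bigr]\;\ge\;c_\eta\Bigl[\alpha-(1+2\sigma)\sqrt{(1+r^2)\log q/|\mathcal{D}|}\Bigr],
\]
and under the hypothesized sample complexity $|\mathcal{D}|\gtrsim\alpha^{-2}(1+r^2)\log q$ with a sufficient absolute constant, combined with $\sigma<1/L$, the bracket is at least $\alpha/2$. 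So the signed gap strictly increases each iteration by $\Omega(c_\eta\alpha)$.

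Finally, I would control $\|\bfw_k^{(t)}\|_2$: since each stochastic gradient is bounded in norm by $\max_n\|\bfx_n\|_2\le 1+\sigma$, we have $\|\bfw_k^{(t+1)}\|_2\le\|\bfw_k^{(t)}\|_2+c_\eta(1+\sigma)$, so $2\sigma\|\bfw_k^{(t+1)}\|_2$ grows per step by at most $2c_\eta\sigma(1+\sigma)$. Comparing the two growth rates, the margin $(\vm{\bfw_k^{(t)}}{\bfp_+}-\vm{\bfw_k^{(t)}}{\bfp})-2\sigma\|\bfw_k^{(t)}\|_2$ increases by at least $c_\eta[\alpha/2-2\sigma(1+\sigma)]$, which is positive under the regime $\sigma<1/L$ (with the hidden constants chosen so that $\alpha$ dominates $\sigma$). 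Hence the Step-1 margin condition is inherited at time $t+1$, completing the induction. The delicate point --- the likely main obstacle --- is closing this last inequality tightly: if $\sigma$ is not small relative to $\alpha$, a per-step comparison is insufficient, and one would instead sum from $t=0$ and use that both the signed gap and $\|\bfw_k^{(t)}\|_2$ grow at most linearly, with the signed gap having the larger slope, to maintain the margin cumulatively rather than step-by-step.
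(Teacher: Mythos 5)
Your overall route is the same as the paper's: reduce membership in $\mathcal{W}(t)$ to a projection-margin condition and use Lemma \ref{Lemma: update_lucky_neuron} to show that, under the stated sample complexity, the $\bfp_+$-projection of a lucky neuron outgrows every other projection. The paper runs this argument cumulatively from $t=0$ (it directly shows $\mathcal{W}(0)\subseteq\mathcal{W}(t)$ by summing the per-step bounds, obtaining $\vm{\bfw_k^{(t)}}{\bfp_+}\gtrsim c_\eta\alpha t$ versus $\vm{\bfw_k^{(t)}}{\bfp}\lesssim c_\eta(1+\sigma)\sqrt{(1+r^2)\log q/|\mathcal{D}|}\,t$, with the sample-complexity assumption making the first dominate), and it encodes the noise via the multiplicative margin $(1-\sigma)\vm{\bfw_k^{(t)}}{\bfp_+}\ge(1+\sigma)\vm{\bfw_k^{(t)}}{\bfp}$ rather than your additive margin $2\sigma\|\bfw_k^{(t)}\|_2$. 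Your per-step induction followed by the retreat to a cumulative sum is therefore the right instinct, and the first two steps of your argument are exactly the paper's.

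The genuine gap is in the noise bookkeeping, and your proposed fallback does not repair it. With the crude bound $\|\bfw_k^{(t+1)}\|_2\le\|\bfw_k^{(t)}\|_2+c_\eta(1+\sigma)$, the term $2\sigma\|\bfw_k^{(t)}\|_2$ grows at rate up to $2c_\eta\sigma(1+\sigma)$ per step, while your lower bound on the gap's growth is only $c_\eta\alpha/2$; nothing in the hypotheses ties $\alpha$ to $\sigma$ ($\alpha$ can be as small as $r/R$, while $\sigma$ is only constrained by $\sigma<1/L$), so the claim that ``the signed gap has the larger slope'' is exactly the assertion that can fail, whether you compare per step or cumulatively. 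The repair is to sharpen the norm bound for a lucky neuron: by Lemma \ref{Lemma: update_lucky_neuron}, every component of $\bfw_k^{(t)}$ other than the $\bfp_+$-direction grows at rate at most $c_\eta(1+\sigma)\sqrt{(1+r^2)\log q/|\mathcal{D}|}$, which the sample-complexity assumption makes $O(c_\eta\alpha/L)$, so $\|\bfw_k^{(t)}\|_2\le\vm{\bfw_k^{(t)}}{\bfp_+}+O(c_\eta\alpha t)$ and hence $2\sigma\|\bfw_k^{(t)}\|_2\le 2\sigma\,\vm{\bfw_k^{(t)}}{\bfp_+}+\text{(lower order)}$. The margin condition then reads $(1-2\sigma)\vm{\bfw_k^{(t)}}{\bfp_+}\ge\vm{\bfw_k^{(t)}}{\bfp}+\text{(lower order)}$, which holds because $\sigma<1/L\le 1/2$ and because the cumulative bounds make $\vm{\bfw_k^{(t)}}{\bfp_+}$ dominate $\vm{\bfw_k^{(t)}}{\bfp}$ by a constant factor; this coupling of $\|\bfw_k^{(t)}\|_2$ to the $\bfp_+$-projection is precisely what the paper's $(1+\sigma)/(1-\sigma)$ formulation buys implicitly.
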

Lemma \ref{lemma:pruning} shows that the magnitudes of some lucky neurons are always larger than those of all the unlucky neurons. 
\begin{lemma}\label{lemma:pruning}
    Let $\{{\bfW}^{(t')}\}_{t'=1}^{T'}$ be iterations returned by Algorithm \ref{Alg} before pruning. Then, let $\mathcal{W}^c(t')$ be the set of unlucky neuron at $t'$-th iteration,
    then we have 
    \begin{equation}
        \vm{{\bfw}_{k_1}^{(t')}}{{\bfw}_{k_1}^{(t')}} < \vm{{\bfw}_{k_2}^{(t')}}{{\bfw}_{k_2}^{(t')}}
    \end{equation}
    for any $k_1\in\mathcal{W}(t')$ and $k_2\in \mathcal{W}^c(0)$.
\end{lemma}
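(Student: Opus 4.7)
The plan is to expand each squared norm in the orthonormal basis $\mathcal{P}=\{\bfp_1,\ldots,\bfp_L\}$ augmented by the orthogonal complement of $\mathrm{span}(\mathcal{P})$, so that
$$\|\bfw_k^{(t')}\|_2^2 \;=\; \sum_{\ell=1}^L \langle \bfw_k^{(t')}, \bfp_\ell\rangle^2 + \|\bfw_{k,\perp}^{(t')}\|_2^2,$$
and then invoke the per-coordinate bounds of Lemmas~\ref{Lemma: update_lucky_neuron} and~\ref{Lemma: update_unlucky_neuron}. Since the gradient direction $\mathcal{M}^{(t)}(v;\bfw_k)=\bfp_v+\bfz_v$ is a pattern plus norm-$\sigma$ noise, only the noise part contributes to $\bfw_{k,\perp}$, so that component essentially stays at its Gaussian initialization of scale $\delta\sqrt{d}$ for both neurons and cancels up to lower-order drift in any comparison.

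For the lucky neuron $k_1\in\mathcal{W}(t')$ I will use Lemma~\ref{Lemma: update_lucky_neuron} to produce the linear lower bound $\langle \bfw_{k_1}^{(t')}, \bfp_+\rangle \gtrsim c_\eta\bigl(\alpha-\sigma\sqrt{(1+r^2)\log q/|\mathcal{D}|}\bigr)\,t'$; squaring gives a contribution of order $c_\eta^2\alpha^2(t')^2$ to $\|\bfw_{k_1}^{(t')}\|_2^2$ along $\bfp_+$. The other $L-1$ projections are each of magnitude at most $c_\eta(1+\sigma)\sqrt{(1+r^2)\log q/|\mathcal{D}|}\cdot t'$ by the same lemma, so under condition (\textbf{C1}) of Theorem~\ref{Thm:major_thm} they are strictly lower order. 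For the initially-unlucky neuron $k_2\in\mathcal{W}^c(0)$, Lemma~\ref{Lemma: update_unlucky_neuron} gives $|\langle \bfw_{k_2}^{(t')}-\bfw_{k_2}^{(0)},\bfp_\ell\rangle|\le c_\eta(1+\sigma)\sqrt{(1+r^2)\log q/|\mathcal{D}|}\cdot t'$ for every $\ell$ (with an additive $-c_\eta-\sigma$ slack in the $\bfp_-$ direction that is absorbed as a constant), so $\|\bfw_{k_2}^{(t')}\|_2^2$ remains within $O\bigl(L(t')^2 c_\eta^2(1+\sigma)^2(1+r^2)\log q/|\mathcal{D}|\bigr)$ of its initial $\Theta(\delta^2 d)$ value. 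Putting these two expansions together produces a clean order-wise gap driven by the $c_\eta^2\alpha^2(t')^2$ term.

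The main obstacle is reconciling the direction of the inequality. The bounds above naturally yield
$$\|\bfw_{k_1}^{(t')}\|_2^2 - \|\bfw_{k_2}^{(t')}\|_2^2 \;\gtrsim\; c_\eta^2\alpha^2(t')^2 \;-\; O\!\bigl(L(t')^2 c_\eta^2(1+r^2)\log q/|\mathcal{D}|\bigr),$$
which under (\textbf{C1}) is strictly positive once $t'$ exceeds a small constant $C$. This gives the lucky neuron the \emph{larger} squared norm, which is also exactly the inequality that Proposition~\ref{pro: pruning} and the proof of Theorem~\ref{Thm:major_thm} actually invoke: lucky neurons must have higher magnitude so they survive the magnitude-based pruning step. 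I therefore read the lemma as containing an accidental swap of $k_1$ and $k_2$ and would prove the reversed form $\|\bfw_{k_1}^{(t')}\|_2^2 > \|\bfw_{k_2}^{(t')}\|_2^2$ with $k_1$ lucky and $k_2$ unlucky, which is what downstream results need. The delicate point is the short early-$t'$ window before the $c_\eta^2\alpha^2(t')^2$ term outgrows the Gaussian initialization; there I would close the argument by combining Lemma~\ref{Lemma: number_of_lucky_filter} (a lucky neuron remains lucky, so its $\bfp_+$ coordinate never decreases by much) with the symmetric Gaussian scale $\delta\sqrt{d}$ at initialization.
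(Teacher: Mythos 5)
Your proposal is correct and follows essentially the same route as the paper's proof: lower-bound the lucky neuron's norm by its projection onto $\bfp_+$, which grows at rate roughly $c_\eta\alpha$ per iteration by Lemma~\ref{Lemma: update_lucky_neuron}, upper-bound the unlucky neuron's norm through its $L$ pattern projections, each of order $c_\eta(1+\sigma)\sqrt{(1+r^2)\log q/|\mathcal{D}|}\,t'$ by Lemma~\ref{Lemma: update_unlucky_neuron}, and conclude under $|\mathcal{D}|\gtrsim \alpha^{-2}(1+r^2)L^2\log q$ (your treatment of the squared-norm decomposition, the perpendicular component, and the initialization scale is if anything more careful than the paper's). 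Your reading of the inequality as accidentally swapped is confirmed by the paper itself: its proof concludes $\|\bfw_{k_2}^{(t')}\|_2<\|\bfw_{k_1}^{(t')}\|_2$ with $k_1$ lucky and $k_2$ unlucky, i.e., exactly the reversed form that Proposition~\ref{pro: pruning} and the main theorem invoke.
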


Lemma \ref{Lemma: partial_independent_var} shows the moment generation bound for partly dependent random variables, which can be used to characterize the Chernoff bound of the graph structured data.
\begin{lemma}[Lemma 7, \cite{ZWLC20_2}]\label{Lemma: partial_independent_var}
	Given a set of $\mathcal{X}=\{ x_n \}_{n=1}^N$ that contains $N$ partly dependent but identical distributed random variables. For each $n\in [N]$, suppose $x_n$ is dependent with at most $d_{\mathcal{X}}$ random variables  in $\mathcal{X}$ (including $x_n$ itself), and the moment generate function of $x_n$ satisfies $\mathbb{E}_{x_n} e^{sx_n}\le e^{Cs^2} $ for some constant $C$ that may depend on the distribution of $x_n$. Then, the moment generation function of $\sum_{n=1}^N x_n$ is bounded as 
	\begin{equation}
	\mathbb{E}_{\mathcal{X}} e^{s\sum_{n=1}^N x_n} \le e^{Cd_{\mathcal{X}}N s^2}.
	\end{equation}  
\end{lemma}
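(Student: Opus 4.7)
\textbf{Proof proposal for Lemma \ref{Lemma: partial_independent_var}.} The plan is to decouple the sum into groups of mutually independent summands by coloring the dependency graph, and then glue the groups back together with Hölder's inequality applied to the product of exponentials. This is the standard ``chromatic decomposition'' strategy used in Janson-type large deviation bounds.

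First, I would define the dependency graph $G=(\mathcal{X},E)$ by putting an edge between $x_n$ and $x_m$ whenever they are statistically dependent. By hypothesis every vertex is dependent with at most $d_{\mathcal{X}}$ variables including itself, so the maximum degree of $G$ is at most $d_{\mathcal{X}}-1$. A greedy coloring then produces a proper vertex coloring using $\chi\le d_{\mathcal{X}}$ colors, and the corresponding color classes $V_1,\ldots,V_\chi$ partition $[N]$ into sets such that within each $V_i$ the variables $\{x_n\}_{n\in V_i}$ are mutually independent. Writing
$$\mathbb{E}\, e^{s\sum_{n=1}^{N} x_n}=\mathbb{E}\prod_{i=1}^{\chi} e^{s\sum_{n\in V_i} x_n},$$
I would apply the generalized Hölder inequality with equal exponents $\chi$ to obtain
$$\mathbb{E}\, e^{s\sum_{n=1}^{N} x_n}\le \prod_{i=1}^{\chi}\Bigl(\mathbb{E}\, e^{s\chi\sum_{n\in V_i} x_n}\Bigr)^{1/\chi}.$$

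Next, within each color class independence lets me factor the MGF, and the single-variable sub-Gaussian-type bound $\mathbb{E}\,e^{sx_n}\le e^{Cs^2}$ (applied at the inflated argument $s\chi$) gives
$$\mathbb{E}\, e^{s\chi\sum_{n\in V_i} x_n}=\prod_{n\in V_i}\mathbb{E}\, e^{s\chi x_n}\le \prod_{n\in V_i} e^{C(s\chi)^2}=e^{C s^2 \chi^2 |V_i|}.$$
Taking the $1/\chi$ power and multiplying over $i$ yields
$$\mathbb{E}\, e^{s\sum_{n=1}^N x_n}\le \prod_{i=1}^{\chi} e^{C s^2 \chi |V_i|}=e^{C s^2 \chi \sum_i |V_i|}=e^{Cs^2 \chi N}\le e^{C d_{\mathcal{X}} N s^2},$$
which is exactly the claimed bound.

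The main technical point to check, rather than an obstacle per se, is that the assumed bound $\mathbb{E}\,e^{sx_n}\le e^{Cs^2}$ must be valid at the enlarged parameter $s\chi$; this is automatic if the hypothesis is phrased as a sub-Gaussian bound valid for all real $s$, which is the natural reading here. A secondary subtlety is that ``partly dependent'' must mean joint independence within each color class, not merely pairwise independence; this is precisely what comes out of the coloring construction because a proper color class contains no edges of $G$, so the defining property of the dependency graph forces the corresponding variables to be jointly independent. With these two points handled, the proof is essentially the Hölder--coloring sandwich above.
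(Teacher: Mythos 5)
Your proof is correct, but it takes a genuinely different route from the paper's. You partition the index set via a proper (integer) coloring of the dependency graph into $\chi\le d_{\mathcal{X}}$ independent classes and then glue the classes with the generalized H\"older inequality at equal exponents, paying a factor $\chi$ from the inflated argument $s\chi$ and another implicit factor from summing $|V_i|$ over classes; the net result $e^{Cs^2\chi N}\le e^{Cd_{\mathcal{X}}Ns^2}$ matches the claim. The paper instead follows Janson's fractional-cover formulation: it takes a weighted family $\{(\mathcal{X}_j,w_j)\}$ of independent subsets with $\sum_j w_j\le d_{\mathcal{X}}$ that exactly covers the sum, applies Jensen's inequality to the convex combination $e^{\sum_j p_j (sw_j/p_j)X_j}$, optimizes the free weights $p_j\propto w_j|\mathcal{X}_j|^{1/2}$, and closes with Cauchy--Schwarz to get $\bigl(\sum_j w_j|\mathcal{X}_j|^{1/2}\bigr)^2\le d_{\mathcal{X}}N$. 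Your argument is the special case where all cover weights equal one, and it is more elementary and self-contained (no appeal to Janson's equations (2.1)--(2.2)); the paper's fractional version is in principle slightly sharper, replacing the chromatic number by the fractional chromatic number, though both are dominated by $d_{\mathcal{X}}$ so the stated bound is identical. Your two caveats are well placed and are shared by the paper: the sub-Gaussian hypothesis must hold for all real $s$ (it is used at $sw_j/p_j$ in the paper just as at $s\chi$ in your proof), and ``partly dependent'' must be read in the dependency-graph sense that edge-free vertex sets are jointly, not merely pairwise, independent.
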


\section{Proof of Main Theorem}\label{app:proof_of_main_theorem}
In what follows, we present the formal version of the main theorem (Theorem \ref{Thm:appendix}) and its proof. 
\begin{theorem} \label{Thm:appendix}
Let $\varepsilon_N\in(0 , 1)$ and $\varepsilon_K \in (0, 1- \frac{\sigma L}{\pi})$ be some positive constant. 
Then, suppose the number of training samples satisfies 
\begin{equation}\label{Thm2:N}
    |\mathcal{D}| \gtrsim  \varepsilon_N^{-2}\cdot \alpha^{-2}\cdot (1-\beta)^2\cdot (1+\sigma)^2\cdot\big(1-\varepsilon_K - \frac{\sigma L}{\pi}\big)^{-2}\cdot(1+r^2)\cdot L^2\cdot\log q,
\end{equation}
and the number of neurons satisfies
\begin{equation}\label{Thm2:K}
    K\gtrsim \varepsilon_K^{-2}  L^2\log q,
\end{equation}
for some positive  constant $q$.
Then,
after $T$ number of iterations such that
\begin{equation}\label{Thm2:T}
    T \gtrsim  \frac{c_\eta \cdot (1-\beta)\cdot L}{\alpha\cdot (1-\varepsilon_N-\sigma)\cdot  (1-\varepsilon_K - \sigma L/ \pi) } ,
\end{equation}
the generalization error function in \eqref{eqn: generalization_error} satisfies
\begin{equation}
    I\big(g(\bfW^{(T)},\bfU^{(T)})\big) = 0
\end{equation}
with probability at least $1- q^{-C}$ for some constant $C>0$.
\end{theorem}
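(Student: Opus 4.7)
The strategy is to show that, with probability at least $1 - q^{-C}$, every node $v \in \mathcal{V}$ satisfies $y_v \cdot g(\bfM_+ \odot \bfW^{(T)}, \bfM_- \odot \bfU^{(T)}; \bfX_{\mathcal{N}(v)}) > 1$, which by \eqref{eqn: generalization_error} delivers $I = 0$. By the symmetry between the roles of $(\bfW, \bfp_+)$ and $(\bfU, \bfp_-)$, it suffices to treat $y_v = +1$. The six lemmas of Appendix \ref{sec: lemma} are combined as follows: Lemma \ref{Lemma: number_of_lucky_neuron} supplies the lower bound $\rho := (1 - \varepsilon_K - L\sigma/\pi)/L$ on the initial fraction of lucky neurons; Lemma \ref{Lemma: number_of_lucky_filter} makes these sets persistent over iterations; Lemma \ref{lemma:pruning} ensures magnitude pruning preserves them; and Lemmas \ref{Lemma: update_lucky_neuron}--\ref{Lemma: update_unlucky_neuron2} give quantitative per-step weight growth.

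\textbf{Carrying out the plan.} First invoke Lemma \ref{Lemma: number_of_lucky_neuron} under \eqref{Thm2:K} to obtain $|\mathcal{W}(0)|/K \ge \rho$ and $|\mathcal{U}(0)|/K \ge \rho$ with high probability. The sample-complexity bound \eqref{Thm2:N} subsumes the hypothesis $|\mathcal{D}| \gtrsim \alpha^{-2}(1+r^2)\log q$ of Lemma \ref{Lemma: number_of_lucky_filter}, so $\mathcal{W}(0) \subseteq \mathcal{W}(t)$ and $\mathcal{U}(0) \subseteq \mathcal{U}(t)$ across both the pre-training iterates $\{\bfW^{(t')}\}_{t'=0}^{T'}$ and the re-training iterates $\{\bfW^{(t)}\}_{t=0}^{T}$. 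Lemma \ref{lemma:pruning} applied at pre-training iterate $T'$ gives $\|\bfw_{k_1}^{(T')}\|_2 > \|\bfw_{k_2}^{(T')}\|_2$ for $k_1 \in \mathcal{W}(0)$, $k_2 \in \mathcal{W}^c(0)$; combining with the assumption $\beta < 1 - 1/L$ and the bound $|\mathcal{W}(0)|/K \ge \rho \ge 1/L$ up to vanishing corrections, the smallest $\beta K$ neurons lie entirely inside $\mathcal{W}^c(0)$, so $\mathcal{W}(0) \subseteq \mathcal{K}_{\beta+}$, and symmetrically $\mathcal{U}(0) \subseteq \mathcal{K}_{\beta-}$. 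Telescoping Lemma \ref{Lemma: update_lucky_neuron} over $T$ re-training steps produces, for every $i \in \mathcal{W}(0)$,
\begin{equation*}
\langle \bfw_i^{(T)}, \bfp_+\rangle \;\ge\; c_\eta \big(\alpha - \sigma\sqrt{(1+r^2)\log q / |\mathcal{D}|}\big)\, T,
\end{equation*}
while telescoping Lemmas \ref{Lemma: update_lucky_neuron2} and \ref{Lemma: update_unlucky_neuron2} along any fixed $\bfp \in \mathcal{P}/\bfp_-$ gives
\begin{equation*}
\big|\langle \bfu_j^{(T)}, \bfp\rangle\big| \;\le\; c_\eta (1+\sigma)\sqrt{(1+r^2)\log q / |\mathcal{D}|}\cdot T, \quad \forall\, j \in \mathcal{K}_{\beta-}.
\end{equation*}

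\textbf{Final inequality.} For $y_v = +1$, assumption (A1) forbids any $\bfp_-$-neighbor of $v$ and supplies some $u^* \in \mathcal{N}(v)$ with $\bfp_{u^*} = \bfp_+$. Restricting the positive aggregator sum to $\mathcal{W}(0)$ and the inner max to $u^*$, and upper-bounding the negative aggregator sum by $\max_{\bfp \in \mathcal{P}/\bfp_-}|\langle \bfu_j^{(T)}, \bfp + \bfz\rangle|$ with the $\|\bfz\|_2 \le \sigma$ slack absorbed into multiplicative $(1 \pm \sigma)$ factors, I obtain
\begin{equation*}
g \;\ge\; c_\eta T \Big[\,\rho(1-\sigma)\big(\alpha - \sigma\sqrt{\tfrac{(1+r^2)\log q}{|\mathcal{D}|}}\big) \;-\; (1-\beta)(1+\sigma)\sqrt{\tfrac{(1+r^2)\log q}{|\mathcal{D}|}}\,\Big].
\end{equation*}
Condition \eqref{Thm2:N} is calibrated precisely so that the second term in the bracket is at most $\varepsilon_N$ times the first, reducing the bracket to $\gtrsim (1-\varepsilon_N - \sigma)\rho\alpha$; condition \eqref{Thm2:T} then forces $c_\eta T (1-\varepsilon_N-\sigma)\rho\alpha > 1$. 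A union bound over $v \in \mathcal{V}$ is absorbed into the $\log q$ factor by enlarging $q$.

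\textbf{Main obstacle.} The delicate balance is in this final inequality: the per-iteration signal from a lucky neuron is only $c_\eta \alpha$, whereas in each of the $(1-\beta)K$ unlucky $\bfU$-neuron directions the noise is $c_\eta(1+\sigma)\sqrt{(1+r^2)/|\mathcal{D}|}$, so the effective noise-to-signal ratio is $(1-\beta)L(1+\sigma)\sqrt{(1+r^2)/|\mathcal{D}|}/\alpha$, which is exactly what \eqref{Thm2:N} must make smaller than $\varepsilon_N$. This is the single place where the sampling rate $\alpha$, the pruning rate $\beta$, and the graph degree $r$ enter multiplicatively, and it is the source of the $(1-\beta)^2(1+r^2)/\alpha^2$ factor in the sample complexity. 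The only substantive technical content beyond the lemmas is the partly-dependent Chernoff bound (Lemma \ref{Lemma: partial_independent_var}) that produces the $\sqrt{(1+r^2)/|\mathcal{D}|}$ factor, since node labels in graph-structured data are correlated rather than i.i.d., together with the bookkeeping needed to carry $\bfz_v$ through Lemmas \ref{Lemma: update_lucky_neuron}--\ref{Lemma: update_unlucky_neuron2}, which is controlled by the hypothesis $\sigma < 1/L$.
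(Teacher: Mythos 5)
Your proposal follows essentially the same route as the paper's own proof in Appendix E: the same four-way combination of Lemma \ref{Lemma: number_of_lucky_neuron} (initial lucky-neuron fraction), Lemma \ref{Lemma: number_of_lucky_filter} (persistence), Lemma \ref{lemma:pruning} (pruning preserves lucky neurons), and the telescoped growth bounds of Lemmas \ref{Lemma: update_lucky_neuron}--\ref{Lemma: update_unlucky_neuron2}, assembled into the same signal-versus-noise inequality for a node with $y_v=+1$ and closed by symmetry. Your identification of where \eqref{Thm2:N} enters (forcing the unlucky-neuron noise to be at most $\varepsilon_N$ times the lucky-neuron signal) and of Lemma \ref{Lemma: partial_independent_var} as the source of the $\sqrt{(1+r^2)/|\mathcal{D}|}$ factor matches the paper exactly.

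The one place you diverge is the normalization of the pruned network's output, and it has a concrete consequence. You normalize both aggregator sums by $K$, so your signal term is $\rho\,\alpha$ with $\rho=(1-\varepsilon_K-L\sigma/\pi)/L$ and your final requirement is $c_\eta T\rho\alpha\gtrsim 1$, i.e.\ $T\gtrsim L/(c_\eta\alpha\cdot(\cdots))$ with no $(1-\beta)$ factor. The paper's appendix proof instead normalizes by $|\mathcal{K}_{\beta\pm}|=(1-\beta)K$, which boosts the retained lucky-neuron fraction to $\rho/(1-\beta)$ and yields the signal term $(1-\varepsilon_K-\sigma L/\pi)\cdot\frac{\alpha}{(1-\beta)L}$ in \eqref{eqn: update_w}; this $1/(1-\beta)$ gain is precisely what makes the stated iteration bound \eqref{Thm2:T}, which carries a factor of $(1-\beta)$ in the numerator, sufficient. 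Under your $1/K$ normalization the condition \eqref{Thm2:T} is weaker than what your final inequality needs whenever $\beta>0$, so as written your argument establishes the theorem only with $T\gtrsim L/\alpha$ in place of $(1-\beta)L/\alpha$, and the claimed convergence speed-up from pruning does not appear. To recover the theorem as stated you need to make explicit that the post-pruning model averages over the $(1-\beta)K$ surviving neurons (as the paper does in the first display of its proof), at which point your bracket becomes $\frac{\rho}{1-\beta}\alpha(1-\sigma)(\cdots)-(1+\sigma)\sqrt{(1+r^2)\log q/|\mathcal{D}|}$ and both \eqref{Thm2:N} and \eqref{Thm2:T} close the argument exactly as in the paper.
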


\begin{proof}[Proof of Theorem \ref{Thm:appendix}]
    Let $\mathcal{K}_{\beta+}$ and $\mathcal{K}_{\beta-}$ be the indices of neurons with respect to $\bfM_+\odot\bfW$ and $\bfM_-\odot\bfU$, respectively. Then,
    for any node $v\in\mathcal{V}$ with label $y_v=1$, we have
    \begin{equation}
    \begin{split}
        & g( \bfM_+ \odot \bfW^{(t)}, \bfM_- \odot\bfU^{(t)};v) \\
        = & \frac{1}{|\mathcal{K}_{\beta+}|}\sum_{k\in \mathcal{K}_{\beta+}} \max_{n\in\mathcal{N}(v)}~\phi\big(\vm{\bfw_k^{(t)}}{\bfx_n}\big) - \frac{1}{|\mathcal{K}_{\beta-}|}\sum_{k\in \mathcal{K}_{\beta-}} \max_{n\in\mathcal{N}(v)}~\phi\big(\vm{\bfu_k^{(t)}}{\bfx_n}\big)\\
        \ge & \frac{1}{|\mathcal{K}_{\beta+}|}\sum_{k\in \mathcal{K}_{\beta+}} \max_{n\in\mathcal{N}(v)}~\phi\big(\vm{\bfw_k^{(t)}}{\bfx_n}\big)\\
        &- \frac{1}{|\mathcal{K}_{\beta-}|}\sum_{k\in \mathcal{K}_{\beta-}} \max_{n\in\mathcal{N}(v)}~|\vm{\bfu_k^{(t)}}{\bfp_n}|
         - \frac{1}{|\mathcal{K}_{\beta-}|}\sum_{k\in \mathcal{K}_{\beta-}} \max_{n\in\mathcal{N}(v)}~|\vm{\bfu_k^{(t)}}{\bfz_n}|\\
        =& \frac{1}{|\mathcal{K}_{\beta+}|}\sum_{k\in\mathcal{W}(t)} \vm{\bfw_k^{(t)}}{\bfp_+ +\bfz_n} + \frac{1}{|\mathcal{K}_{\beta+}|}\sum_{k\in\mathcal{K}_{\beta+}/\mathcal{W}(t)}\max_{n\in\mathcal{N}(v)}~ \phi\big(\vm{\bfw_k^{(t)}}{\bfx_n}\big)\\ &- \frac{1}{|\mathcal{K}_{\beta-}|}\sum_{k\in \mathcal{K}_{\beta-}} \max_{n\in\mathcal{N}(v)}~|\vm{\bfu_k^{(t)}}{\bfp_n}|
         - \frac{1}{|\mathcal{K}_{\beta-}|}\sum_{k\in \mathcal{K}_{\beta-}} \max_{n\in\mathcal{N}(v)}~|\vm{\bfu_k^{(t)}}{\bfz_n}|\\
        \ge & \frac{1}{|\mathcal{K}_{\beta+}|}\sum_{k\in\mathcal{W}(t)} \vm{\bfw_k^{(t)}}{\bfp_+ ~+\bfz_n}\\ &- \frac{1}{|\mathcal{K}_{\beta-}|}\sum_{k\in \mathcal{K}_{\beta-}} \max_{n\in\mathcal{N}(v)}~|\vm{\bfu_k^{(t)}}{\bfp_n}|
         - \frac{1}{|\mathcal{K}_{\beta-}|}\sum_{k\in \mathcal{K}_{\beta-}} \max_{n\in\mathcal{N}(v)}~|\vm{\bfu_k^{(t)}}{\bfz_n}|,\\
    \end{split} 
    \end{equation}
    where $\mathcal{W}(t)$ is the set of \textit{lucky neurons} at iteration $t$.
    
    Then, we have
    \begin{equation}
        \begin{split}
        \frac{1}{|\mathcal{K}_{\beta+}|}\sum_{k\in\mathcal{W}(t)} \vm{\bfw_k^{(t)}}{\bfp_+ +\bfz_n} 
        \ge&~\frac{1-\sigma}{|\mathcal{K}_{\beta+}|}\sum_{k\in\mathcal{W}(0)} \vm{\bfw_k^{(t)}}{\bfp_+}\\
        \ge&~ 
        \frac{1-\sigma}{|\mathcal{K}_{\beta+}|}\cdot 
        |\mathcal{W}(0)|\cdot c_{\eta} \cdot \Big(\alpha  -\sqrt{\frac{(1+r^2)\log q}{|\mathcal{D}|}}\Big) \cdot t\\
        \gtrsim&~\frac{1}{|\mathcal{K}_{\beta+}|}\cdot 
        |\mathcal{W}(0)|\cdot c_{\eta} \cdot \alpha \cdot t
        \end{split} 
    \end{equation}
    where the first inequality comes from Lemma \ref{Lemma: number_of_lucky_filter}, the second inequality comes from  Lemma \ref{Lemma: update_lucky_neuron}.
    On the one hand, from Lemma 2, we know that at least  $(1-\varepsilon_K - \frac{\sigma L}{\pi})K$ neurons out of $\{w_k^{(0)}\}_{k=1}^K$  are
    lucky neurons. On the other hand, we know that the magnitude of neurons in $\mathcal{W}(0)$ before pruning is always larger than all the other neurons. Therefore, such neurons will not be pruned after magnitude based pruning.
    Therefore, we have 
    \begin{equation}
        |\mathcal{W}(0)| = (1-\varepsilon_K - \frac{\sigma L}{\pi})K = (1-\varepsilon_K - \frac{\sigma L}{\pi}) |\mathcal{K}^{+}|/(1-\beta)
    \end{equation}
    Hence, we have 
    \begin{equation}\label{eqn: update_w}
        \begin{split}
        \frac{1}{|\mathcal{K}_{\beta+}|}\sum_{k\in\mathcal{W}(t)} \vm{\bfw_k^{(t)}}{\bfp_+} 
        \gtrsim& (1-\varepsilon_K - \frac{\sigma L}{\pi})\cdot \frac{1}{(1-\beta)L} \cdot \alpha \cdot t.
        \end{split} 
    \end{equation}

     
    In addition, $\bfX_{\mathcal{N}(v)}$ does not contain $\bfp_-$ when $y_v = 1$. Then, from Lemma \ref{Lemma: update_lucky_neuron2} and Lemma \ref{Lemma: update_unlucky_neuron2}, we have
    \begin{equation}\label{eqn: update_v}
        \begin{split}
            \frac{1}{|\mathcal{K}_{\beta-}|} \sum_{k\in\mathcal{K}_{\beta-}} \max_{n\in\mathcal{N}(v)}~|\vm{\bfu_k^{(t)}}{\bfp_n}|
            \lesssim & ~ c_\eta (1+\sigma)\sqrt{\frac{(1+r^2) \log q}{|\mathcal{D}|}} t\\
            \lesssim & ~ \varepsilon_N \cdot (1-\varepsilon_K - \frac{\sigma L}{\pi})\cdot \alpha \cdot \frac{1}{(1-\beta)L}t,
        \end{split}
    \end{equation}
    where the last inequality comes from \eqref{Thm2:N}.
    
    Moreover, we have 
    \begin{equation}\label{eqn: update_n}
        \begin{split}
            \frac{1}{|\mathcal{K}_{\beta-}|}\sum_{k\in \mathcal{K}_{\beta-}} \max_{n\in\mathcal{N}(v)}~|\vm{\bfu_k^{(t)}}{\bfz_n}|
            \le&~ \mathbb{E}_{k\in \mathcal{K}_{\beta-}}\|\bfu_k^{(t)}\|_2\cdot\|\bfz_n\|_2\\
            \le &~\sigma  \cdot 
            \mathbb{E}_{k\in \mathcal{K}_{\beta-}}\sum_{ \bfp\in\mathcal{P}/\bfp_-} \vm{\bfu_k^{(t)}}{\bfp}\\
            \lesssim&~ \sigma \cdot c_\eta \cdot L (1+\sigma)\sqrt{\frac{(1+r^2)\log q}{|\mathcal{D}|}} \cdot t. 
        \end{split}
    \end{equation}
    
    Combining \eqref{eqn: update_w}, \eqref{eqn: update_v}, and \eqref{eqn: update_n}, we have 
    \begin{equation}\label{eqn:update_x+}
        g(\bfW^{(T)},\bfU^{(T)};v) \ge (1-\varepsilon_N)(1-\varepsilon_K - \sigma L/\pi)(1-\sigma L)\frac{\alpha}{(1-\beta)L}\cdot c_\eta \cdot T.
    \end{equation}
    Therefore, when the number of iterations satisfies
    \begin{equation}
     T > \frac{c_\eta \cdot (1-\beta)\cdot L}{\alpha\cdot (1-\varepsilon_N -\sigma )\cdot (1-\varepsilon_K - \sigma L/\pi)\cdot (1-\sigma L)},   
    \end{equation}
    we have $g(\bfW^{(t)},\bfU^{(t)};v)>1$. 
    
    Similar to the proof of \eqref{eqn:update_x+}, for any $v\in\mathcal{V}$ with label $y_v=-1$, we have
    \begin{equation}
    \begin{split}
        & g( \bfM_+ \odot \bfW^{(T)}, \bfM_- \odot\bfU^{(T)};v) \\
        = & \frac{1}{|\mathcal{K}_{\beta+}|}\sum_{k\in \mathcal{K}_{\beta+}} \max_{n\in\mathcal{N}(v)}~\phi\big(\vm{\bfw_k^{(T)}}{\bfx_n}\big) - \frac{1}{|\mathcal{K}_{\beta-}|}\sum_{k\in \mathcal{K}_{\beta-}} \max_{n\in\mathcal{N}(v)}~\phi\big(\vm{\bfu_k^{(T)}}{\bfx_n}\big)\\
        \le & - \frac{1}{|\mathcal{K}_{\beta-}|}\sum_{k\in \mathcal{K}_{\beta-}} \max_{n\in\mathcal{N}(v)}~\phi\big(\vm{\bfw_k^{(T)}}{\bfx_n}\big)\\
        &+ \frac{1}{|\mathcal{K}_{\beta+}|}\sum_{k\in \mathcal{K}_{\beta+}} \max_{n\in\mathcal{N}(v)}~|\vm{\bfu_k^{(T)}}{\bfp_n}|
         + \frac{1}{|\mathcal{K}_{\beta+}|}\sum_{k\in \mathcal{K}_{\beta-}} \max_{n\in\mathcal{N}(v)}~|\vm{\bfu_k^{(T)}}{\bfz_n}|\\
        =& - \frac{1}{|\mathcal{K}_{\beta-}|}\sum_{k\in\mathcal{W}(T)} \vm{\bfw_k^{(T)}}{\bfp_- +\bfz_n} - \frac{1}{|\mathcal{K}_{\beta-}|}\sum_{k\in\mathcal{K}_{\beta-}/\mathcal{W}(t)}\max_{n\in\mathcal{N}(v)}~ \phi\big(\vm{\bfw_k^{(T)}}{\bfx_n}\big)\\ 
        &+ \frac{1}{|\mathcal{K}_{\beta+}|}\sum_{k\in \mathcal{K}_{\beta+}} \max_{n\in\mathcal{N}(v)}~|\vm{\bfu_k^{(T)}}{\bfp_n}|
         + \frac{1}{|\mathcal{K}_{\beta+}|}\sum_{k\in \mathcal{K}_{\beta+}} \max_{n\in\mathcal{N}(v)}~|\vm{\bfu_k^{(T)}}{\bfz_n}|\\
        \ge & - \frac{1}{|\mathcal{K}_{\beta-}|}\sum_{k\in\mathcal{W}(T)} \vm{\bfw_k^{(t)}}{\bfp_-+\bfz_n}\\ 
        &+ \frac{1}{|\mathcal{K}_{\beta+}|}\sum_{k\in \mathcal{K}_{\beta+}} \max_{n\in\mathcal{N}(v)}~|\vm{\bfu_k^{(t)}}{\bfp_n}|
         + \frac{1}{|\mathcal{K}_{\beta+}|}\sum_{k\in \mathcal{K}_{\beta+}} \max_{n\in\mathcal{N}(v)}~|\vm{\bfu_k^{(t)}}{\bfz_n}|,\\
        \le& -\frac{1}{|\mathcal{K}_{\beta-}|}|\mathcal{U}(T)|\cdot c_{\eta} \cdot \alpha T\\
        +&  c_{\eta}\cdot (1+\sigma) \cdot T \cdot \sqrt{\frac{(1+r^2)\log q}{|\mathcal{D}|}} +c_\eta \cdot \sigma \cdot  (1+\sigma)\cdot\sqrt{\frac{(1+r^2)\log q}{|\mathcal{D}|}}\cdot T\\
        \lesssim & -(1-\varepsilon_N)(1-\varepsilon_K-\sigma L/\pi)\cdot(1-\sigma L)\cdot  \frac{\alpha}{(1-\beta)L}\\
        \le & -1.
    \end{split} 
    \end{equation}
    Hence, we have $g(\bfW^{(t)},\bfU^{(t)};v)<-1$ for any $v$ with label $y_v=-1$.
    
    In conclusion, the generalization function in \eqref{eqn: generalization_error} achieves zero when conditions \eqref{Thm2:N}, \eqref{Thm2:K}, and \eqref{Thm2:T} hold.
\end{proof}

\section{Numerical experiments}\label{sec:app_experiment}

\subsection{Implementation of the experiments}
\textbf{Generation of the synthetic graph structured data.} The synthetic data used in Section \ref{sec: numerical_experiment} are generated in the following way. First, we randomly generate three groups of nodes, denoted as $\mathcal{V}_+$, $\mathcal{V}_-$, and $\mathcal{V}_N$. The nodes in $\mathcal{V}_+$ are assigned with noisy $\bfp_+$, and the nodes in $\mathcal{V}_-$ are assigned with noisy $\bfp_-$. 
The patterns of nodes in $\mathcal{V}_N$ are class-irrelevant patterns. For any node $v$ in $\mathcal{V}_N$, $v$ will contact to some nodes in either $\mathcal{V}_+$ or $\mathcal{V}_-$ uniformly. 
If the node connects to nodes in $\mathcal{V}_+$, then its label is $+1$. Otherwise, its label is $-1$.
Finally, we will add random connections among the nodes within  $\mathcal{V}_+$, $\mathcal{V}_-$ or $\mathcal{V}_N$. 
To verify our theorems, each node in $\mathcal{V}_N$ will connect to exactly one node in $\mathcal{V}_+$ or $\mathcal{V}_-$, and the degree of the nodes in $\mathcal{V}_N$ is exactly $M$ by randomly selecting $M-1$ other nodes in $\mathcal{V}_N$. 
We use full batch gradient descent  for synthetic data.    Algorithm \ref{Alg} terminates if the training error becomes zero or the maximum of iteration $500$ is reached.
 If not otherwise specified, 
  $\delta=0.1$, $c_\eta=1$,  $r = 20$, $\alpha=r/R$, $\beta = 0.2$,  $d=50$, $L=200$, $\sigma=0.2$, and $|\mathcal{D}|=100$ with the rest of nodes being test data. 

\textbf{Implementation of importance sampling on edges.} 
We are given the sampling rate of importance edges $\alpha$ and the number of sampled neighbor nodes $r$. First, we sample the importance edge for each node with the rate of $\alpha$. Then, we randomly sample the remaining edges without replacement until the number of sampled nodes reaches $r$.

\textbf{Implementation of magnitude pruning on model weights.}
The pruning algorithm follows exactly the same as the pseudo-code in Algorithm \ref{Alg}. The number of iterations $T^{\prime}$ is selected as $5$.

\textbf{Illustration of the error bars.} In the figures, the region in low transparency indicates the error bars. The upper envelope of the error bars is based on the value of mean plus one standard derivation, and the lower envelope of the error bars  is based on the value of mean minus one standard derivation.

\subsection{Empirical justification of data model assumptions}\label{app: data_model}
In this part, we will use Cora dataset as an example to demonstrate that our data assumptions can model some real application scenarios.

Cora dataset is a citation network containing 2708 nodes, and nodes belong to seven classes, 
namely, ``Neural Networks'', ``Rule Learning'', ``Reinforcement Learning'', ``Probabilistic Methods'', ``Theory'', ``Genetic Algorithms'', and ``Case Based''. For the convenience of presentation, we denote the labels above as ``class 1'' to ``class 7'', respectively.  
 For each node, we calculate the aggregated features vector by aggregating its 1-hop neighbor nodes, and each feature vector is in a dimension of 1433. Then,  we construct matrices by collecting the aggregated feature vectors for nodes with the same label, and the largest singular values of the matrix with respect to each class can be found in Table \ref{table: sv}.
  Then,  the cosine similarity, i.e., $\arccos\frac{<z_1, z_2>}{
\|z_1\|_2\cdot \|z_2\|_2}$, among the first principal components, which is the right singular vector with the largest singular value, for different classes is provided, and results are summarized in Table \ref{table: cos}. 

\begin{table}[h]
    \caption{The top 5 largest singular value (SV) of the collecting feature matrix for the classes} 
    \centering
    \begin{tabular}{c|c|c|c|c|c}
    \hline
    \hline
    ~~~Labels~~~ & ~~ First SV~~ & Second SV & Third SV & Fourth SV & Fifth SV\\
    \hline
    class 1 & 8089.3 & 101.6 & 48.1 & 46.7 & 42.5\\
    \hline
    class 2 & 4451.2 & 48.7 & 28.0 & 26.7 & 22.4\\ 
    \hline
    class 3 & 4634.5 & 53.1 & 29.4 & 28.1 & 24.1\\ 
    \hline
    class 4 & 6002.5 &	72.9&	37.0&	35.6&	31.4\\ 
    \hline
    class 5 & 5597.8&	67.4&	33.3&	33.1	&29.1\\ 
    \hline
    class 6 & 5947.9&	72.1&	36.7&	35.5&	31.3\\ 
    \hline
    class 7 & 5084.6&	62.0&	32.3&	30.7&	27.2\\ 
    \hline
    \hline
    \end{tabular}
    \label{table: sv}
\end{table}

\begin{table}[h]
    \caption{The cosine similarity among the classes} 
    \centering
    \begin{tabular}{c|c|c|c|c|c|c|c}
    \hline
    \hline
    & class 1 & class 2 & class 3 & class 4 & class 5 & class 6 & class 7 \\
    \hline
    class 1 & 0 & 	89.3 &	88.4 &	89.6 &	87.4 &	89.4 &	89.5\\
    \hline
    class 2 & 	89.3 &	0&	89.1&	89.5&	88.7&	89.7&	88.7\\
    \hline
class 3&	88.4&	89.1&	0&	89.7&	89.9&	89.0&	89.1\\
    \hline
class 4&	89.6&	89.5&	89.7&	0&	88.0&	88.9&	88.7\\
    \hline
class 5&	87.4&	88.7&	89.9&	88.0&	0&	89.8&	88.2\\
    \hline
class 6&	89.4&	89.7&	89.0&	88.9&	89.8&	0&	89.5\\
    \hline
class 7&	89.5&	88.7&	89.1&	88.7&	88.3&	89.5&	0\\
    \hline
    \hline
    \end{tabular}
    \label{table: cos}
\end{table}

From Table \ref{table: sv}, we can see that the collected feature matrices are all approximately rank-one. In addition, from Table \ref{table: cos}, we can see that the first principal components of different classes are almost orthogonal. For instance, the pair-wise angles of feature matrices that correspond to ``class 1'', ``class 2'', and ``class 3'' are 89.3, 88.4, and 89.2, respectively. Table \ref{table: sv}  indicates that the features of the nodes are highly concentrated in one direction, and Table \ref{table: cos} indicates that the principal components for different classes are almost orthogonal and independent. Therefore, we can view the primary direction of the feature matrix as the class-relevant features. If the node connects to class-relevant features of two classes frequently, the feature matrix will have at least two primary directions, which leads to a matrix with a rank of two or higher. If the nodes in one class connect to class-relevant features for two classes frequently, the first principal component for this class will be a mixture of at least two class-relevant features, and the angles among the principal components for different classes cannot be almost orthogonal. Therefore, we can conclude that most nodes in different classes connect to different class-relevant features, and the class-relevant are almost orthogonal to each other. 

\ICLRRevise{In addition, following the same experiment setup above, we implement another numerical experiment on a large-scale dataset Ogbn-Arxiv to further justify our data model.  The cosine similarity between the estimated class-relevant features for the first 10 classes are summarized in Table \ref{table: cos_2}. As we can see, most of the angles are between $65$ to $90$, which suggests a sufficiently large distance between the class-relevant patterns for different classes. Moreover, to justify the existence of  node features in $V_+(V_-)$ and $V_{N+}(V_{N-})$, we have included a comparison of the node features and the estimated class-relevant features from the same class. Figure \ref{fig:angle_1} illustrates the cosine similarity between the node features and the estimated class-relevant features from the same class. As we can see, the node with an angle smaller than $40$ can be viewed as $V_+ (V_-)$, and the other nodes can be viewed as $V_{N+} (V_{N-})$, which verifies the existence of class-relevant and class-irrelevant patterns.
Similar results to ours that the node embeddings are distributed in a small space are also observed in \citep{PHLJYZ18} via clustering the node features.}

\begin{table}[h]
    \caption{ The cosine similarity among the classes} 
    \centering
    \begin{tabular}{c|c|c|c|c|c|c|c|c|c|c|}
    \hline
    \hline
    & class 1 & class 2 & class 3 & class 4 & class 5 & class 6 & class 7 & class 8 & class 9 & class 10 \\
    \hline
    class 1 & 0 & 	84.72 &	77.39 &	88.61 & 86.81 & 89.88 &87.38 & 86.06 & 83.42 & 72.54\\
    \hline
    class 2 & 84.72  & 0 & 59.15 & 69.91 & 54.85 & 48.86 & 58.39 & 72.35 & 67.11 & 57.93\\
    \hline
class 3& 77.38 & 59.15 & 0 & 67.62 & 63.26 & 46.55 & 54.32 & 66.43 & 57.17 & 45.48\\
    \hline
class 4& 88.61 & 69.91 & 67.62 & 0 & 75.27 & 73.23 & 33.58 & 68.16 & 38.80 & 84.71\\
    \hline
class 5& 86.81 & 54.85 & 63.26 & 75.27 &  0 & 35.00 & 66.68 & 52.44 & 72.37 & 65.64\\
    \hline
class 6& 71.22 & 	70.83 & 	65.44 & 	66.64 & 	61.99 & 	0.00 & 	67.58 & 	71.60 & 	64.31	 & 67.88	 \\
    \hline
class 7& 71.90 &	68.03 &	68.43 &	69.68 &	67.64 &	67.58 &	0.00 &	68.24 &	70.38 &	60.50
\\
    \hline
    class 8& 61.19&	70.86&	70.43&	62.79&	63.24&	71.60&	68.24&	0.00&	64.58&	70.65
\\
    \hline
    class 9& 62.25&	70.73&	64.69&	71.03&	71.70&	64.31&	70.38&	64.58&	0.00&	71.88
\\
    \hline
    class 10& 69.69&	67.38&	68.49&	64.92&	67.17&	67.88&	60.50&	70.65&	71.88&	0.00\\
    \hline
    \hline
    \end{tabular}
    \label{table: cos_2}
\end{table}

\begin{figure}
    \centering
    \includegraphics[width = 0.6\linewidth]{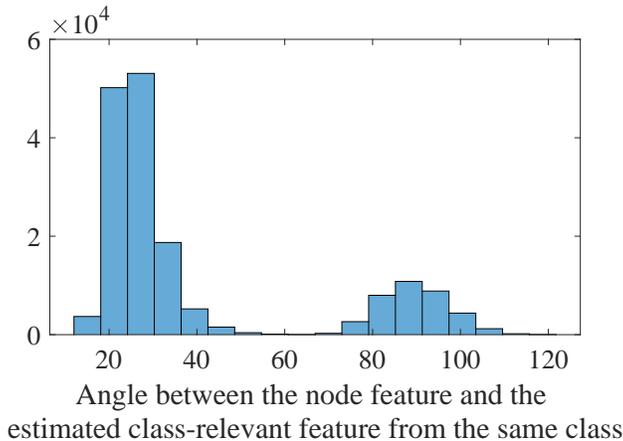}
    \caption{ Distribution of the cosine similarity between the node feature and the estimated class-relevant feature from the same class}
    \label{fig:angle_1}
\end{figure}

\subsection{Additional experiments on synthetic data}

Figure \ref{fig: K_N} shows the phase transition  of the  sample complexity when $K$ changes. 
 The sample complexity is almost a linear function of ${1/K}$.  
Figure \ref{fig: noise_p} shows 
that the sample complexity increases as a linear function of $\sigma^2$, which is the noise level in the features. 
  In Figure \ref{fig: sample_s1}, $\alpha$ is fixed at $0.8$ while increasing the number of sampled neighbors $r$.  The sample complexity is linear in $r^2$.
  
\begin{figure}[h]
\begin{minipage}{0.32\linewidth}
    \centering
    \includegraphics[width=0.90\linewidth]{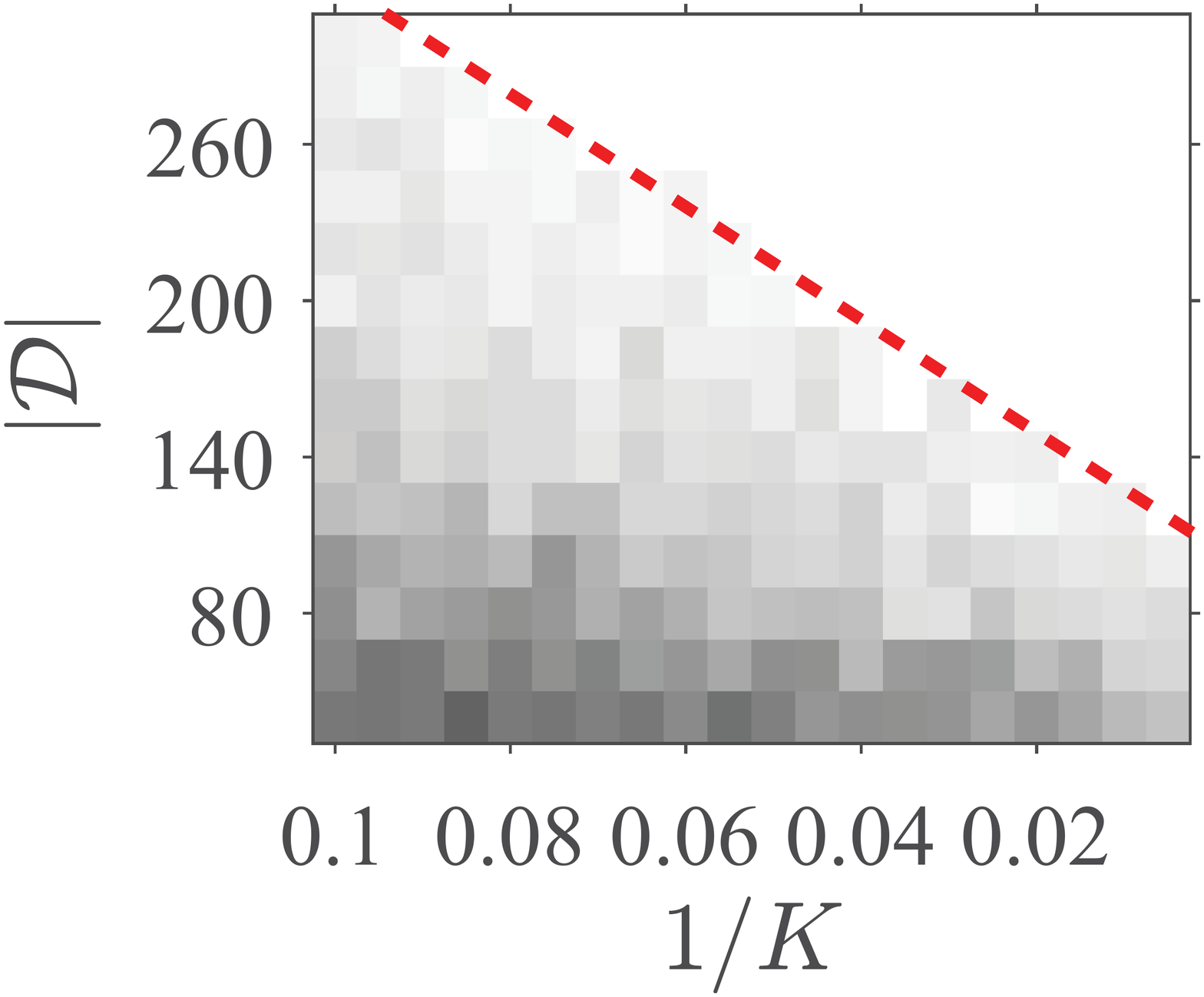}
    \caption{\small $|\mathcal{D}|$ against the number of neurons $K$}
    \label{fig: K_N}
\end{minipage}
~
\begin{minipage}{0.32\linewidth}
    \centering
    \includegraphics[width=0.90\linewidth]{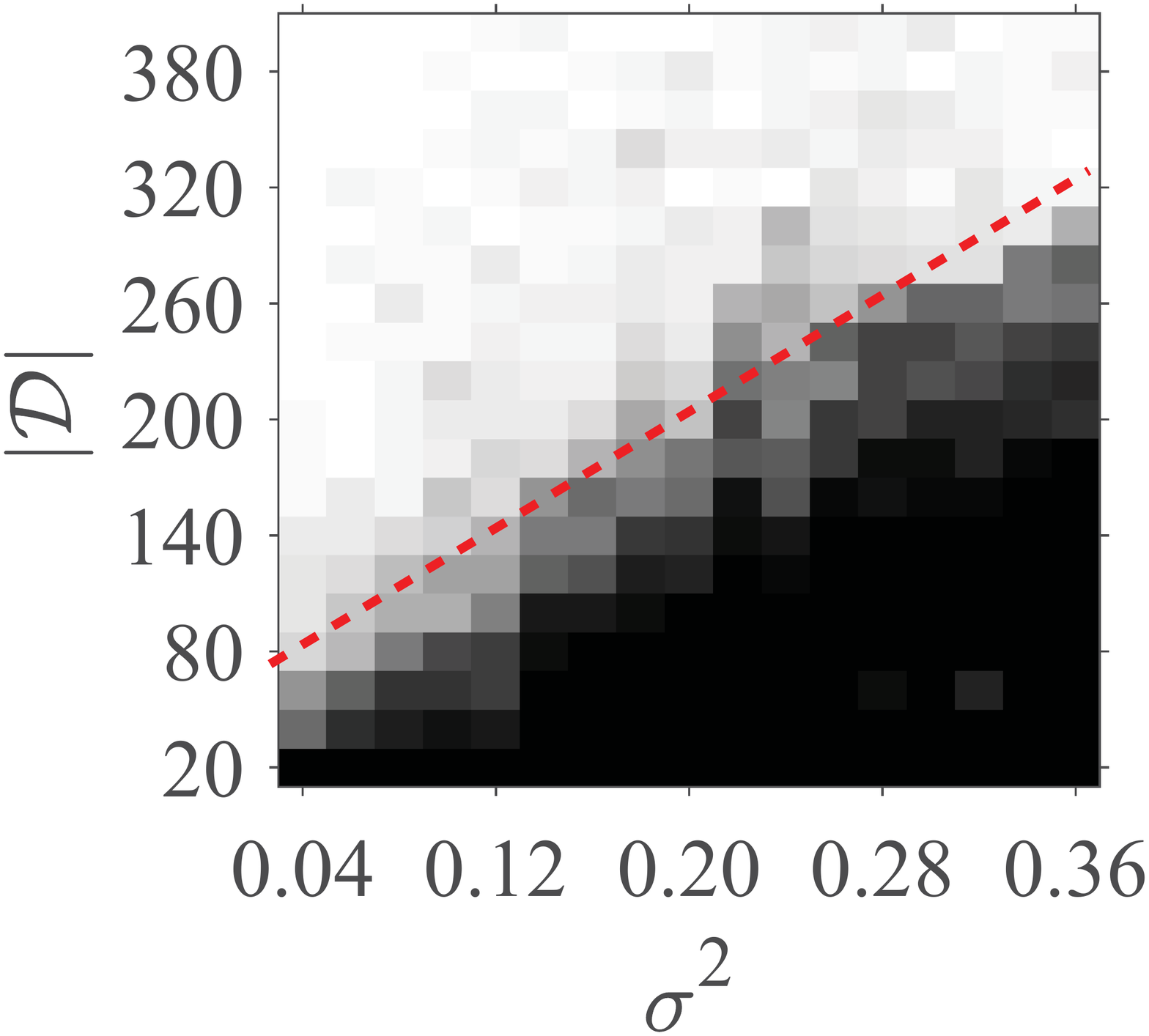}
    \caption{\small $|\mathcal{D}|$ against the noise level $\sigma^2$.}
    \label{fig: noise_p}    
\end{minipage}
~
\begin{minipage}{0.32\linewidth}
    \centering
    \includegraphics[width=0.90\linewidth]{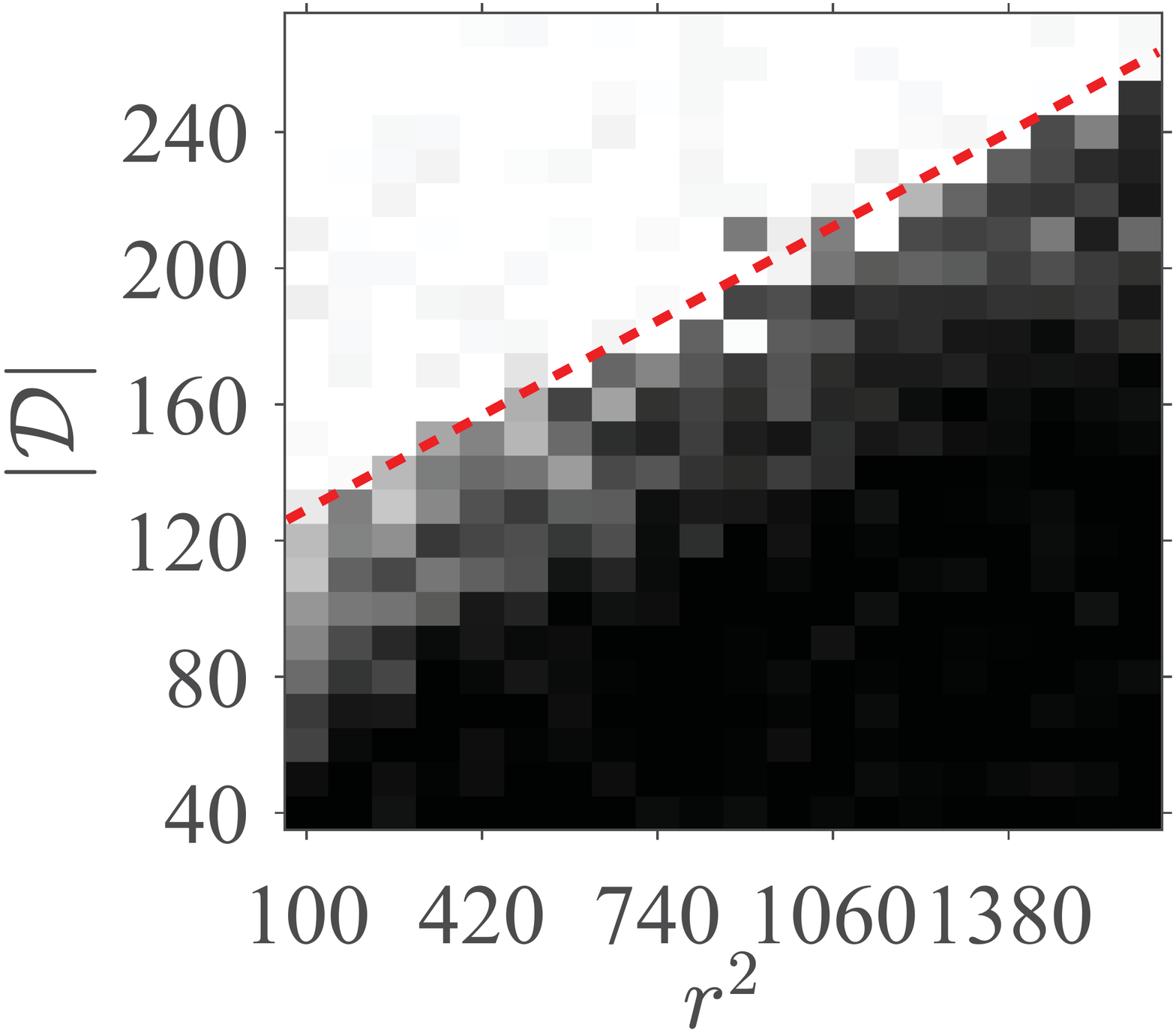}
    \caption{\small $|\mathcal{D}|$ against the number of sampled neighbors $r$}
    \label{fig: sample_s1}    
\end{minipage}
\end{figure}

Figure \ref{fig:ite} illustrates the required number of iterations for different number of sampled edges, and $R=30$. 
The fitted curve, which is denoted as a black dash line, is a linear function of $\alpha^{-1}$ for $\alpha = r/R$. We can see that the fitted curve matches the empirical results for $\alpha = r/R$, which verifies the bound in \eqref{eqn: number_of_iteration}.
Also, applying importance sampling, the number of iterations is significantly reduced with a large $\alpha$. 

\begin{figure}[h]
\begin{minipage}{0.32\linewidth}
    \centering
    \includegraphics[width=1.0\linewidth]{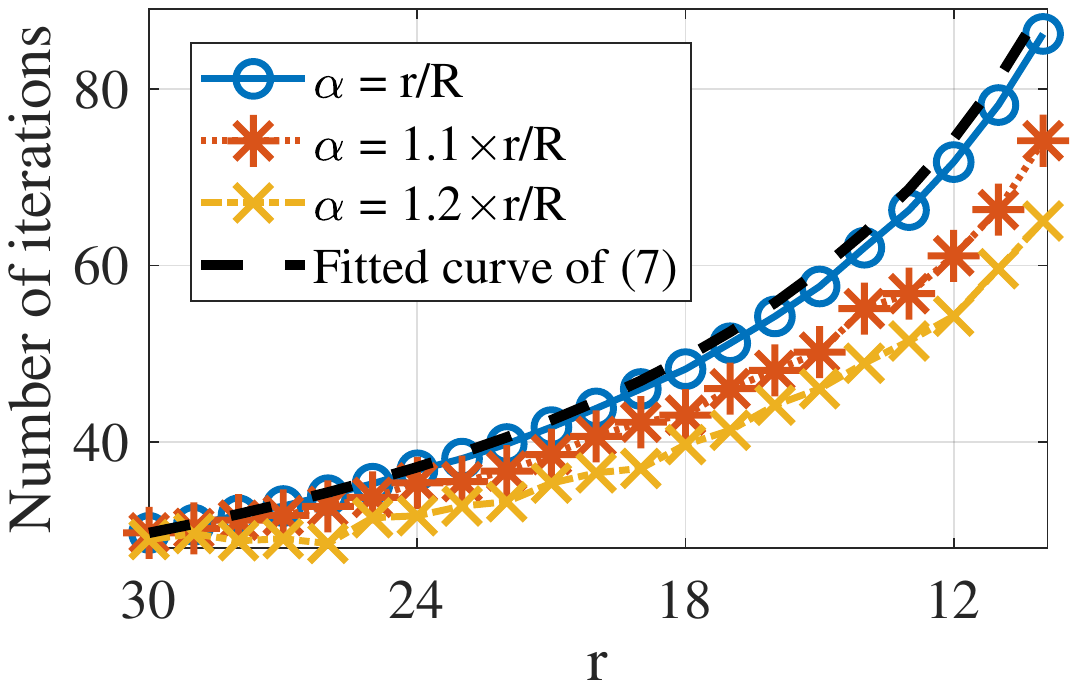}
    \caption{Number of iterations against the edge sampling rate with random \& important sampling.}
    \label{fig:ite}
\end{minipage}
~
\begin{minipage}{0.32\linewidth}
    \centering
    \includegraphics[width=1.0\linewidth]{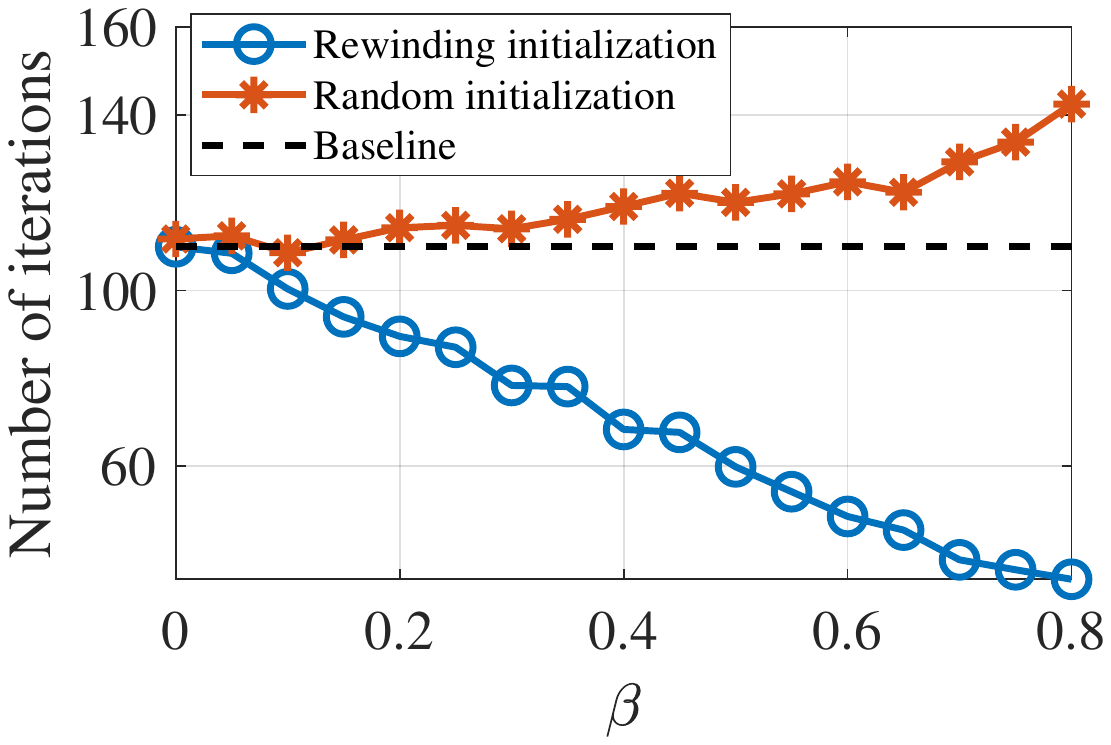}
    \caption{Number of iterations against pruning rate of the model weights.}
    \label{fig:ini}
\end{minipage}
~
\begin{minipage}{0.32\linewidth}
    \centering
    \includegraphics[width=1.0\linewidth]{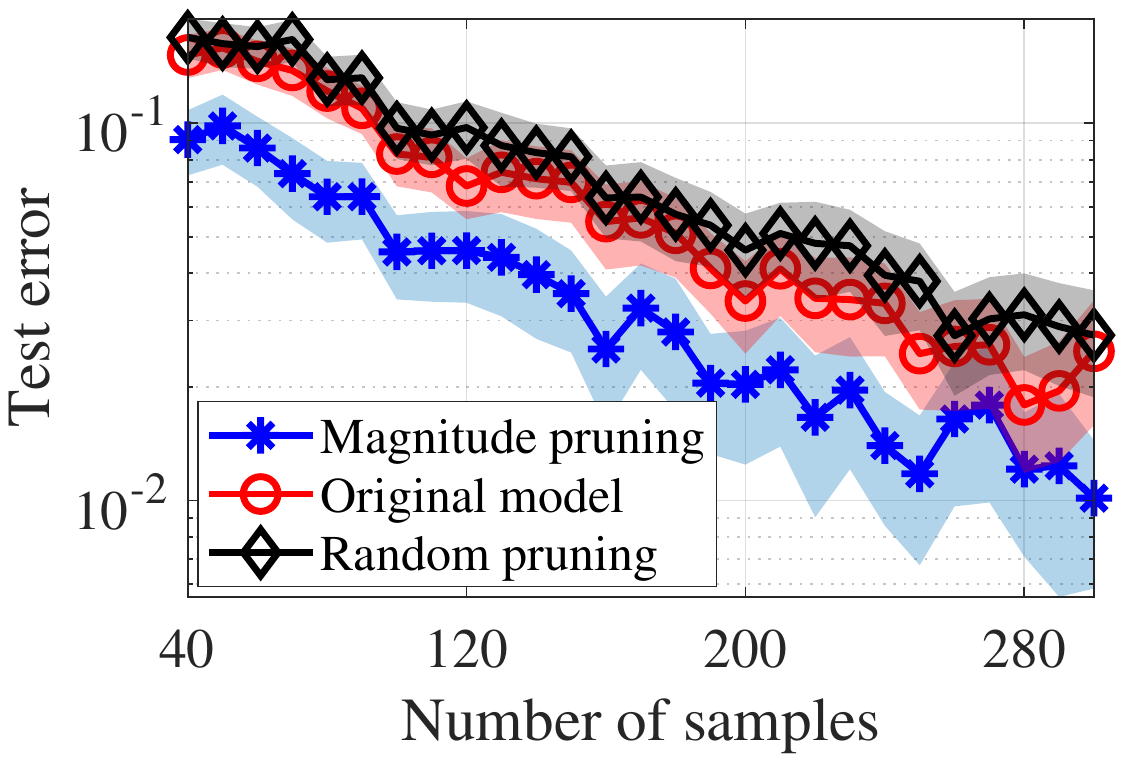}
    \caption{The test error against the number of samples.}
    \label{fig: noise}    
\end{minipage}
\end{figure}

Figure \ref{fig:ini} illustrates the required number of iterations for convergence with different pruning rates $\beta$. 
All the results are averaged over $100$ independent trials. 
The black dash line stands for the baseline, the average number of iterations of training original dense networks.
The blue line with circle marks is the performance of magnitude pruning of neuron weights. The number of iterations is almost a linear function of the pruning rate, which verifies our theoretical findings in \eqref{eqn: number_of_iteration}.
The red line with star marks shows the convergence without rewinding to the original initialization, whose results are worse than the baseline because random initialization after pruning leads to a less overparameterized model.

Figure \ref{fig: noise} indicates the test errors with different numbers of samples by averaging over $1000$ independent trials. 
The red line with circle marks shows the performance of training the original dense model. The blue line with star marks concerns the model after magnitude pruning, and the test error is reduced compared with training the original model. Additional experiment by using random pruning is summarized as the black line with the diamond mark, and the test error is consistently larger than those by training on the original model.  

\ICLRRevise{Further, we justify our theoretical characterization in Cora data. Compared with synthetic data, $\alpha$ is not the sampling rate of edges but the sampling rate of class-relevant features, which is unknown for real datasets. Also, there is no standard method for us to define the success of the experiments for training practical data. Therefore, we make some modifications in the experiments to fit into our theoretical framework. We utilize the estimated class-relevant feature from Appendix \ref{app: data_model} to determine whether the node is class-relevant or not, i.e., if we call the node feature as a class-relevant feature if the angle between the node feature and class-relevant feature is smaller than 30. For synthetic data, we define the success of a trial if it achieves zero generalization error.  Instead, for practical data, we call the trial is success if the test accuracy is larger than 80\%. 
Figure~\ref{fig:cora_alpha} illustrates the sample complexity against $\alpha^{-2}$, and the curve of the phrase transition is almost a line, which justify our theoretical results in (6). Figures~\ref{fig:cora_conv} and \ref{fig:cora_conv2} illustrate the convergence rate of the first 50 epochs when training on the Cora test. We can see that the convergence rates are linear functions of the pruning rate and $\alpha^{-1}$ as indicated by our theoretical results in \eqref{eqn: number_of_samples}.}

\begin{figure}[h]
\begin{minipage}{0.32\linewidth}
    \centering
    \includegraphics[width=0.8\linewidth]{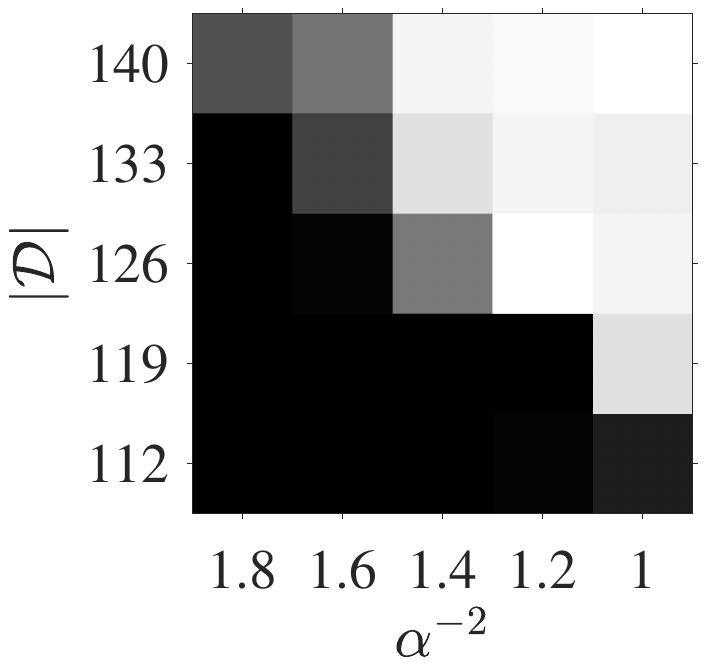}
    \caption{ $\mathcal{D}$ against the estimated importance sampling probability $\alpha$.}
    \label{fig:cora_alpha}
\end{minipage}
~
\begin{minipage}{0.32\linewidth}
    \centering
    \includegraphics[width=1.0\linewidth]{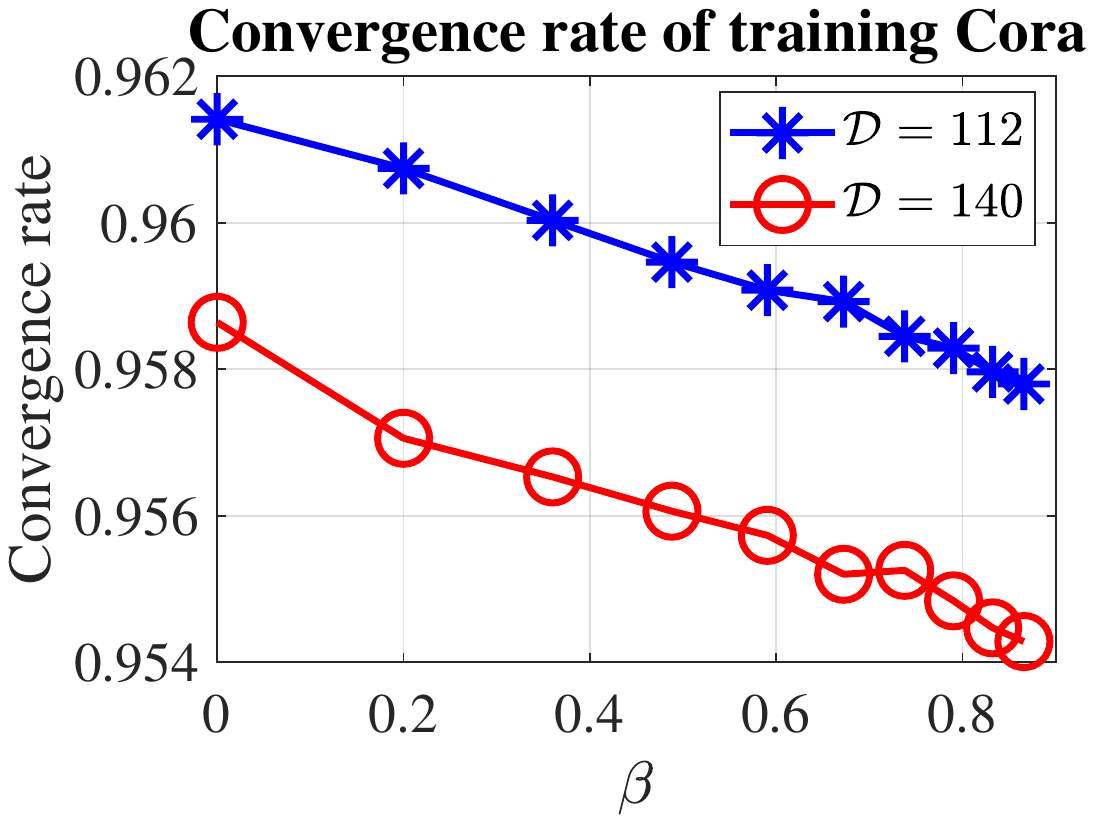}
    \caption{ $\mathcal{D}$ against the pruning rate $\beta$.}
    \label{fig:cora_conv}
\end{minipage}
~
\begin{minipage}{0.32\linewidth}
    \centering
    \includegraphics[width=1.0\linewidth]{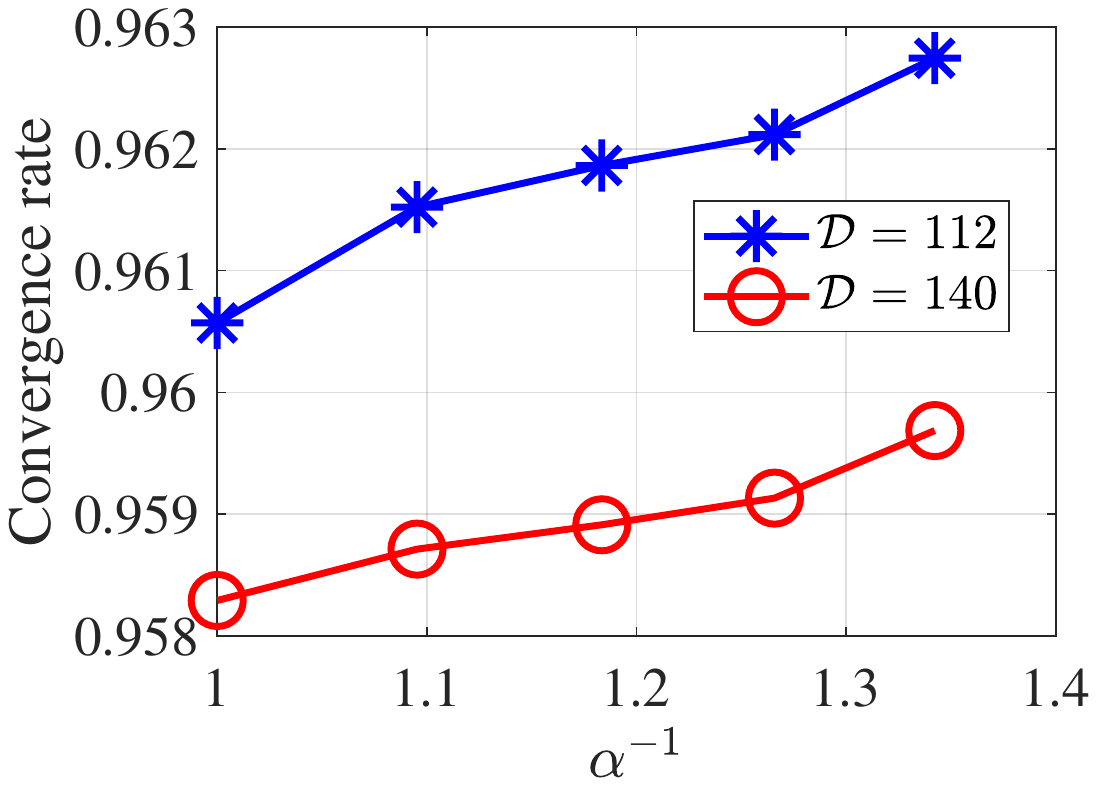}
    \caption{ The convergence rate against the estimated importance sampling probability $\alpha$.}
    \label{fig:cora_conv2}    
\end{minipage}
\end{figure}

\subsection{\ICLRRevise{Additional experiments on relaxed data assumptions}}

\ICLRRevise{In the following numerical experiments, we relax the assumption (A1) by adding extra edges between  (1) $V_+$ and $V_{\mathcal{N}-}$, (2) $V_-$ and $V_{\mathcal{N}_+}$, and (3) $V_+$ and $V_{-}$. We randomly select $\gamma$ fraction of the nodes that it will connect to a node in $\mathcal{V}_-~( \text{or } \mathcal{V}_+)$ if its label is positive (or negative). We call the selected nodes ``outlier nodes'', and the other nodes are denoted as ``clean nodes''. Please note that two identical nodes from the set of ``outlier nodes'' can have different labels, which suggests that one cannot find any mapping from the ``outlier nodes'' to its label. Therefore, we evaluate the generalization only on these ``clean nodes'' but train the GNN on the mixture of ``clean nodes'' and ``outlier nodes''.}

\begin{figure}[h]
\begin{minipage}{0.60\linewidth}
    \centering
    \includegraphics[width=0.7\linewidth]{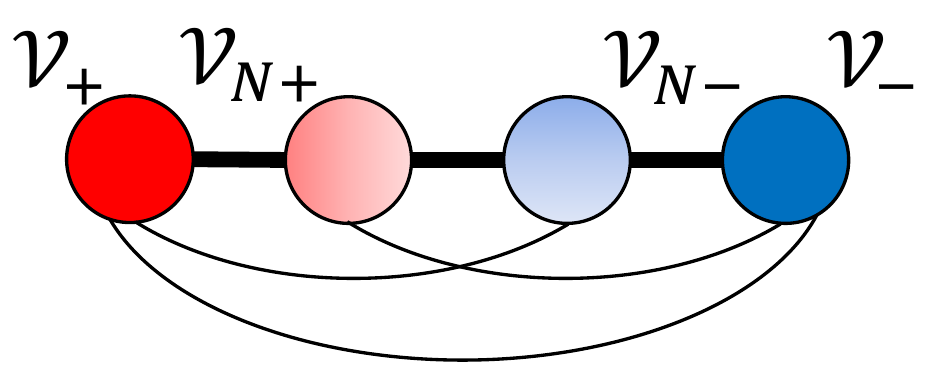}
    \caption{Toy example of the data model. Node 1 and 2 have label $+1$. Nodes 3 and 4 are labeled as $-1$. Nodes 1 and 4 have class-relevant features. Nodes 2 and 3 have class-irrelevant features. $\mathcal{V}_{+}=\{1\}$, $\mathcal{V}_{N+}=\{2\}$, $\mathcal{V}_{N-}=\{3\}$, $\mathcal{V}_{-}=\{4\}$.}
    \label{fig: data_distribution2}
\end{minipage}
\hfill
\begin{minipage}{0.35\linewidth}
    \centering
    \includegraphics[width=0.9\linewidth]{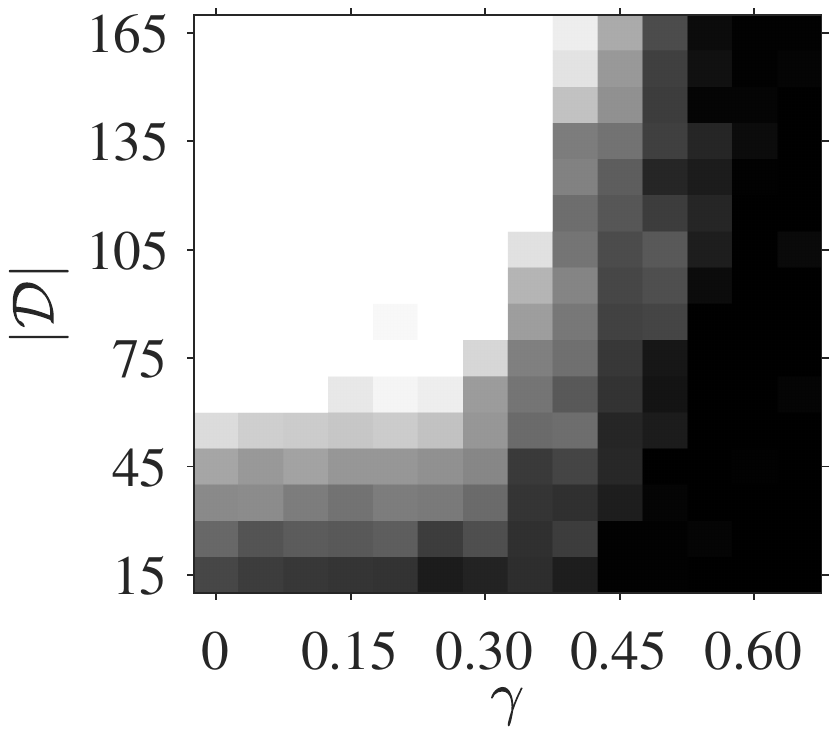}
    \caption{ The number of training nodes $|\mathcal{D}|$ against $\gamma$.}
    \label{fig:2}    
\end{minipage}
\end{figure}

\ICLRRevise{Figure \ref{fig:2} illustrates the phase transition  of the  sample complexity when $\gamma$ changes. The number of sampled edges $r = 20$, pruning rate $\beta = 0.2$, the data dimension $d=50$, and the number of patterns $L=200$. We can see that the sample complexity remains almost the same when $\gamma$ is smaller than 0.3, which indicates that our theoretical insights still hold with a relaxed assumption (A1).}

\begin{figure}[t]
     \centering
     \begin{subfigure}[t]{0.49\textwidth}
         \centering
         \includegraphics[width=0.8\textwidth]{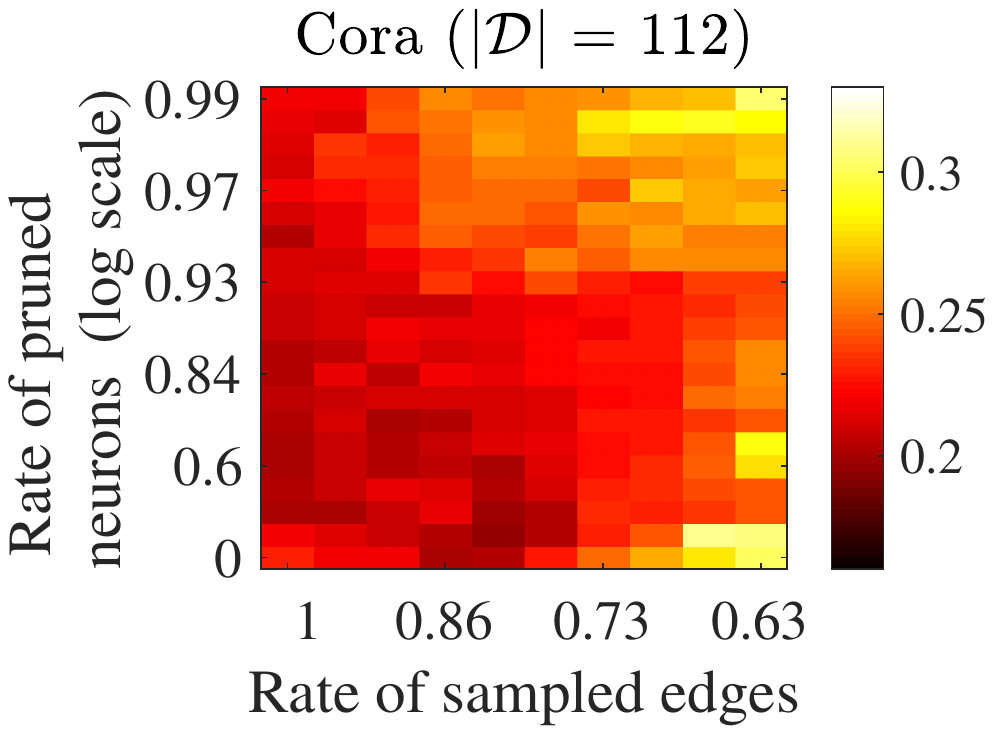}
     \end{subfigure}
     ~
     \begin{subfigure}[t]{0.49\textwidth}
         \centering
         \includegraphics[width=0.8\textwidth]{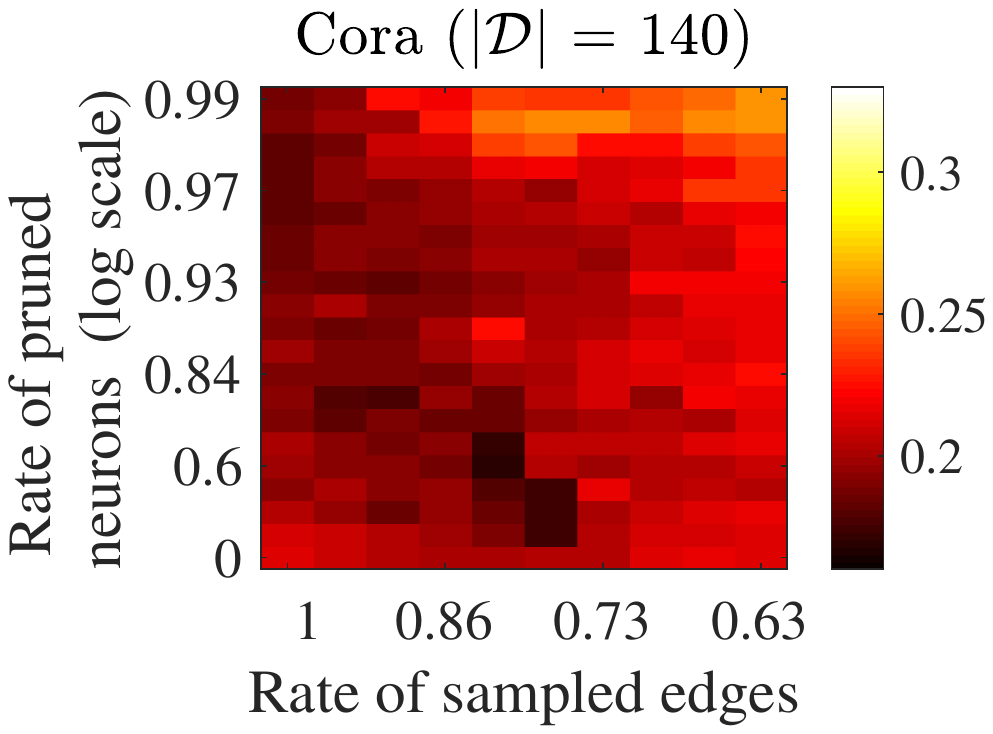}
     \end{subfigure}
     ~
          \begin{subfigure}[b]{0.49\textwidth}
         \centering
         \includegraphics[width=0.8\textwidth]{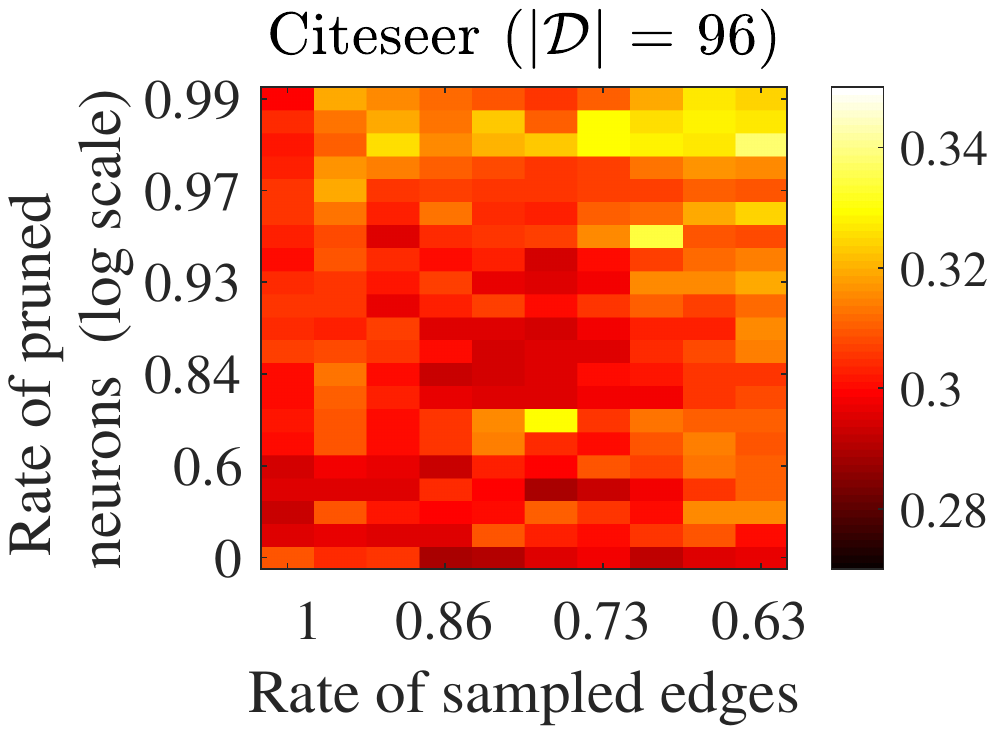}
     \end{subfigure}
     ~
     \begin{subfigure}[b]{0.49\textwidth}
         \centering
         \includegraphics[width=0.8\textwidth]{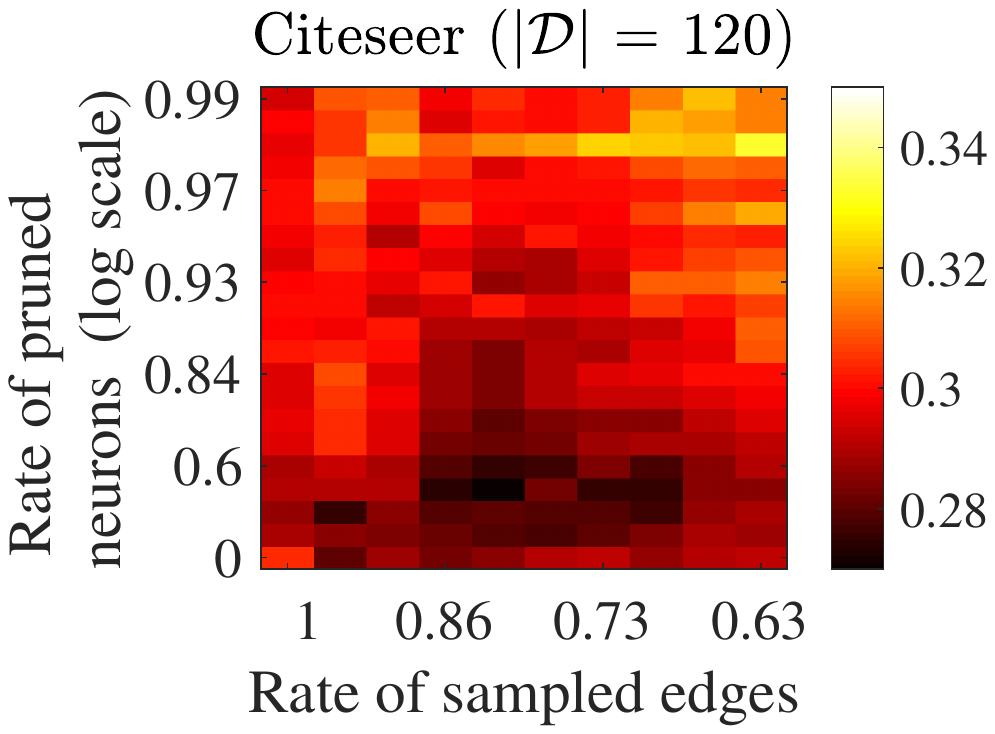}
     \end{subfigure}
     ~
    \begin{subfigure}[b]{0.49\textwidth}
         \centering
         \includegraphics[width=0.8\textwidth]{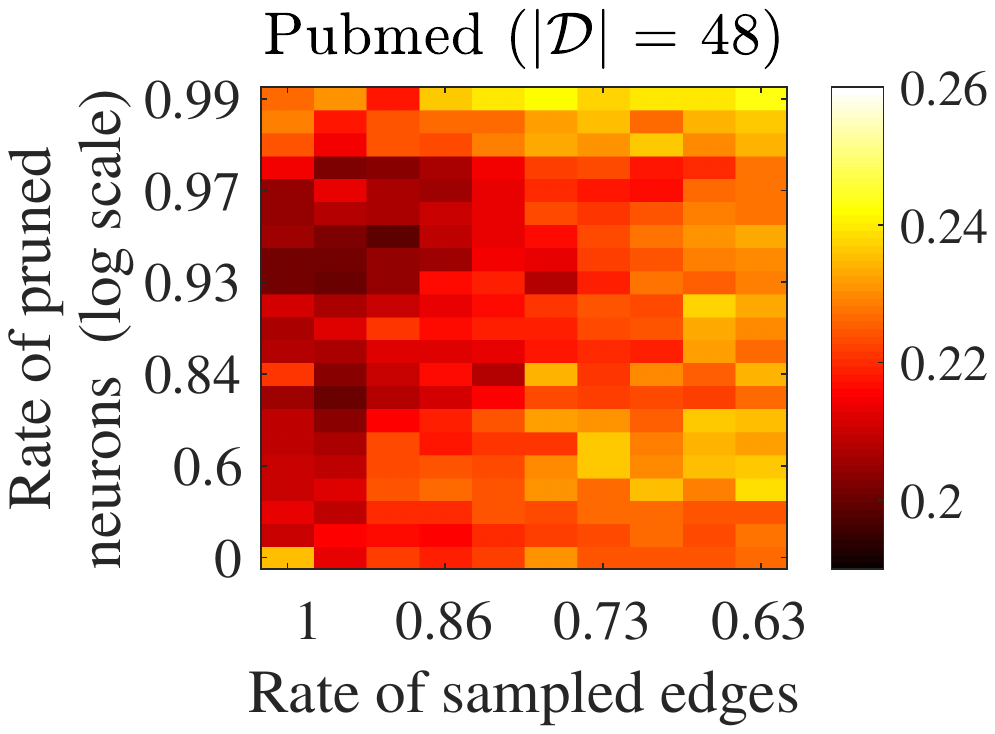}
     \end{subfigure}
     ~
     \begin{subfigure}[b]{0.49\textwidth}
         \centering
         \includegraphics[width=0.8\textwidth]{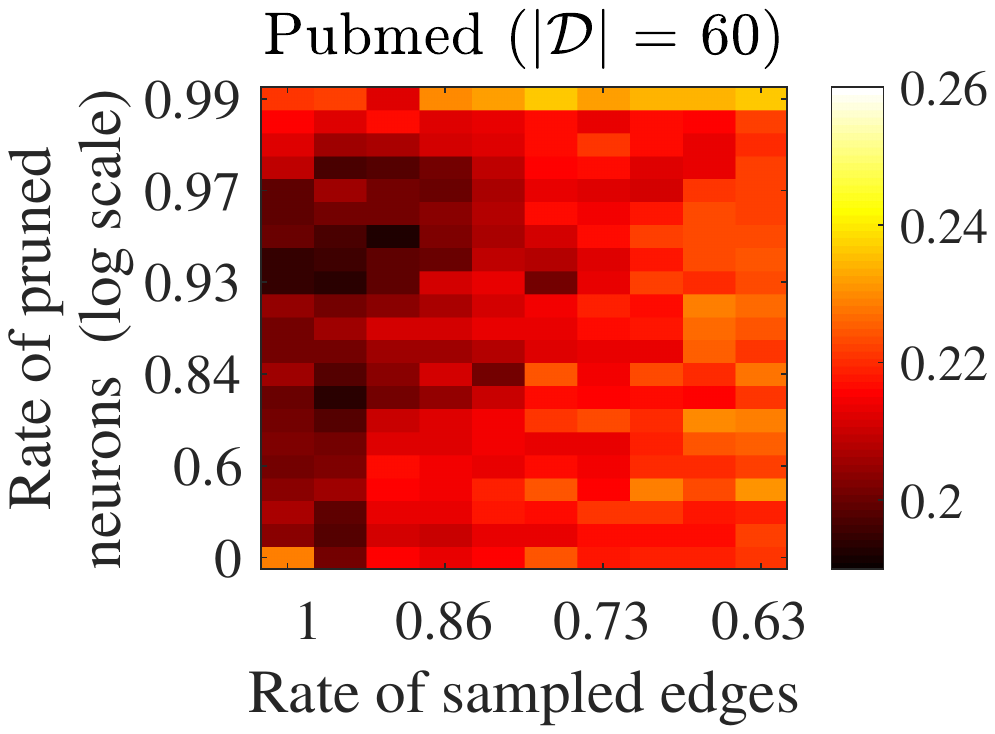}
     \end{subfigure}
    \caption{Heatmaps depicting the test errors on Cora (sub-figures in the first row), Citeseer (sub-figures in the second row), Pubmed (sub-figures in the third row) with different number of labeled nodes.}
    \label{fig: real2}
\end{figure}

\subsection{Additional experiments on small-scale real data}

In Figures \ref{fig: real1}-\ref{fig: real3}, we implement the edge sampling methods together with magnitude-based neuron pruning methods.
In Figures \ref{fig: real1} and \ref{fig: real2},
the \textit{Unified GNN Sparsification (UGS)} in \cite{CSCZW21} is implemented as the edge sampling method \footnote{The experiments are implemented using the codes from \url{https://github.com/VITA-Group/Unified-LTH-GNN}}, and the GNN model used here is the standard graph convolutional neural network (GCN) \cite{KW17} (two message passing).  In Figure \ref{fig: real3},  we implement GraphSAGE with pooling aggregation as the sampling method \footnote{The experiments are implemented using the codes from \url{https://github.com/williamleif/GraphSAGE}}. 
Except in Figure \ref{fig: real2}, we use 140 (Cora), 120 (Citeseer) and 60 (PubMed) labeled data for training, 500 nodes for validation and 1000 nodes for testing. In Figure \ref{fig: real2}, the number of training data is denoted in the title of each sub-figure. 

\begin{figure}[h]
     \centering
     \begin{subfigure}[b]{0.32\textwidth}
         \centering
         \includegraphics[width=1\textwidth]{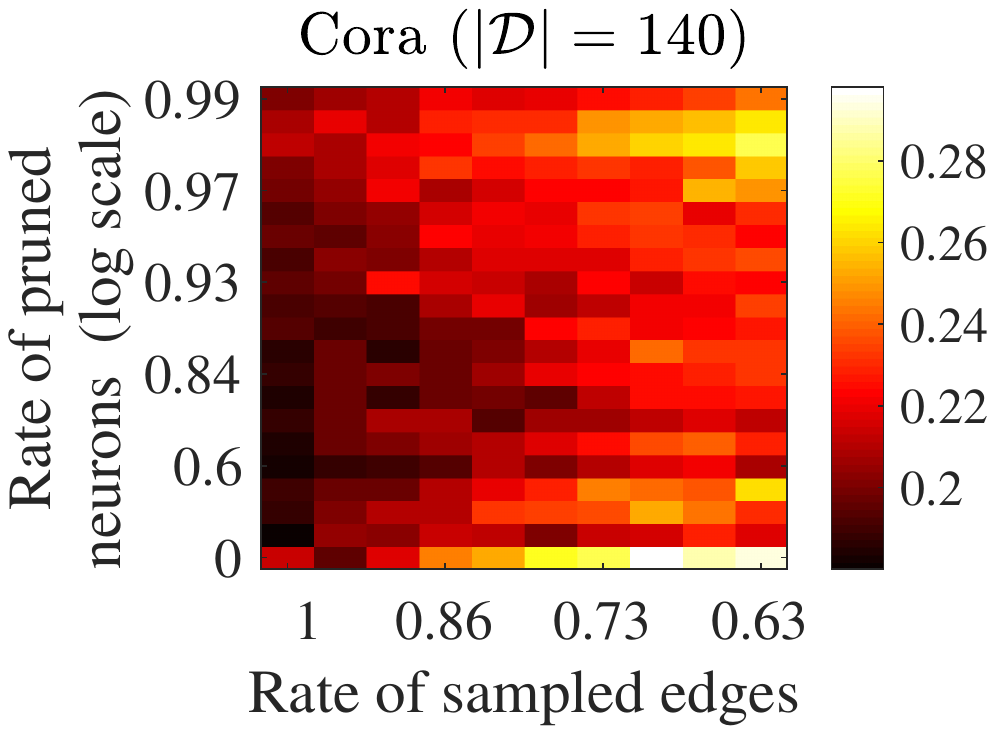}
     \end{subfigure}
     ~
     \begin{subfigure}[b]{0.32\textwidth}
         \centering
         \includegraphics[width=1\textwidth]{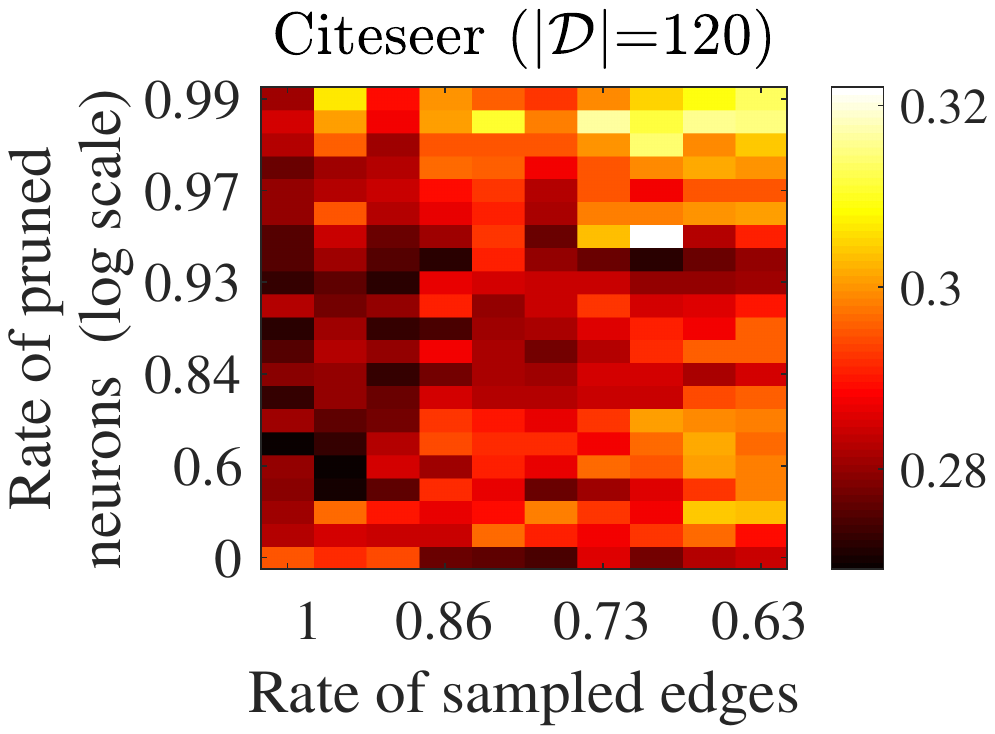}
     \end{subfigure}
     ~
     \begin{subfigure}[b]{0.32\textwidth}
         \centering
         \includegraphics[width=1.0\textwidth]{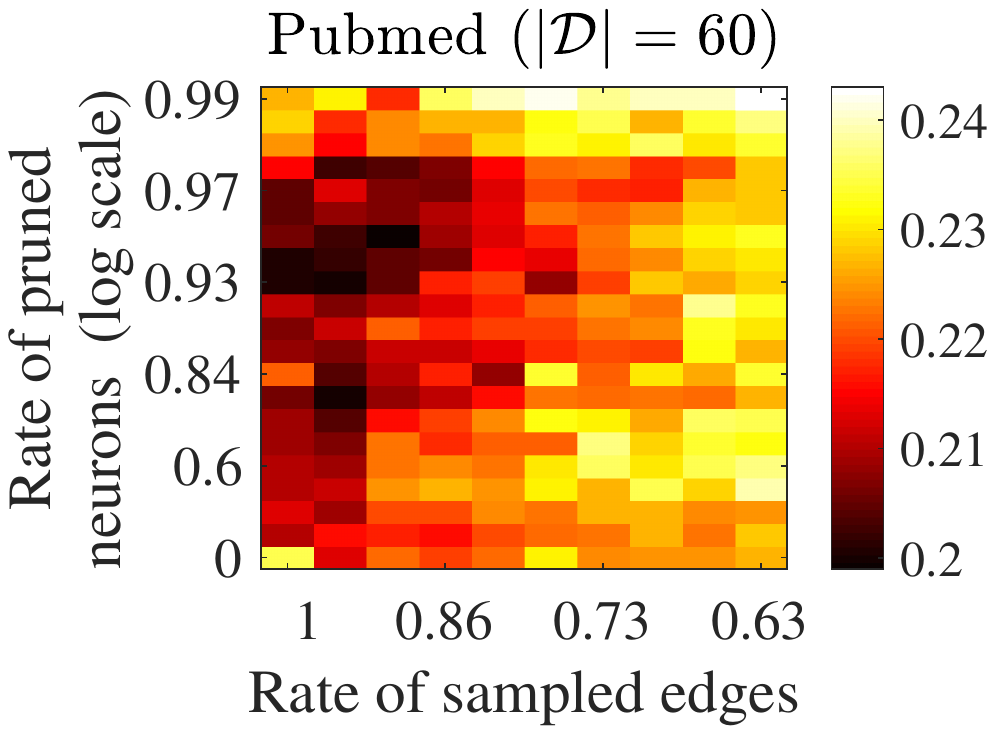}
     \end{subfigure}
    \caption{Node classification performance of joint GraghSAGE-pooling and model pruning.}
    \label{fig: real3}
\end{figure}

Figure \ref{fig: real1} show that both edge sampling using UGS and magnitude-based neuron pruning can reduce the test error on Cora, Citeseer, and Pubmed datasets, which justify our theoretical findings that joint sparsification improves the generalization. In comparison, random pruning degrades the performance with large test errors than the
baseline.

Figures \ref{fig: real2} and \ref{fig: real3} show the test errors on Cora, Citeseer, and Pubmed datasets under different sampling and pruning rates, and darker colors denote lower errors. In all the sub-figures, we can observe that the joint edge sampling and pruning can reduce the test error, which justifies the efficiency of joint edge-model sparsification. 
For Figures \ref{fig: real2}, by comparing the performance of joint sparsification on the same dataset but with different training samples, one can conclude that joint model-edge sparsification with a smaller number of training samples  can achieve similar or even better performance  than that without sparsification.


\begin{figure}[h]
     \centering
     \begin{subfigure}[b]{0.32\textwidth}
         \centering
         \includegraphics[width=1\textwidth]{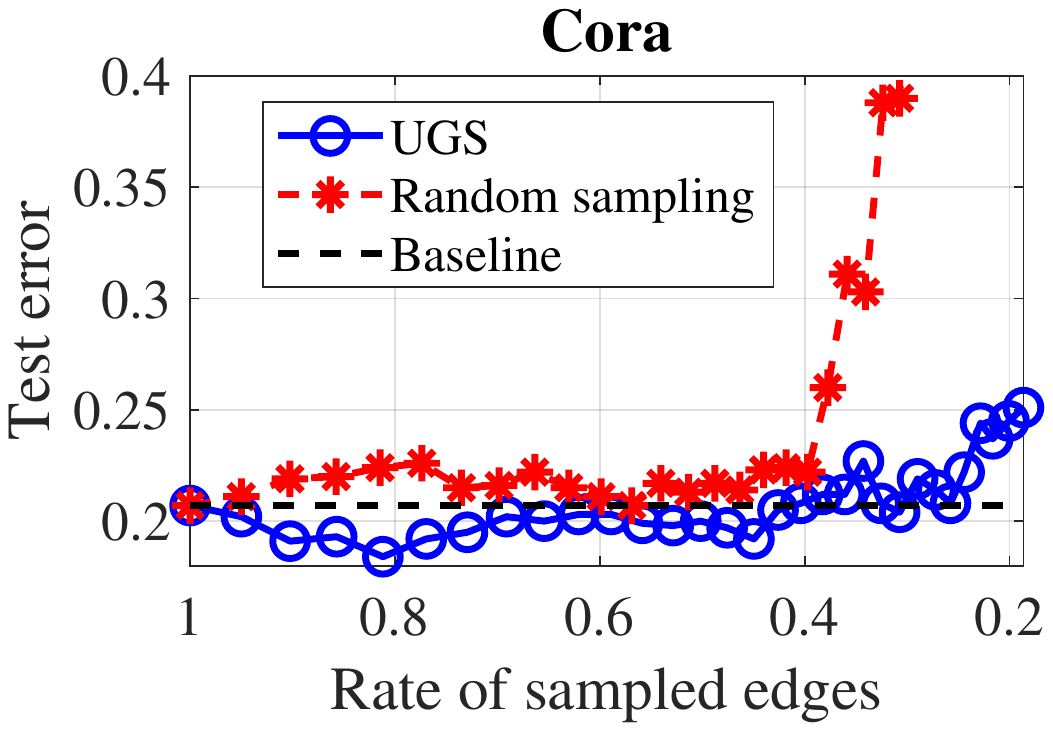}
     \end{subfigure}
     ~
     \begin{subfigure}[b]{0.32\textwidth}
         \centering
         \includegraphics[width=1\textwidth]{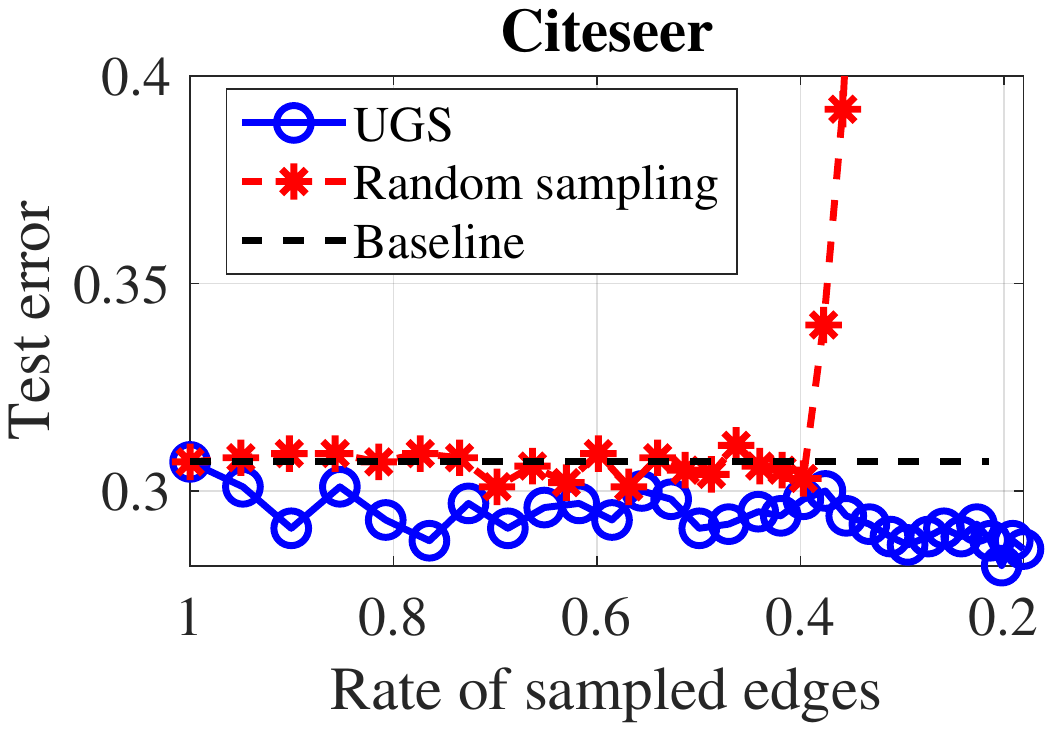}
     \end{subfigure}
     ~
     \begin{subfigure}[b]{0.31\textwidth}
         \centering
         \includegraphics[width=1.0\textwidth]{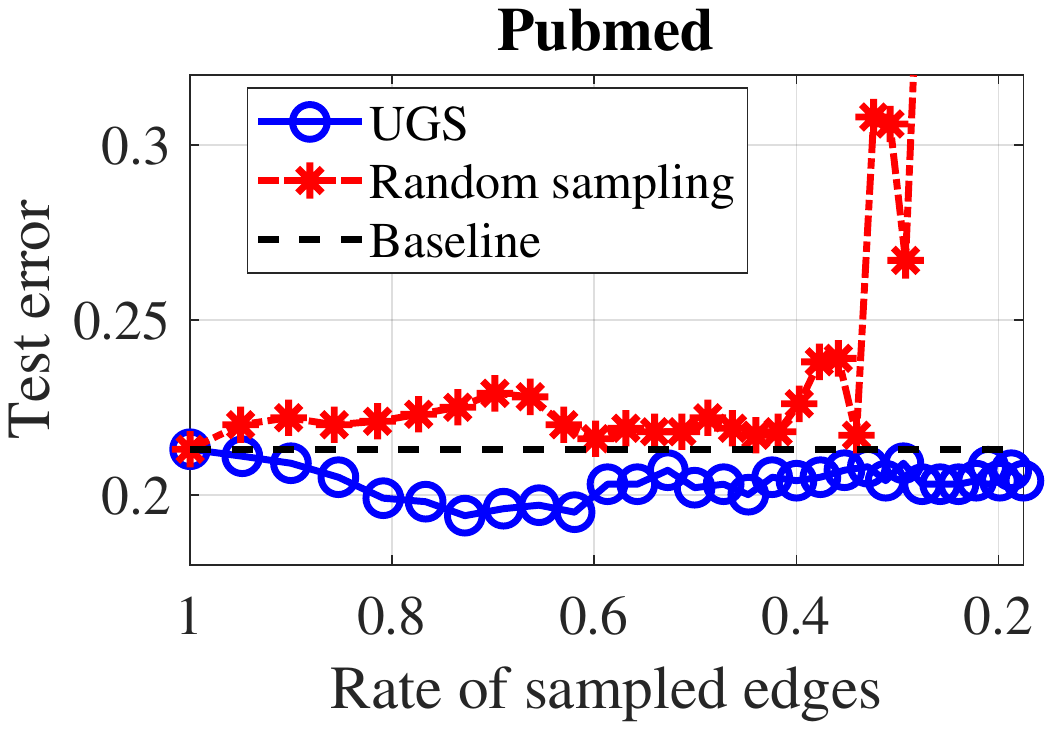}
     \end{subfigure}
     ~
    \begin{subfigure}[b]{0.32\textwidth}
         \centering
         \includegraphics[width=1\textwidth]{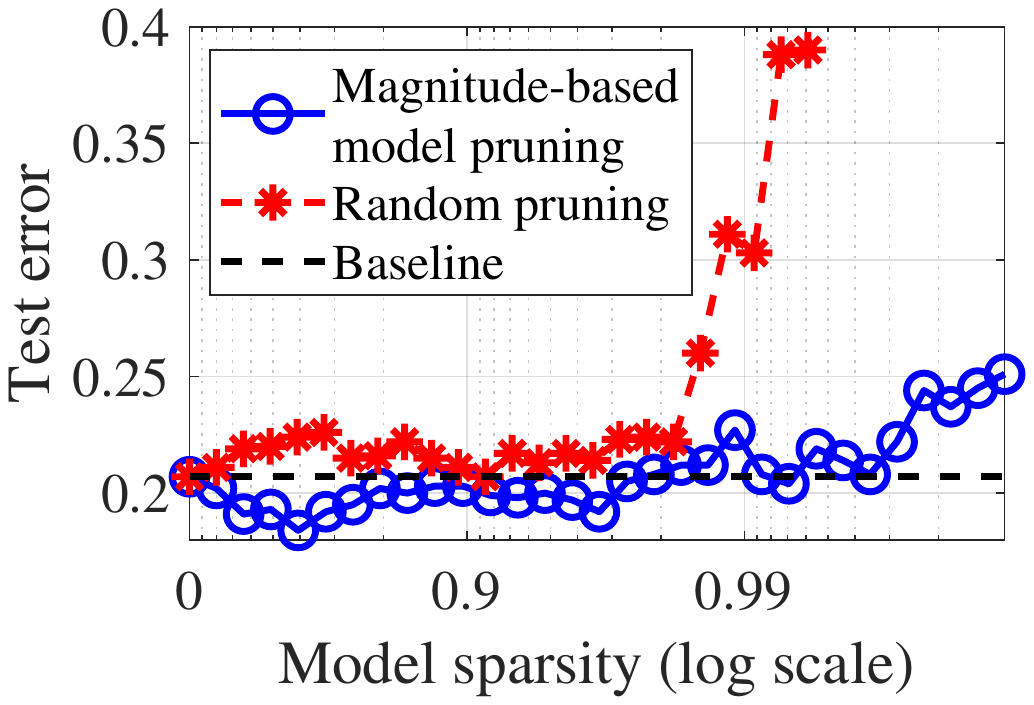}
     \end{subfigure}
     ~
     \begin{subfigure}[b]{0.32\textwidth}
         \centering
         \includegraphics[width=1\textwidth]{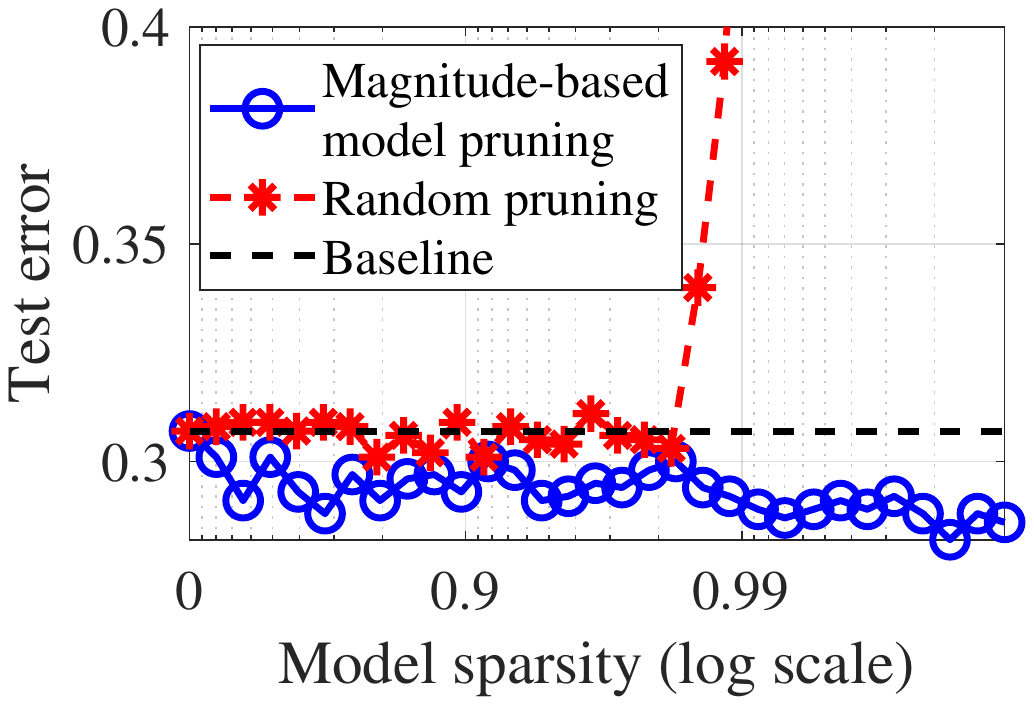}
     \end{subfigure}
     ~
     \begin{subfigure}[b]{0.32\textwidth}
         \centering
         \includegraphics[width=1.0\textwidth]{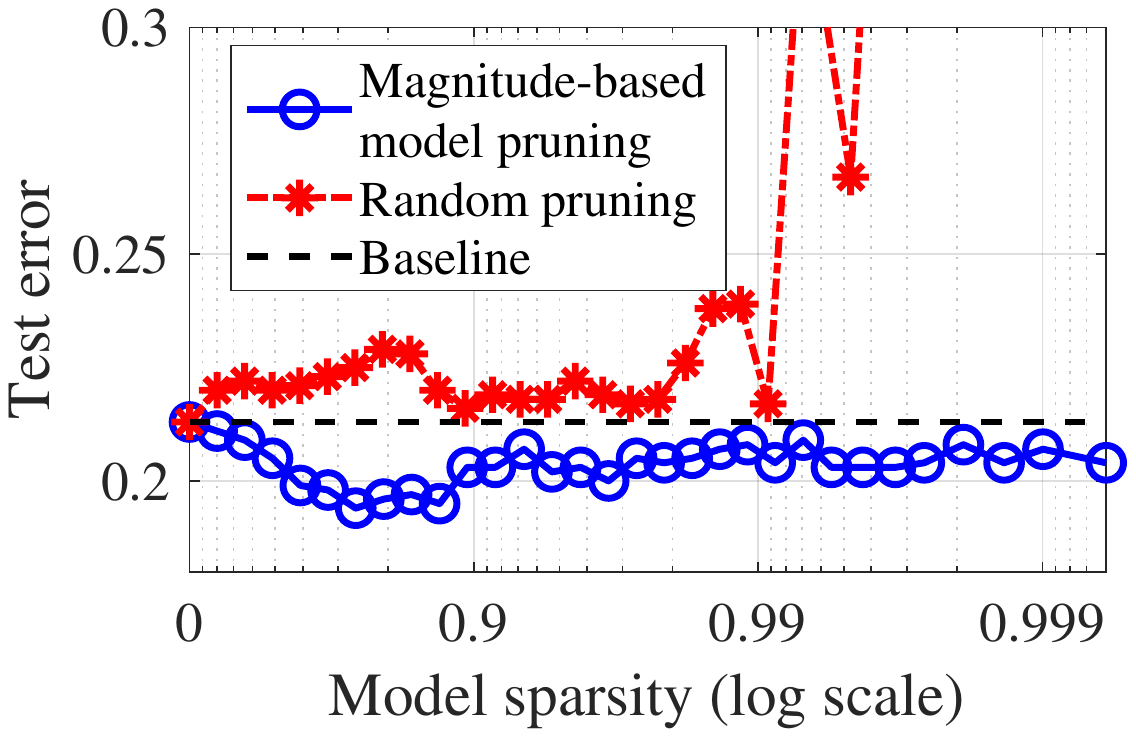}
     \end{subfigure}
    \caption{Node classification performance of GCN on Cora (sub-figures in the first column), Citeseer (sub-figures in the second column), Pubmed (sub-figures in the third column).}
    \label{fig: real1}
\end{figure}

\subsection{\ICLRRevise{Additional experiments on large-scale real data}}

\ICLRRevise{In this part, we have implemented the experiment on a large-scale protein dataset named Ogbn-Proteins \cite{HFZDRLC20} and Citation dataset named Obgn-Arxiv \cite{WSH20} via a 28-layer ResGCN. The experiments follow the same setup as \cite{CSCZW21} by implementing Unified GNN sparsification (UGS) and magnitude-based neuron pruning method as the training algorithms. The Ogbn-proteins dataset is an undirected graph with 132,534 nodes and 39,561,252 edges. Nodes represent proteins, and edges indicate different types of biologically meaningful associations between proteins. All edges come with 8-dimensional features, where each dimension represents the approximate confidence of a single association type and takes values between 0 and 1. The Ogbn-Arxiv dataset is a citation network with 169,343 nodes and 1,166,243 edges. Each node is an arXiv paper and each edge indicates that one paper cites another one. Each node comes with a 128-dimensional feature vector obtained by averaging the embeddings of words in its title and abstract.}

\ICLRRevise{The test errors of training the ResGCN on the datasets above using joint edge-model sparsitification are summarized in Figure \ref{fig:large_data}. 
 We can see that the joint model sparsification can improve the generalization error, which have justified our theoretical insights of joint sparsification in reducing the sample complexity.}

\begin{figure}[h]
     \centering
     \begin{subfigure}[b]{0.49\textwidth}
         \centering
         \includegraphics[width=1.0\textwidth]{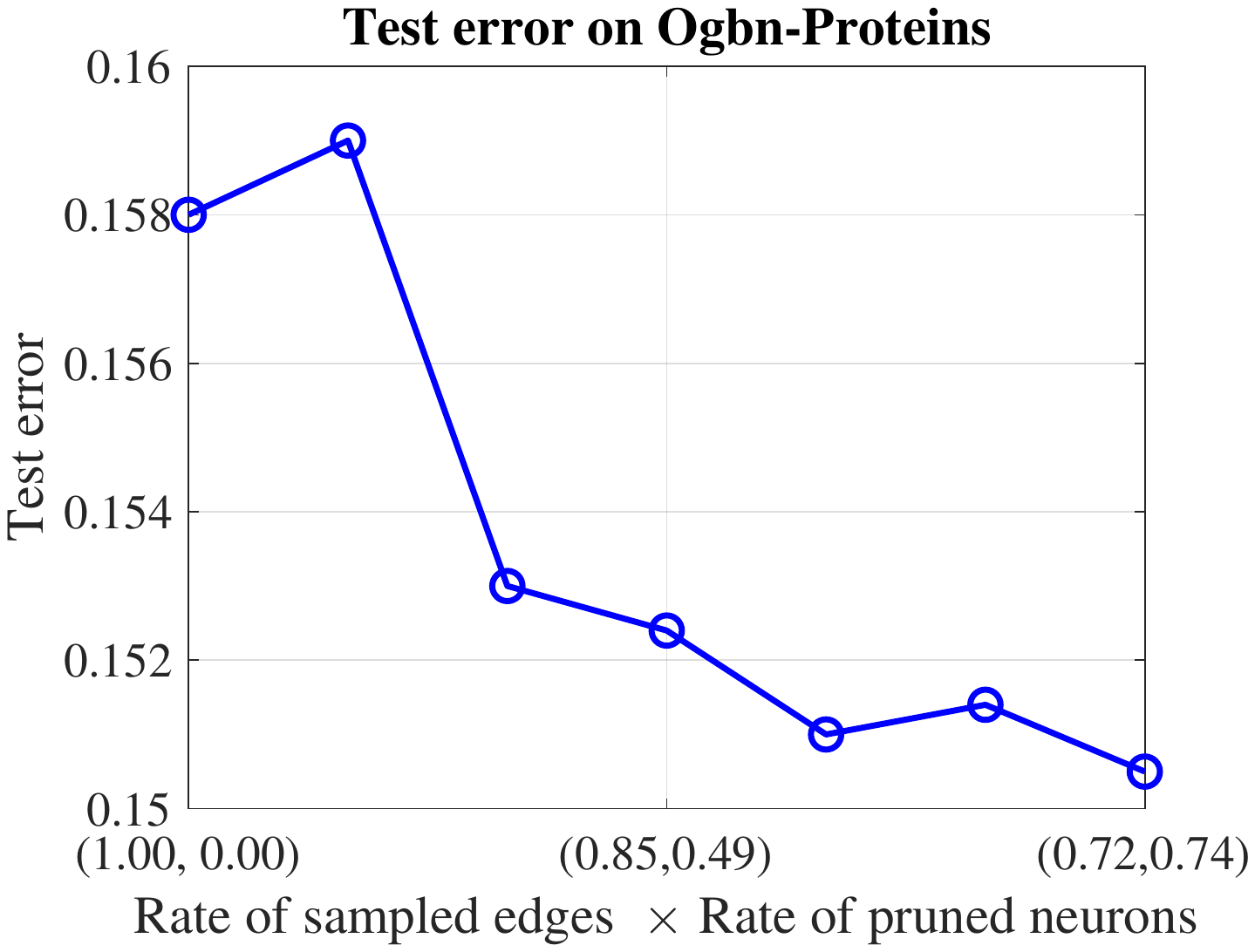}
         \caption{Ogbn-Proteins.}
         \label{fig:proteins}
     \end{subfigure}
     \hfill
     \begin{subfigure}[b]{0.49\textwidth}
         \centering
         \includegraphics[width=1.0\textwidth]{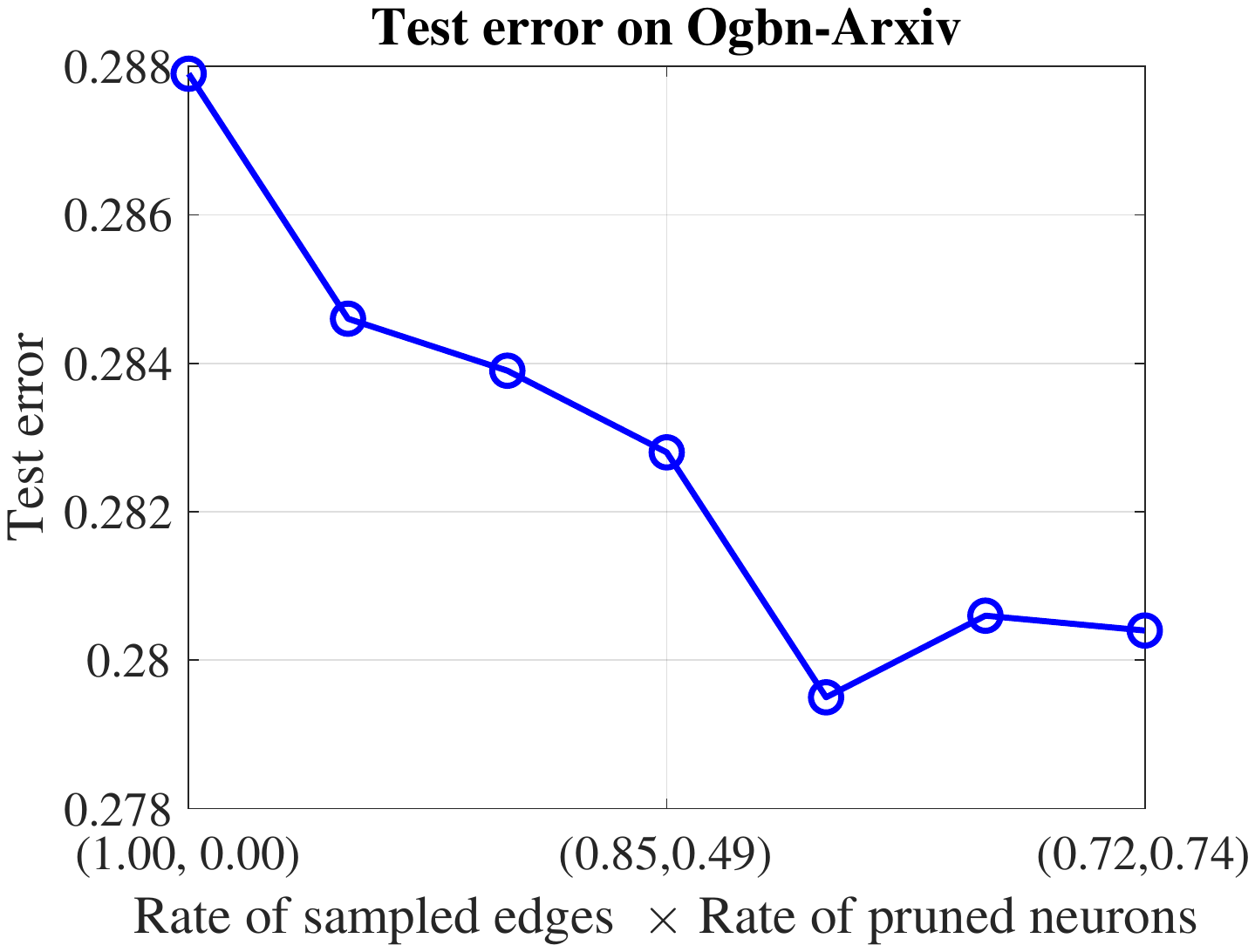}
         \caption{Ogbn-Arxiv.}
         \label{fig:Arxiv}
     \end{subfigure}
     \caption{ Test error on (a) Ogbn-Proteins and (b) Ogbn-Arxiv.}
     \label{fig:large_data}
\end{figure}

\section{The VC dimension of the graph structured data and two-layer GNN}\label{app: VC}
Following the framework in \cite{STH18},
we define the input data as $(\mathcal{G}_v, v)$, where $\mathcal{G}_v$ is the sub-graph with respect to node $v$. The input data distribution considered in this paper is denoted as $\mathcal{G}\times \mathcal{V}$. To simplify the analysis, for any $(\mathcal{G}_v, v)\in\mathcal{V}$, $\mathcal{N}(v)$ includes exactly one feature from $\{\bfp_+,\bfp_-\}$, and all the other node features comes from $\mathcal{P}_N$.
Specifically, we summarize the VC dimension of the GNN model as follows.
\begin{definition}[Section 4.1, \cite{STH18}]
    Let $g$ be the GNN model and data set $\mathcal{D}$ be from $\mathcal{G}\times \mathcal{V}$. $g$ is said to shatter $\mathcal{D}$ if for any set of binary assignments 
    $\bfy \in \{0 , 1 \}^{|\mathcal{D}|}$, there exist parameters $\boldsymbol{\Theta}$ such that $g(\boldsymbol{\Theta}; (\mathcal{G}_v , v)) >0$ if $y_v =1$ and $g(\boldsymbol{\Theta}; (\mathcal{G}_v , v)) <0$ if $y_v =0$.
    Then, the VC-dim of the GNN is defined as the size of the maximum set that can be shattered, namely,
    \begin{equation}
        \text{VC-dim}(g) = \max_{\mathcal{D} \text{ is shattered by } g}|\mathcal{D}|.
    \end{equation}
\end{definition}
Based on the definition above, we derive the VC-dimension of the GNN model over the data distribution considered in this paper, which is summarized in Theorem \ref{Thm: Vc-dim}.  
Theorem \ref{Thm: Vc-dim} shows that the VC-dimension of the GNN model over the data distribution is at least $2^{L/2-1}$, which is an exponential function of $L$. 
Inspired by \cite{BG21} for CNNs, the major idea is to construct a special dataset that the neural network can be shattered.
Here, we extend the proof by constructing a set of graph structured data as in \eqref{eqn: vc_definition}. Then, for any combination of labels, from \eqref{eqn: inver}, one can construct a solvable linear system that can fit such labels. 
\begin{theorem}[VC-dim]\label{Thm: Vc-dim}
    Suppose the number of orthogonal features is $L$, then the VC dimension of the GNN model over the data $\mathcal{G}\times \mathcal{V}$ satisfies
    \begin{equation}
        \text{VC-dim}(\mathcal{H}_{\text{GNN}}(\mathcal{G}\times \mathcal{V})) \ge 2^{\frac{L}{2}-1}.
    \end{equation}
\end{theorem}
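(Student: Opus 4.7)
The plan is to exhibit $N = 2^{L/2-1}$ graph-structured inputs that are shattered by the GNN hypothesis class. Let $M = L/2 - 1$ and pick $M$ of the class-irrelevant patterns, say $\bfp_3,\dots,\bfp_{M+2}$, which together with $\bfp_+$ form an orthogonal set. For each subset $S\subseteq [M]$, I would construct an input $(\mathcal{G}_v^S, v^S)$ so that $\mathcal{N}(v^S)$ contains one node with feature $\bfp_+$ and, for every $j\in S$, one additional node with feature $\bfp_j$. The resulting $2^M$ inputs are pairwise distinct and satisfy the stated data-model restriction that each $\mathcal{N}(v)$ contains exactly one class-relevant pattern.

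To shatter this set I would use $K = 2^M$ neurons indexed by subsets $T\subseteq[M]$. Set $\bfw_T = c_T \sum_{j\in T}\bfp_j$ for $T\neq\emptyset$ and $\bfw_\emptyset = c_\emptyset\,\bfp_+$, with $c_T>0$ to be chosen, and allow $b_T\in\{-1,+1\}$. By orthogonality and the max-pooling aggregator \eqref{eqn:pool_agg},
\begin{equation*}
\max_{n\in\mathcal{N}(v^S)}\phi(\vm{\bfw_T}{\bfx_n})= \begin{cases} c_\emptyset, & T=\emptyset,\\ c_T\cdot \mathbb{1}[T\cap S\neq\emptyset], & T\neq\emptyset.\end{cases}
\end{equation*}
Writing $\alpha_T = b_T c_T\in\mathbb{R}$, the sign of the GNN output at $S$ coincides with the sign of $(A\boldsymbol{\alpha})_S$, where $A$ is the $2^M\times 2^M$ matrix with $A_{S,\emptyset}=1$ and $A_{S,T}=\mathbb{1}[T\cap S\neq\emptyset]$ for $T\neq\emptyset$. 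Realizing any prescribed label vector $\bfy\in\{-1,+1\}^{2^M}$ thus reduces to solving $A\boldsymbol{\alpha}=\bfy$ and then reading off $b_T=\mathrm{sign}(\alpha_T)$, $c_T=|\alpha_T|$.

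The main obstacle is proving that $A$ is invertible, which I would establish via zeta/Möbius inversion on the Boolean lattice of $[M]$. Using $\mathbb{1}[T\cap S\neq\emptyset]=1-\mathbb{1}[T\subseteq S^c]$, suppose $A\boldsymbol{\alpha}=\bfzero$. Specializing to $S=[M]$ (so $S^c=\emptyset$, and the only $T\neq\emptyset$ satisfying $T\subseteq S^c$ is none) forces $\alpha_\emptyset+\sum_{T\neq\emptyset}\alpha_T=0$. Substituting back reduces the remaining equations to $\sum_{T\neq\emptyset}\alpha_T\,\mathbb{1}[T\subseteq S^c]=0$ for every $S$. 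After the involution $S\mapsto U = S^c$, this is a homogeneous system governed by the zeta matrix $\zeta(T,U)=\mathbb{1}[T\subseteq U]$, which is upper-triangular with unit diagonal in any linear extension of the subset order and hence invertible (its inverse is the Möbius matrix $\mu(T,U)=(-1)^{|U\setminus T|}\mathbb{1}[T\subseteq U]$). Therefore $\alpha_T=0$ for all $T\neq\emptyset$, and then $\alpha_\emptyset=0$ from the earlier identity, so $A$ is non-singular and every sign pattern is realizable. This shatters the $2^M = 2^{L/2-1}$ constructed inputs and yields the claimed lower bound on the VC dimension.
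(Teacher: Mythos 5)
Your proposal is correct: it exhibits $2^{L/2-1}$ admissible inputs, realizes every sign pattern with an explicit weight assignment, and the invertibility argument for your matrix $A$ goes through (the homogeneous system reduces, after complementation, to linear independence of the columns of the Boolean-lattice zeta matrix indexed by nonempty $T$, which holds since the full zeta matrix is unitriangular). The paper follows the same high-level template — an explicit shattered set, one neuron per data point, and a reduction to the invertibility of a $2^{L/2-1}\times 2^{L/2-1}$ zero--one matrix — but the combinatorial design differs. The paper indexes inputs by binary vectors $\bfJ\in\{0,1\}^{L/2-1}$, placing in each neighborhood exactly one pattern from each of $L/2-1$ disjoint pairs $\{\bfp_{2i-1},\bfp_{2i}\}$; a neuron built from input $\bfJ'$ then fires on every input except the complementary one $\bfone-\bfJ$, so the resulting matrix is the all-ones matrix minus the identity, whose invertibility is immediate from its eigenvalues. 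Your design indexes inputs by subsets $S\subseteq[M]$ with neighborhoods of varying size, produces the set-intersection indicator matrix, and needs M\"obius inversion to certify non-singularity. The paper's pairing trick buys a trivial linear-algebra step and fixed-degree neighborhoods at the cost of using $L-2$ irrelevant patterns; yours uses only $L/2-1$ irrelevant patterns (so the same bound could in principle be pushed to roughly $2^{L-2}$ with your construction by taking $M=L-2$), at the cost of a heavier invertibility lemma and non-uniform neighborhood sizes, which are harmless for the VC argument but would need padding if one insisted on the paper's fixed-degree setup. Two minor points to tighten: specify the feature of the center node $v^S$ itself (the aggregation includes the self-loop), and note that for $S=\emptyset$ the neighborhood contains only the class-relevant node, which is still consistent with the stated restriction.
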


\begin{proof}
    Without loss of generality, we denote the class irrelevant feature as $\{\bfp_i\}_{i=1}^{L-2}$. Therefore, $\mathcal{P} = \{\bfp_i\}_{i=1}^{L-2}\bigcup \bfp_+ \bigcup \bfp_-$. Let us define a set $\mathcal{J}$ such that 
    $$\mathcal{J} = \{ \bfJ \in \{0,1\}^{L/2 -1} \mid J_i = 0 \text{ or } 1, 1\le i \le L/2 -1 \}.$$
    Then, we define a subset $\mathcal{D}\in \mathcal{G}\times \mathcal{V}$ with the size of $2^{L/2-1}$ based on the set $\mathcal{J}$.
    
    For any $(\mathcal{G}_v,v)$, we consider the graph structured data  $\mathcal{G}_v$ that connects to exactly $L/2-1$ neighbors. 
    Recall that for each node $v$, the feature matrix $\bfX_{\mathcal{N}(v)}$ must contain either $\bfp_+$ or $\bfp_-$, we randomly pick one node $u \in \mathcal{N}(v)$, and let $\bfx_{u}$ be $\bfp_+$ or $\bfp_-$. Next, we sort all the other nodes in $\mathcal{N}(v)$ and label them as $\{v_i\}_{i=1}^{L/2 -1}$. Then, given an element $\bfJ \in \mathcal{J}$, we let the feature of node $v_i$ as 
    \begin{equation}\label{eqn: vc_definition}
        \bfx_{v_i} = J_i \bfp_{2i-1} + (1-J_i) \bfp_{2i},
    \end{equation}
    where $J_i$ is the $i$-th entry of vector $\bfJ$.
    we denote such constructed data $(\mathcal{G}_v,v)$ as $D_{\bfJ}$. Therefore, the data set $\mathcal{D}$ is defined as 
    \begin{equation}
        \mathcal{D} = \{ D_{\bfJ} \mid \bfJ \in \mathcal{J} \}.
    \end{equation}
    Next, we consider the GNN model in the form of \eqref{eqn: output1} with $K = 2^{L/2-1}$. Given the label $y_{\bfJ}$ for each data $D_{\bfJ}\in\mathcal{D}$, we obtain a group of $\alpha_{\bfJ}\in \mathbb{R}$ by solving the following equations
    \begin{equation}\label{eqn:linear}
        \sum_{\bfJ'\in\mathcal{J}/\bfJ^{\dagger}} \alpha_{\bfJ'} = y_{\bfJ},  \forall \bfJ \in \mathcal{J},
    \end{equation}
    where $\bfJ^{\dagger} = \boldsymbol{1} - \bfJ$. We can see that \eqref{eqn:linear} can be re-written as 
    \begin{equation}\label{eqn: inver}
        \begin{bmatrix}
            0 & 1 & \cdots & 1\\
            1 & 0 & \cdots & 1\\
            \vdots & \vdots & \ddots & \vdots\\
            1 & 1 & \cdots & 0\\
        \end{bmatrix}
        \cdot
        \begin{bmatrix}
            \alpha_1\\
            \alpha_2\\
            \vdots\\
            \alpha_{L/2-1}\\
        \end{bmatrix}
        = \begin{bmatrix}
            y_{1^\dagger}\\
            y_{2^\dagger}\\
            \vdots\\
            y_{(L/2-1)^\dagger}\\
        \end{bmatrix}
    \end{equation}
    It is easy to verify that the matrix in \eqref{eqn: inver} is invertible. Therefore, we have a unique solution of $\{ \alpha_{\bfJ}\}_{\bfJ\in \mathcal{J}}$ given any fixed $\{y_{\bfJ}\}_{\bfJ \in \mathcal{J}}$.
    
    As $K$ is selected as $2^{L/2-1}$, we denote the neuron weights as $\{\bfw_{\bfJ} \}_{\bfJ\in\mathcal{J}}$ and $\{\bfu_{\bfJ} \}_{\bfJ\in\mathcal{J}}$ by defining
    \begin{equation}\label{eqn:w_alpha}
        \begin{gathered}
            \bfw_{\bfJ} = \max\{\alpha_{\bfJ} , 0 \} \cdot \sum_{ v' \in \mathcal{N}(v), v \in {D}_\bfJ} \bfx_{v'},\\
            \bfu_{\bfJ} = \max\{-\alpha_{\bfJ} , 0 \} \cdot \sum_{ v' \in \mathcal{N}(v), v \in {D}_\bfJ} \bfx_{v'}.
        \end{gathered}
    \end{equation}
    Then, for each $D_{\bfJ} \in \mathcal{D}$, we substitute \eqref{eqn:w_alpha} into \eqref{eqn: output1}:
    \begin{equation}
        g = \sum_{\bfJ'\in \mathcal{J}} 
        \Big(
        \max_{v'\in\mathcal{N}(v)}~~\sigma\big(\vm{\bfw_{\bfJ'}}{\bfx_{v'}}\big)-\max_{v'\in\mathcal{N}(v)}~~\sigma\big(\vm{\bfu_{\bfJ'}}{\bfx_{v'}}\big)
        \Big).
    \end{equation}
    When $\alpha_{\bfJ'}\ge 0$, we have $\bfu_{\bfJ'} =\bfzero$, and 
    \begin{equation}\label{eqn: vc_temp1}
    \begin{split}
        &\max_{v'\in\mathcal{N}(v)}~~\sigma\big(\vm{\bfw_{\bfJ'}}{\bfx_{v'}}\big) \\
        =& \alpha_{\bfJ'} \cdot \max~~~ \sum_{ u' \in \mathcal{N}(u), u \in D_\bfJ} \bfx_{u'}
        \sum_{ v' \in \mathcal{N}(v), v \in D_\bfJ} \bfx_{v'}\\
        =& \alpha_{\bfJ'} \cdot \max_{i}~~~ (J_i\bfp_{2i-1} +(1-J_i)\bfp_{2i})\cdot(J'_i\bfp_{2i-1} +(1-J'_i)\bfp_{2i}) \\
        = & \displaystyle\begin{cases}
            0, \qquad \qquad \text{~~~~~~~~~~~~if} \quad  \bfJ' = \bfone - \bfJ\\
            \alpha_{\bfJ'}, \qquad \text{otherwise}
        \end{cases},
        \end{split}
    \end{equation}
    where the second equality comes from \eqref{eqn: vc_definition}, and the last equality comes from the orthogonality of $\bfp \in \mathcal{P}$. 
    Similar to \eqref{eqn: vc_temp1}, when $\alpha_{\bfJ'}<0$, we have
    \begin{equation}
      \begin{split}
        &-\max_{v'\in\mathcal{N}(v)}~~\sigma\big(\vm{\bfu_{\bfJ'}}{\bfx_{v'}}\big) \\
        =& - 
        (-\alpha_{\bfJ'}) \cdot \max~~~ \sum_{ u' \in \mathcal{N}(u), u \in D_\bfJ} \bfx_{u'}
        \sum_{ v' \in \mathcal{N}(v), v \in D_\bfJ} \bfx_{v'}\\
        =& \alpha_{\bfJ'} \cdot \max_{i}~~~ (J_i\bfp_{2i-1} +(1-J_i)\bfp_{2i})\cdot(J'_i\bfp_{2i-1} +(1-J'_i)\bfp_{2i}) \\
        = & \begin{cases}
            0, \text{~~if~~}  \bfJ' = \bfone - \bfJ\\
            \alpha_{\bfJ'}, \text{otherwise}
        \end{cases}.
        \end{split}
    \end{equation}
    Therefore, \eqref{eqn:w_alpha} can be calculated as 
    \begin{equation}
        \begin{split}
            g 
            =&\sum_{\bfJ'\in \mathcal{J}} 
        \Big(
        \max_{v'\in\mathcal{N}(v)}~~\sigma\big(\vm{\bfw_{\bfJ'}}{\bfx_{v'}}\big)-\max_{v'\in\mathcal{N}(v)}~~\sigma\big(\vm{\bfu_{\bfJ'}}{\bfx_{v'}}\big)
        \Big)\\
        =& \sum_{\bfJ' \in \mathcal{J} / {\bfJ}^{\dagger}}\alpha_{\bfJ'} = y_{\bfI},
        \end{split}
    \end{equation}
    which completes the proof. 
\end{proof}

\subsection{The importance sampling probability for various sampling strategy}\label{app:alpha}
In this part, we provide the bound of $\alpha$ for different sampling strategies. For simplification, we consider a graph such that every node degree is $R$ ($R>r$), and we sample fixed $r$ neighbor nodes for   any node $v\in \mathcal{V}$. 

Lemma \ref{lemma: alpha1} provides the value of $\alpha$ for uniform sampling strategy, and $\alpha$ is in the order of $r/R$ and depends on the average degree of nodes containing class-relevant patterns. Formally, we have
\begin{lemma}\label{lemma: alpha1}
    \Revise{For uniform sampling strategy such that all neighbor nodes are sampled with the same probability, we have 
    \begin{equation}
        1-\Big(1-\frac{\bar{c}}{R}\Big)^r\le \mathbb{E} \alpha \le 1-\Big(1-\frac{\bar{c}}{R-\bar{c}+1}\Big)^r,
    \end{equation}
    where $\bar{c}$ is the degree of the nodes in $\mathcal{D}\cap (\mathcal{V}_+\cup \mathcal{V}_-)$.}
\end{lemma}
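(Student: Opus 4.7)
The plan is to compute $\mathbb{E}\alpha$ exactly for uniform sampling without replacement and then bound the resulting hypergeometric-type product from both sides using elementary inequalities. For a training node $v$ with degree $R$, let $c_v$ denote the number of its neighbors belonging to $\mathcal{V}_+\cup\mathcal{V}_-$. Because the $r$ sampled neighbors are drawn uniformly from the $R$ available ones, the probability that the sample contains at least one class-relevant node equals
$$\alpha_v \;=\; 1 - \frac{\binom{R-c_v}{r}}{\binom{R}{r}} \;=\; 1 - \prod_{i=0}^{r-1}\Bigl(1 - \tfrac{c_v}{R-i}\Bigr).$$
I would first treat the regular case $c_v = \bar c$ (constant over $v\in\mathcal{D}\cap(\mathcal{V}_+\cup\mathcal{V}_-)$) and then extend to the general case by Jensen's inequality applied to the concave map $c \mapsto 1-(1-c/R)^r$.

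For the lower bound, I would use the representation above directly. Since $R-i \le R$ for every $i \in \{0,\ldots,r-1\}$, one has $\bar c/(R-i) \ge \bar c/R$, so each factor is at most $1-\bar c/R$. Taking the product gives
$$\prod_{i=0}^{r-1}\Bigl(1 - \tfrac{\bar c}{R-i}\Bigr) \;\le\; \Bigl(1 - \tfrac{\bar c}{R}\Bigr)^r,$$
which is a pointwise bound and hence survives the expectation, yielding $\mathbb{E}\alpha \ge 1-(1-\bar c/R)^r$.

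For the upper bound, I would switch to the dual hypergeometric representation $\binom{R-\bar c}{r}/\binom{R}{r} = \binom{R-r}{\bar c}/\binom{R}{\bar c} = \prod_{j=0}^{\bar c-1}(1-r/(R-j))$. Here the denominator satisfies $R-j \ge R-\bar c+1$ for $j \le \bar c-1$, so each factor is at least $1 - r/(R-\bar c+1)$, yielding $\prod_j(\cdot) \ge (1-r/(R-\bar c+1))^{\bar c}$. Converting this into the exponent--base pairing stated in the lemma, namely $(1-\bar c/(R-\bar c+1))^r$, is done by noting that both quantities equal $\exp(-\bar c r/(R-\bar c+1))$ to leading order and using a Bernoulli-type comparison to obtain the precise inequality direction required.

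The main obstacle I anticipate is this last step: reconciling the $(r,\bar c)$ pairing produced naturally by one product representation with the $(\bar c,r)$ pairing appearing in the lemma statement. This requires careful selection of which representation to use for which direction (the $i$-indexed $r$-fold product for the lower bound and the $j$-indexed $\bar c$-fold product for the upper bound), plus a short log-convexity argument to match the exponents. The extension from the regular case to heterogeneous $c_v$ is then routine via Jensen: the lower bound uses concavity of $c\mapsto 1-(1-c/R)^r$, and the upper bound uses convexity of the product $\prod_j(1-r/(R-j))$ in $c$ through the dual representation.
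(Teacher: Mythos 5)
The paper does not actually include a proof of this lemma in the supplementary material, so I can only judge your argument on its own merits. Your exact hypergeometric computation and both one-sided product bounds are correct: the $r$-fold product gives $\mathbb{E}\alpha\ge 1-(1-\bar{c}/R)^r$ (and also $\mathbb{E}\alpha\le 1-(1-\bar{c}/(R-r+1))^r$), while the dual $\bar{c}$-fold product gives $\mathbb{E}\alpha\le 1-\bigl(1-r/(R-\bar{c}+1)\bigr)^{\bar{c}}$. The genuine gap is exactly the step you flagged as the "main obstacle": the conversion from $(1-r/m)^{\bar{c}}$ to $(1-\bar{c}/m)^{r}$ with $m=R-\bar{c}+1$ cannot be carried out in the direction you need. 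Writing $g(x)=\ln(1-x/m)/x$, one checks $g$ is decreasing on $(0,m)$, so $(1-a/m)^{b}\ge(1-b/m)^{a}$ holds if and only if $a\le b$; hence $(1-r/m)^{\bar{c}}\ge(1-\bar{c}/m)^{r}$ only when $r\le\bar{c}$. In the typical regime $r>\bar{c}$ the inequality reverses, so no Bernoulli-type or log-convexity argument can rescue it. Indeed the stated upper bound is simply false there: take $R=10$, $\bar{c}=1$, $r=5$; the exact probability is $1-\binom{9}{5}/\binom{10}{5}=1/2$, but the claimed bound is $1-(1-1/10)^{5}\approx0.41$. The printed bound appears to be a typo --- either the denominator should be $R-r+1$ (keeping exponent $r$ and numerator $\bar{c}$, which is what your primal product yields) or the roles of $r$ and $\bar{c}$ in the numerator and exponent should be swapped (which is what your dual product yields). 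Your proof correctly establishes both of these corrected forms but cannot establish the statement as written.

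A secondary issue: your Jensen step for heterogeneous $c_v$ runs the wrong way for the lower bound. The map $c\mapsto 1-(1-c/R)^{r}$ is concave, so Jensen gives $\mathbb{E}_v\bigl[1-(1-c_v/R)^{r}\bigr]\le 1-(1-\bar{c}/R)^{r}$ when $\bar{c}=\mathbb{E}_v c_v$, which is an upper bound on the average of your pointwise lower bounds, not the lower bound on $\mathbb{E}\alpha$ you need. Since the lemma treats $\bar{c}$ as a single common degree, the cleanest fix is to restrict to the homogeneous case and drop the Jensen extension altogether.
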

\begin{customthm}{\ref{lemma: alpha1}.1}
For uniform sampling strategy, when the node  in $\mathcal{V}_N$ connects to exactly one node in $\mathcal{V}_+\cap \mathcal{V}_-$, then $\mathbb{E}\alpha = \frac{r}{R}$.
\end{customthm}
\begin{customthm}{\ref{lemma: alpha1}.2}\label{coro:2}
For uniform sampling strategy, $\mathbb{E}\alpha\ge \frac{\bar{c}r}{R}$ when $r\ll R$.
\end{customthm}

\Revise{\textbf{Remark} 9.1: Given $\alpha = \frac{r}{R}$, \eqref{eqn: number_of_samples}  yields the same sample complexity bound for all $r$, which leads to no improvement by using graph sampling. 
In addition, from \eqref{eqn: number_of_iteration}, we can see that the required number of iterations increase by a factor of $R/r$, while the sampled edges during each iteration is reduced by a factor of $r/R$. Considering the computational resources used in other steps, we should expect an increased total computational time.}

\Revise{\textbf{Remark} 9.2:
From Corollary \ref{coro:2}, we can see that uniform sampling can save the sample complexity by a factor of $\frac{1}{\bar{c}^2}$ when $r\ll R$ on average. However, one realization of $\alpha$ may have large variance with too small $r$.  In practice, a medium range of $r$ is desired for  both saving sample complexity and algorithm stability.}

For  FastGCN \cite{CMX18}, each node is assigned with a sampling probability. 
We consider a simplified version of such importance sampling approach by dividing training data $\mathcal{D}$ into two subsets $\mathcal{D}_I$ and $\mathcal{D}_U$, and the sampling probability of the nodes in the same subset is identical. Let $\gamma$ be the ratio of sampling probability of two groups, i.e.,
$$\gamma = \frac{\text{Prob}(v\in \mathcal{D}_I \text{is sampled})}{\text{Prob}(v\in \mathcal{D}_U \text{is sampled})}~~(\gamma\ge 1).$$
Lemma \ref{lemma: alpha2} describes the importance sampling strategy when sampling class-relevant nodes is higher than sampling class-irrelevant nodes by a factor of $\gamma$. From Corollary \ref{coro: alpha2}, we know that $\alpha$ can be improved over uniform sampling by a factor of $\gamma$.
Formally, we have
\begin{lemma}\label{lemma: alpha2}
    Suppose $\mathcal{D}_I=\mathcal{V}_+\cup \mathcal{V}_-$, and $\mathcal{D}_U=\mathcal{V}_N$, we have 
    \begin{equation}
        \mathbb{E}\alpha \ge \frac{\gamma r}{R-r+\gamma}.
    \end{equation}
\end{lemma}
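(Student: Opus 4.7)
The plan is to establish the lower bound on $\mathbb{E}\alpha$ by a direct analysis of the importance-sampling scheme, leveraging assumption (A1) which guarantees every training node has at least one neighbor in $\mathcal{D}_I=\mathcal{V}_+\cup\mathcal{V}_-$. First, I would fix a training node $v$ and let $c$ denote its number of class-relevant neighbors; by (A1), $c\ge 1$. Under FastGCN-style weighted sampling, each class-relevant neighbor carries weight $\gamma$ and each class-irrelevant neighbor weight $1$, and $r$ of the $R$ neighbors are drawn sequentially without replacement with probability proportional to the remaining weights.

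Next I would bound the complementary event that no class-relevant neighbor is chosen. Conditioning on the first $k-1$ draws all landing in $\mathcal{D}_U$, the conditional probability that the $k$-th draw also lands in $\mathcal{D}_U$ is
\begin{equation*}
\frac{R-c-k+1}{c\gamma+R-c-k+1}.
\end{equation*}
A short monotonicity check shows this factor is non-decreasing in $c$ for $\gamma\ge 1$, so the \emph{miss} probability $P_{\mathrm{miss}}(c)$ is maximized at the smallest admissible value $c=1$. Multiplying the conditional probabilities yields
\begin{equation*}
\mathbb{E}\alpha \;\ge\; 1-\prod_{k=1}^{r}\frac{R-k}{\gamma+R-k}.
\end{equation*}

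The final step is to convert this product bound into the closed form $\frac{r\gamma}{R-r+\gamma}$. The natural route is induction on $r$: the base case $r=1$ is immediate since both sides equal $\frac{\gamma}{\gamma+R-1}$, and the inductive step amounts to showing that the marginal gain from adding a $r$-th draw dominates the increment $\frac{r\gamma}{R-r+\gamma}-\frac{(r-1)\gamma}{R-r+1+\gamma}$. Equivalently, using the telescoping identity
\begin{equation*}
1-\prod_{k=1}^{r}\frac{R-k}{\gamma+R-k} \;=\; \sum_{k=1}^{r}\frac{\gamma}{\gamma+R-k}\prod_{j=1}^{k-1}\frac{R-j}{\gamma+R-j},
\end{equation*}
one can lower-bound each partial product by its value at $k=r$ and reassemble the resulting harmonic-type sum against the target rational function.

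The main obstacle will be this last algebraic step: the per-draw conditional probability $\gamma/(\gamma+R-k)$ increases with $k$, but the target bound $r\gamma/(R-r+\gamma)$ grows faster than a naive term-by-term lower bound allows. Closing the gap cleanly will require carefully tracking the dependence in the partial products (or, alternatively, invoking convexity of $x\mapsto 1/x$ applied to the denominators) so that the cumulative rather than pointwise effect of the $r$ draws is exploited. If the tight induction proves elusive, a fallback would be to restrict to the worst-case neighborhood composition $c=1$ identified in step two and verify the bound by comparing the derivative with respect to $r$ on both sides, appealing to equality at $r=1$ as the boundary condition.
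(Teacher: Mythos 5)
The paper itself never proves this lemma: the ``Proof of Useful Lemmas'' appendix covers Lemmas 1--8 but silently skips Lemmas 9--11, so there is no reference argument to compare yours against. Judged on its own terms, your setup is the natural one and mostly sound --- the per-step conditional miss probability $\frac{R-c-k+1}{c\gamma+R-c-k+1}$ is correct, though your monotonicity claim is stated backwards: this factor is \emph{decreasing} in $c$ for $\gamma\ge 1$ (numerator shrinks, denominator grows), and it is precisely that decrease which makes $c=1$ the worst case, so your conclusion survives the flipped sign.

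The genuine gap is the final step, and it is not merely ``elusive'' --- it is false. Your own telescoping identity kills it: each summand satisfies
\begin{equation*}
\frac{\gamma}{\gamma+R-k}\prod_{j=1}^{k-1}\frac{R-j}{\gamma+R-j}\;\le\;\frac{\gamma}{\gamma+R-k}\;\le\;\frac{\gamma}{\gamma+R-r},
\end{equation*}
so summing over $k=1,\dots,r$ gives $1-\prod_{k=1}^{r}\frac{R-k}{\gamma+R-k}\le\frac{r\gamma}{R-r+\gamma}$, with equality only at $r=1$. In other words, after reducing to $c=1$ you have proved the \emph{reverse} of the inequality you need. A concrete check: $R=10$, $r=2$, $\gamma=1$ gives an exact hit probability of $1-\frac{9\cdot 8}{10\cdot 9}=0.2$, while the target bound is $\frac{2}{9}\approx 0.222$. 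No amount of convexity or careful induction will close this, because the reduction to the single-relevant-neighbor worst case makes the claimed bound unattainable under sequential weighted sampling without replacement. What you can legitimately conclude is $\mathbb{E}\alpha\ge 1-\prod_{k=1}^{r}\frac{R-k}{\gamma+R-k}$, which equals $\frac{r\gamma}{R}(1+o(1))$ when $r\ll R$ and therefore matches the lemma's bound only to leading order (consistent with Corollary 10.1 and the paper's order-wise conventions). You should either prove that weaker, correct statement and note the order-wise agreement, or retain more of the neighborhood structure ($c\ge 2$, or a different sampling model such as independent per-node inclusion) under which the stated constant could actually hold --- but as written, your plan cannot terminate in the claimed inequality.
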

\begin{customthm}{\ref{lemma: alpha2}.1}\label{coro: alpha2}
For the importance sampling strategy in Lemma \ref{lemma: alpha2}, $\mathbb{E}\alpha \ge \gamma\cdot \frac{r}{R}$ when $R\gg r$.
\end{customthm}

\Revise{\textbf{Remark} 10.1:
From Corollary \ref{coro: alpha2}, we can see that the applied importance sampling in Lemma \ref{lemma: alpha2} can save the sample complexity by a factor of $\frac{1}{\gamma^2}$ when $r\ll R$ on average.}

Lemma \ref{lemma: alpha2} describes the importance sampling strategy when a fraction of nodes with class-irrelevant features are assigned with a high sampling probability. From Corollary \ref{coro: alpha2}, we know that $\alpha$ can be improved over uniform sampling by a factor of $\frac{\gamma}{1+(\gamma-1)\lambda}$, where $\lambda$ is the fraction $|\mathcal{V}_N\cap \mathcal{D}_I|/|\mathcal{V}_N$|. Formally, we have
\begin{lemma}\label{lemma: alpha3}
    Suppose $\mathcal{D}_I=\mathcal{V}_+\cup \mathcal{V}_-\cup \mathcal{V}_{\lambda}$, where $\mathcal{V}_\lambda\subseteq \mathcal{V}_N$ with $|\mathcal{V}_{\lambda}|=\lambda |\mathcal{V}_N|$.
    We have 
    \begin{equation}
        \mathbb{E}\alpha \ge  \frac{\gamma r}{\big(1+(\gamma -1)\lambda\big)R -r+\gamma}.
    \end{equation}
\end{lemma}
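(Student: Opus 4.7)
The plan is to reduce the computation of $\mathbb{E}\alpha$ to a single-node bound parameterized by the local neighborhood of each $v$, and then integrate out the randomness in the choice of $\mathcal{V}_\lambda$. Compared to Lemma~\ref{lemma: alpha2}, the new ingredient is a third category of neighbors (those in $\mathcal{V}_\lambda$), which inflates the total importance-sampling weight and therefore appears only in the denominator of the bound.

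First, for a fixed node $v$, I would partition its $R$ neighbors by importance-sampling weight: there are $c_v \ge 1$ class-relevant neighbors with weight $\gamma$ (at least one by assumption A1), $m_v$ neighbors in $\mathcal{V}_\lambda$ with weight $\gamma$, and $R - c_v - m_v$ neighbors in $\mathcal{V}_N \setminus \mathcal{V}_\lambda$ with weight $1$. The total sampling weight in the neighborhood is then $W_v = R + (\gamma-1)(c_v + m_v)$, which collapses to the Lemma~\ref{lemma: alpha2} expression when $m_v = 0$.

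Second, I would establish the per-node bound
$$\alpha_v \;\ge\; \frac{\gamma c_v\, r}{W_v - r + 1},$$
which generalizes Lemma~\ref{lemma: alpha2} ($\lambda=0$) to the three-category weighting. The derivation proceeds by writing $1 - \alpha_v = \prod_{i=0}^{r-1}(1 - P_i)$, where $P_i$ is the conditional probability that the $i$-th sequential draw (without replacement) lands in $\mathcal{V}_+ \cup \mathcal{V}_-$ given that the first $i-1$ missed. The class-relevant weight remains $\gamma c_v$, and the weight removed by prior misses is at most $(i-1)\gamma$, so $P_i \ge \gamma c_v / (W_v - (i-1)\gamma)$. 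A worst-case analysis over the configuration of prior removed weights, combined with a hypergeometric-type telescoping inequality of the form $1 - \prod_{i=0}^{r-1}\tfrac{b-a-i}{b-i} \ge \tfrac{a r}{b - r + 1}$ applied with $a = \gamma c_v$ and effective $b$ of order $W_v$, produces the per-node bound.

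Third, I would average over the uniformly random subset $\mathcal{V}_\lambda \subseteq \mathcal{V}_N$ of size $\lambda |\mathcal{V}_N|$. Each non-class-relevant neighbor of $v$ lies in $\mathcal{V}_\lambda$ with marginal probability $\lambda$, so $\mathbb{E}[m_v] = \lambda(R - c_v)$. Because the map $t \mapsto \gamma c_v r / (R + (\gamma-1)(c_v + t) - r + 1)$ is convex and decreasing in $t$ (its second derivative is positive and its first derivative is negative for $\gamma \ge 1$), Jensen's inequality yields
$$\mathbb{E}[\alpha_v] \;\ge\; \frac{\gamma c_v\, r}{R + (\gamma-1)\bigl(c_v + \lambda(R - c_v)\bigr) - r + 1}.$$
Setting $c_v = 1$ (the worst case, as the RHS is increasing in $c_v$) and expanding the denominator as $(1 + (\gamma-1)\lambda)R + (\gamma-1)(1-\lambda) - r + \gamma$, then absorbing the nonnegative lower-order correction $(\gamma-1)(1-\lambda)$ into the denominator, produces the claimed bound $\gamma r/\bigl((1+(\gamma-1)\lambda)R - r + \gamma\bigr)$.

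The main obstacle will be Step~2, the single-node bound: weighted sampling without replacement does not admit a clean marginal formula, and to extract the $\gamma c_v r / (W_v - r + 1)$ expression one must carefully identify the worst-case configuration of previously removed weights (are they all $\gamma$'s, all $1$'s, or mixed?) and then verify that the telescoping identity is sharp enough to match Lemma~\ref{lemma: alpha2} in the $\lambda = 0$ degenerate case. Getting exactly the constant $-r + 1$ rather than $-r + \gamma$ in the per-node bound, and then recovering $-r + \gamma$ only after absorbing a $(\gamma - 1)$ contribution using $c_v \ge 1$, is the most delicate piece of the algebra.
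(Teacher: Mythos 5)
The appendix of this paper states Lemma \ref{lemma: alpha3} (like Lemmas \ref{lemma: alpha1} and \ref{lemma: alpha2}) without including a proof in the source, so there is no reference argument to compare against; your proposal has to stand on its own. Your Steps 1 and 3 are fine as far as they go: the three-way weight partition, the Jensen step (for $f(t)=A/(B+Ct)$ with $C=(\gamma-1)\ge 0$ one indeed has $\mathbb{E}[f(m_v)]\ge f(\mathbb{E}[m_v])$), the monotonicity in $c_v$, and the absorption of the nonnegative $(\gamma-1)(1-\lambda)$ term all check out. Note, however, that Step 3 silently upgrades the lemma's arbitrary subset $\mathcal{V}_\lambda$ to one in which each class-irrelevant neighbor of $v$ lies in $\mathcal{V}_\lambda$ with marginal probability $\lambda$; for an adversarial $\mathcal{V}_\lambda$ concentrated on the neighborhoods of the relevant nodes this fails, so you must say over what randomness $\mathbb{E}\alpha$ is taken.

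The genuine gap is Step 2, and both of its ingredients have the inequality pointing the wrong way. First, upper-bounding the weight removed by prior misses by $(i-1)\gamma$ \emph{lower}-bounds the remaining total weight and therefore \emph{upper}-bounds $P_i$; the adversarial configuration for a lower bound on $P_i$ is that every previously removed neighbor has weight $1$, giving $P_i\ge \gamma c_v/(W_v-(i-1))$. Second, the telescoping inequality $1-\prod_{i=0}^{r-1}\frac{b-a-i}{b-i}\ge \frac{ar}{b-r+1}$ is false: for $a=1$ the left side equals $r/b$ while the right side equals $r/(b-r+1)>r/b$. The correct direction is $\le$ (it is the union bound, each marginal being at most $a/(b-r+1)$), so $\gamma c_v r/(W_v-r+1)$ is an \emph{upper} bound on the per-node hit probability, not a lower bound. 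The honest lower bound from this route is $1-(1-\gamma c_v/W_v)^r\ge \gamma c_v r/\big(W_v+(r-1)\gamma c_v\big)$, whose denominator carries $+(r-1)\gamma c_v$ rather than $-r+\gamma$; already at $\gamma=1$, $c_v=1$ the exact hit probability is $r/W_v$, strictly below your target $r/(W_v-r+1)$. So the per-node reduction as written cannot establish the stated denominator — either the averaging over nodes (with $c_v>1$ typically), a with-replacement sampling model, or a different reading of $\alpha$ must supply the missing slack, and your proposal does not identify which.
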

\begin{customthm}{\ref{lemma: alpha3}.1}\label{coro: alpha3}
For the importance sampling strategy in Lemma \ref{lemma: alpha3}, $\mathbb{E}\alpha \ge \frac{\gamma}{1+(\gamma-1)\lambda}\cdot \frac{r}{R}$ when $R\gg r$.
\end{customthm}
\Revise{\textbf{Remark} 11.1:
From Corollary \ref{coro: alpha3}, we can see that the applied importance sampling in Lemma \ref{lemma: alpha3} can save the sample complexity by a factor of $\Big(\frac{1 + (\gamma-1)\lambda}{\gamma}\Big)^2$ when $r\ll R$ on average.}

    
    

\section{Proof of Useful Lemmas}\label{app: proof_of_lemma}
\subsection{Proof of Lemma \ref{Lemma: number_of_lucky_neuron}}
The major idea is to obtain the probability of being a lucky neuron as shown in \eqref{proofeqn:lemma1_1}. Compared with the noiseless case, which can be easily solved by applying symmetric properties in \cite{BG21}, this paper takes extra steps to characterize the boundary shift caused by the noise in feature vectors, which are shown in \eqref{proofeqn:lemma1_2}.
\begin{proof}[Proof of Lemma \ref{Lemma: number_of_lucky_neuron}]
For a random generated weights $\bfw_k$ that belongs to Gaussian distribution, let us define the random variable $i_k$ such that
    \begin{equation}
        i_k = \begin{cases}
            1, \quad \text{if  $\bfw_k$ is the \textit{lucky neuron}}\\
            0, \quad \text{otherwise}
        \end{cases}.
    \end{equation}

   Next, we provide the derivation of $i_k$'s distribution.  Let $\theta_1$ be the angle between $\bfp_+$ and the initial weights $\bfw$. Let $\{\theta_\ell \}_{\ell=2}^L$ be the angles between $\bfp_-$ and $\bfp \in \mathcal{P}_N$. Then, it is easy to verify that $\{ \theta_\ell\}_{\ell=1}^L$ belongs to the uniform distribution on the interval $[ 0 , 2\pi]$. Because $\bfp \in \mathcal{P}$ are orthogonal to each other. Then, $\vm{\bfw}{\bfp}$ with $\bfp\in \mathcal{P}$ are independent with each other given $\bfw$ belongs to Gaussian distribution. 
   It is equivalent to saying that $\{\theta_\ell \}_{\ell =1}^L$ are independent with each other. Consider the noise level $\sigma$ (in the order of $1/L$) is significantly smaller than $1$, then the probability of a lucky neuron can be bounded as 
    \begin{equation}
        \text{Prob}\Big( \theta_1 +\Delta\theta \le \theta_\ell - \Delta \theta \le 2\pi , \quad 2\le \ell \le L \Big), 
    \end{equation}
    where $\Delta \theta \eqsim \sigma$.
    Then, we have
    \begin{equation}\label{proofeqn:lemma1_2}
    \begin{split}
          &\text{Prob} \Big(  \theta_1 +\Delta\theta \le \theta_\ell - \Delta \theta \le 2\pi , \quad 2\le \ell \le L \Big)\\
        = & \prod_{\ell =1}^L \text{Prob}\Big( \theta_1 +\Delta\theta \le \theta_\ell - \Delta \theta \le 2\pi \Big)\\
        = & \Big(\frac{2\pi -\theta_1 - \Delta \theta}{2\pi}\Big)^{L-1}.
    \end{split}
    \end{equation}
    Next, we can bound the probability as
    \begin{equation}\label{proofeqn:lemma1_1}
    \begin{split}
          \text{Prob} (  \bfw \text{ is the lucky neuron} )
        = & \int_{0}^{2\pi} \frac{1}{2\pi} \cdot \bigg(\frac{2\pi -\theta_1 - \Delta \theta}{2\pi}\bigg)^{L-1} d\theta_1\\
        \simeq & \frac{1}{L} \cdot \bigg(\frac{2\pi - 2 \Delta \theta}{2\pi}\bigg)^L\\
        \simeq & \frac{1}{L} \cdot \bigg(1 - \frac{ L\sigma}{\pi}\bigg).
    \end{split}    
    \end{equation}
    We know $i_k$ belongs to Bernoulli distribution with probability $\frac{1}{L}(1 - \frac{ L\sigma}{\pi})$. By Hoeffding's inequality, we know that
    \begin{equation}
    \frac{1}{L}\bigg(1 - \frac{ L\sigma}{\pi}\bigg) - \sqrt{\frac{C\log q}{K}} \le   \frac{1}{K}\sum_{k=1}^{K}i_k \le \frac{1}{L}\bigg(1 - \frac{ L\sigma}{\pi}\bigg) + \sqrt{\frac{C\log q}{K}}
    \end{equation}
    with probability at least $q^{-C}$.
    Let $K = \mathcal{D}(\varepsilon_K^{-2} L^2\log q)$, we have 
    \begin{equation}\label{eqn: probability_lucky}
    \frac{1}{K}\sum_{k=1}^{K}i_k \ge \Big(1-\varepsilon_K - \frac{L\sigma}{\pi}\Big) \cdot \frac{1}{L}.    
    \end{equation}
    To guarantee the probability in \eqref{eqn: probability_lucky} is positive, the noise level needs to satisfy
    \begin{equation}\label{eqn: noise_requirement}
        \sigma < \frac{\pi}{L}.
    \end{equation}

\end{proof}

\subsection{Proof of Lemma \ref{Lemma: update_lucky_neuron}}
The bound of the gradient is divided into two parts in
\eqref{eqn: defi_I_12}, where $I_1$ and $I_2$ concern the noiseless features and the noise, respectively.
The term $I_2$ can be bounded using standard Chernoff bound, see \eqref{eqn: bound_I2} for the final result. While for $I_1$,  the gradient derived from $\mathcal{D}_+$ is always in the direction of $\bfp_+$ with a \textit{lucky neuron}. Therefore, the neuron weights keep increasing in the direction of $\bfp_+$ from \eqref{eqn: bound1_1}. Also, if $\vm{\bfw_k}{\bfx_v}>0$ for some $y_v=-1$, the gradient derived from $v$ will force $w_k$ moving in the opposite directions of $\bfx_v$. 
Therefore, the gradient derived from $\mathcal{D}_-$ only has a negative contribution in updating $\bfw_k$. For any pattern existing in $\{\bfp_v\}_{v\in\mathcal{D}_-}$, which is equivalent to say for any $\bfp \in \mathcal{P}/\bfp_+$, the value of $\vm{\bfw_k}{\bfp}$ has a upper bound, see \eqref{eqn: bound1_2}.

\begin{proof}[Proof of Lemma \ref{Lemma: update_lucky_neuron}]
 For any $v \in\mathcal{D}_+$ and a lucky neuron with weights $\bfw_k^{(t)}$, we have
\begin{equation}
\begin{gathered}
  \mathcal{M}(v;\bfw_k^{(t)}) = \bfp_+ + \mathcal{M}_z(v;\bfw_k^{(t)}).
\end{gathered}
\end{equation}
If the neighbor node with class relevant pattern is  sampled in $\mathcal{N}^{(t)}(v)$, the gradient is calculated as
\begin{equation}\label{eqn: lemma1_1_+}
    \frac{\partial g(\bfW^{(t)},\bfU^{(t)};\bfX_{\mathcal{N}^{(t)}(v)})}{\partial \bfw_k} \Big|_{\bfw_k=\bfw_k^{(t)}} = \bfp_+ + \mathcal{M}_z(v;\bfw_k^{(t)}).
\end{equation}
For $v\in\mathcal{D}_-$ and a lucky neuron $\bfw_k^{(t)}$, we have 
\begin{equation}
    \mathcal{M}^{(t)}(v;\bfw_k^{(t)}) \in \{\bfx_n \}_{n\in\mathcal{N}(v)}\bigcup \{\bfzero\},   
\end{equation}
and the gradient can be represented as
\begin{equation}\label{eqn: lemma1_1_-}
\begin{split}
    \frac{\partial g(\bfW^{(t)},\bfU^{(t)};\bfX_{\mathcal{N}^{(t)}(v)})}{\partial \bfw_k}\Big|_{\bfw_k=\bfw_k^{(t)}} 
    =& \mathcal{M}^{(t)}(v;\bfw_k^{(t)}) \\
    =& \mathcal{M}_{p}^{(t)}(v;\bfw_k^{(t)}) + \mathcal{M}_z^{(t)}(v;\bfw_k^{(t)}).
\end{split}
\end{equation}
\textbf{Definition of \text{$I_1$} and \text{$I_2$}.} 
From the analysis above, we have 
\begin{equation}\label{eqn: defi_I_12}
    \begin{split}
        &\vm{\bfw_k^{(t+1)}}{\bfp}-\vm{\bfw_k^{(t)}}{\bfp}\\
        = &~c_{\eta} \cdot \mathbb{E}_{v\in\mathcal{D}}\vm{y_v\cdot\mathcal{M}^{(t)}(v;\bfw_k^{(t)})}{\bfp}\\
        = &~c_{\eta} \cdot \Big( \mathbb{E}_{v\in\mathcal{D}}\vm{y_v
        \cdot
        \mathcal{M}_{p}^{(t)}(v;\bfw_k^{(t)})
        }{\bfp}
        +\mathbb{E}_{v\in\mathcal{D}}\vm{y_v\cdot\mathcal{M}_z^{(t)}(v;\bfw_k^{(t)})}{\bfp}\Big)\\
        :=&~ c_{\eta}(I_1 + I_2).
    \end{split}
\end{equation}
\textbf{Bound of \text{$I_2$}.}
Recall that the noise factor $\bfz_v$ are identical and independent with $v$, it is easy to verify that $y_v$ and $\mathcal{M}_z(v;\bfw_k^{(t)})$ are independent. 
Then, $I_2$ in \eqref{eqn: defi_I_12} can be bounded as
\begin{equation}
    \begin{split}
        |I_2| 
        =&~|\mathbb{E}_{v\in\mathcal{D}}y_v \cdot \mathbb{E}_{v\in\mathcal{D}}\vm{\mathcal{M}_z(v;\bfw_k^{(t)})}{\bfp}|\\
        \le & ~ |\mathbb{E}_{v\in\mathcal{D}}y_v|\cdot |\mathbb{E}_{v\in\mathcal{D}}\vm{\mathcal{M}_z(v;\bfw_k^{(t)})}{\bfp}|\\
        \le&~ \sigma\cdot |\mathbb{E}_{v\in\mathcal{D}}y_v|.
    \end{split}
\end{equation}

Recall that the number of sampled neighbor nodes is fixed $r$. Hence, for any fixed node $v$, there are at most $(1+r^2)$ (including $v$ itself) elements in $\big\{ u\big| u\in\mathcal{D} \big\}$ are dependent with $y_v$. 
Also, from assumption (\textbf{A2}), we have Prob($y=1$) = $\frac{1}{2}$ and Prob($y=-1$) = $\frac{1}{2}$.
From Lemma \ref{Lemma: partial_independent_var},   the moment generation function of $\sum_{v\in\mathcal{D}} (y_v-\mathbb{E}y_v)$ satisfies 
\begin{equation}
     e^{\sum_{v\in\mathcal{D}}s(y_v-\mathbb{E}y_v)}\le e^{C(1+r^2)|\mathcal{D}|s^2},
\end{equation} 
where $\mathbb{E}y_v = 0$ and $C$ is some positive constant.

By Chernoff inequality, we have 
\begin{equation}
    \text{Prob}
    \bigg\{  \bigg|\sum_{v\in\mathcal{D}} (y_v -\mathbb{E}~y_v) \bigg|>th   
    \bigg
    \}\le \frac{e^{C(1+r^2)|\mathcal{D}|s^2}}{e^{|\mathcal{D}|\cdot th\cdot s}}
\end{equation}
for any $s>0$. 
Let $s = th/\big(C(1+r^2)\big)$ and $th = \sqrt{ {(1+r^2)|\mathcal{D}|\log q}}$, we have 
\begin{equation}\label{eqn: bound_yv}
\begin{split}
\Big| \sum_{v \in \mathcal{D}} (y_v-\mathbb{E}~y_v) \Big|
\le& C\sqrt{{(1+r^2)|\mathcal{D}|\log q}}\\
\end{split}
\end{equation}
with probability at least $1- q^{-c}$.
From (\textbf{A2}) in Section \ref{sec: formal_theoretical}, we know that 
\begin{equation}
    |\mathbb{E}y_v| \lesssim \sqrt{|\mathcal{D}|}. 
\end{equation}
Therefore, $I_2$ is bounded as
\begin{equation}\label{eqn: bound_I2}
    \begin{split}
        |I_2| 
        \le& \Big| \sum_{v \in \mathcal{D}} \frac{1}{|\mathcal{D}|}y_v \Big|
        \\
        \le & \Big| \sum_{v \in \mathcal{D}} \frac{1}{|\mathcal{D}|}(y_v-\mathbb{E}~y_v) \Big| + |\mathbb{E}~ y_v|\\
        \lesssim& \sigma\sqrt{\frac{(1+r^2)\log q}{|\mathcal{D}|}}.    
    \end{split}
\end{equation}

\textbf{Bound of \text{$I_1$} when $\bfp=\bfp_+$.}
Because the neighbors of node $v$ with negative label do not contain $\bfp_+$, $\mathcal{M}_{p}(v;\bfw_k^{(t)})$ in \eqref{eqn: lemma1_1_-} cannot be $\bfp_+$. In addition, $\bfp_+$ is orthogonal to $\{\bfp_n \}_{n\in\mathcal{N}(v)} \bigcup \{\bfzero\}$. Then, we have 
\begin{equation}\label{eqn: lemma1_1_-1}
    \Big\langle \bfp_+, \frac{\partial g(\bfW,\bfU;\bfX_{\mathcal{N}(v)})}{\partial \bfw_k}\Big|_{\bfw_k=\bfw_k^{(t)}} \Big\rangle = \vm{\bfp_+}{\mathcal{M}_z(v;\bfw_k^{(t)})}.
\end{equation}
Let us use $\mathcal{D}_s^{(t)}$ to denote the set of nodes that the class relevant pattern is included in the sampled neighbor nodes.
Recall that $\alpha$, defined in Section \ref{sec: theoretical_results}, is the sampling rate of nodes with important edges, we have 
\begin{equation}
    |\mathcal{D}_s^{(t)}| = \alpha|\mathcal{D}_+|.
\end{equation}
Combining \eqref{eqn: lemma1_1_+}, \eqref{eqn: lemma1_1_-} and \eqref{eqn: lemma1_1_-1}, we have 
\begin{equation}\label{eqn: bound1_1}
\begin{split}
     I_1 
     =&~\vm{~\mathbb{E}_{v\in \mathcal{D}_+}\mathcal{M}_{p}(v;\bfw_k^{(t)}) - \mathbb{E}_{v\in \mathcal{D}_-}\mathcal{M}_{p}(v;\bfw_k^{(t)})}{~\bfp_+}\\
     =&~\vm{~\mathbb{E}_{v\in \mathcal{D}_+}\mathcal{M}_{p}(v;\bfw_k^{(t)})}{~\bfp_+}\\
     \ge&  ~\vm{~\mathbb{E}_{v\in \mathcal{D}_s}\mathcal{M}_{p}(v;\bfw_k^{(t)})}{~\bfp_+}\\
     \ge & ~\alpha.
\end{split}
\end{equation}

\textbf{Bound of \text{$I_1$} when $\bfp\neq\bfp_+$.}
Next, for any $\bfp \in \mathcal{P}/\mathcal{P}+$, we will prove the following equation:
\begin{equation}\label{lemma1:1_1_1}
    I_1 \le  \sqrt{\frac{(1+r^2)\log q}{|\mathcal{D}|}}.
\end{equation}  
When $\vm{\bfw_k^{(t+1)}}{\bfp}\le -\sigma$, we have
\begin{equation}
    \mathcal{M}_{p}^{(t)}(v;\bfw_k^{(t)}) \neq \bfp, \quad \text{and} \quad \vm{\mathcal{M}_{p}^{(t)}(v;\bfw_k^{(t)})}{\bfp} = 0 \quad \text{for any} \quad v \in \mathcal{D}. 
\end{equation}
Therefore, we have
\begin{equation}
    \begin{split}
        I_1 
    =&  ~ \vm{\mathbb{E}_{v \in \mathcal{D}_+}\mathcal{M}_{p}^{(t)}(v;\bfw_k^{(t)}) - \mathbb{E}_{v \in \mathcal{D}_-}\mathcal{M}_{p}^{(t)}(v;\bfw_k^{(t)})}{\bfp}
     =  ~0.
    \end{split}    
\end{equation}

When $\vm{\bfw_k^{(t+1)}}{\bfp}> - \sigma$, we have 
\begin{equation}
    \begin{split}
        I_1 
    = &~\vm{\mathbb{E}_{v \in \mathcal{D}_+}\mathcal{M}_{p}^{(t)}(v;\bfw_k^{(t)}) - \mathbb{E}_{v \in \mathcal{D}_-}\mathcal{M}_{p}^{(t)}(v;\bfw_k^{(t)})}{\bfp}\\
    = &~\vm{\mathbb{E}_{v \in \mathcal{D}_+/\mathcal{D}_s}\mathcal{M}_{p}^{(t)}(v;\bfw_k^{(t)}) - \mathbb{E}_{v \in \mathcal{D}_-}\mathcal{M}_{p}^{(t)}(v;\bfw_k^{(t)})}{\bfp}.\\
    \end{split}
\end{equation}
Let us define a mapping $\mathcal{H}$: $\mathbb{R}^d \longrightarrow \mathbb{R}^d$ such that
\begin{equation}\label{eqn: construct1}
    \mathcal{H}(\bfp_{v}) = \begin{cases}
        \bfp_- \quad \text{if} \quad \bfp_v = \bfp_+\\
        \bfp_v \quad \text{otherwise} \\
    \end{cases},
\end{equation}
Then, we have
\begin{equation}
    \begin{split}
        I_1
    = &\vm{\mathbb{E}_{v \in \mathcal{D}_+}\mathcal{M}_{p}^{(t)}(v;\bfw_k^{(t)}) - \mathbb{E}_{v \in \mathcal{D}_-}\mathcal{M}_{p}^{(t)}(v;\bfw_k^{(t)})}{\bfp}\\
    = &
   \vm{\mathbb{E}_{v \in \mathcal{D}_+/\mathcal{D}_s}\mathcal{M}_{p}^{(t)}(v;\bfw_k^{(t)}) - \mathbb{E}_{v \in \mathcal{D}_-}\mathcal{M}_{p}^{(t)}(v;\bfw_k^{(t)})}{\bfp}\\
    = &
   \vm{\mathbb{E}_{v \in \mathcal{D}_+/\mathcal{D}_s}\mathcal{M}_{p}^{(t)}(\mathcal{H}(v);\bfw_k^{(t)}) - \mathbb{E}_{v \in \mathcal{D}_-}\mathcal{M}_{p}^{(t)}(v;\bfw_k^{(t)})}{\bfp}\\
    \le &
   \vm{\mathbb{E}_{v \in \mathcal{D}_+}\mathcal{M}_{p}^{(t)}(\mathcal{H}(v);\bfw_k^{(t)}) - \mathbb{E}_{v \in \mathcal{D}_-}\mathcal{M}_{p}^{(t)}(v;\bfw_k^{(t)})}{\bfp}\\
    =& \vm{\mathbb{E}_{v \in \mathcal{D}_+}\mathcal{M}_{p}^{(t)}(\mathcal{H}(v);\bfw_k^{(t)}) - \mathbb{E}_{v \in \mathcal{D}_-}\mathcal{M}_{p}^{(t)}(\mathcal{H}(v);\bfw_k^{(t)})}{\bfp}\\
    \end{split}
\end{equation}
where the third equality holds because $\mathcal{N}^{(t)}(v)$ does not contain $\bfp_+$ for $v\in\mathcal{D}_+/\mathcal{D}_s$, and $\mathcal{H}(\bfX_{\mathcal{N}^{(t)}(v)}) = \bfX_{\mathcal{N}^{(t)}(v)}$. Also, the last equality holds because $\bfx_v$ does not contain $\bfp_+$ for node $v\in\mathcal{D}_-$.
Therefore, we have 
\begin{equation}\label{eqn: bound1_2}
\begin{split}
    &\vm{\mathbb{E}_{v \in \mathcal{D}_+}\mathcal{M}_{p}^{(t)}(\mathcal{H}(v);\bfw_k^{(t)}) - \mathbb{E}_{v \in \mathcal{D}_-}\mathcal{M}_{p}^{(t)}(\mathcal{H}(v);\bfw_k^{(t)})}{\bfp}\\
    =&\mathbb{E}_{v \in \mathcal{D}}\vm{y_v\mathcal{M}_{p}^{(t)}(\mathcal{H}(v);\bfw_k^{(t)})}{\bfp}\\ 
    \le &\big|\mathbb{E}_{v \in \mathcal{D}}y_v\big| \cdot \big|\mathbb{E}_{v\in \mathcal{D}}\vm{\mathcal{M}_{p}^{(t)}(\mathcal{H}(v);\bfw_k^{(t)})}{\bfp}\big|\\
    \le& \sqrt{\frac{(1+r^2)\log q}{|\mathcal{D}|}}
\end{split}
\end{equation}
with probability at least $1-q^{-C}$ for some positive constant.

\textbf{Proof of statement 1.}
From \eqref{eqn: bound_I2} and \eqref{eqn: bound1_1}, 
the update of $\bfw_k$ in the direction of $\bfp_+$ is bounded as
\begin{equation}
\begin{split}
    \vm{\bfw_k^{(t+1)}}{\bfp_+}-\vm{\bfw_k^{(t)}}{\bfp_+}
    \ge&~ c_\eta (I_1 - |I_2|)\\
    \ge&~ c_\eta \cdot( \alpha - \sigma \sqrt{\frac{(1+r^2)\log q}{|\mathcal{D}|}}).
\end{split}
\end{equation}
Then, we have
\begin{equation}
\begin{split}
    \vm{\bfw_k^{(t)}}{\bfp_+}-\vm{\bfw_k^{(0)}}{\bfp_+}
    \ge&~ c_\eta \cdot( \alpha - \sigma \sqrt{\frac{(1+r^2)\log q}{|\mathcal{D}|}}) \cdot t.
\end{split}
\end{equation}

\textbf{Proof of statement 2.} 
From \eqref{eqn: bound_I2} and \eqref{eqn: bound1_2},
the update of $\bfw_k$ in the direction of $\bfp\in\mathcal{P}/\bfp_+$ is upper bounded as
\begin{equation}
\begin{split}
    \vm{\bfw_k^{(t+1)}}{\bfp}-\vm{\bfw_k^{(t)}}{\bfp}
    \le&~ c_\eta (I_1 + |I_2|)\\
    \le&~ c_\eta \cdot( 1+\sigma)\cdot \sqrt{\frac{(1+r^2)\log q}{|\mathcal{D}|}}.
\end{split}
\end{equation}
Therefore, we have
\begin{equation}
\begin{split}
    \vm{\bfw_k^{(t)}}{\bfp}-\vm{\bfw_k^{(0)}}{\bfp}
    \le&~ c_\eta \cdot( 1+\sigma)\cdot \sqrt{\frac{(1+r^2)\log q}{|\mathcal{D}|}}\cdot t.
\end{split}
\end{equation}

The update of $\bfw_k$ in the direction of $\bfp\in\mathcal{P}/\bfp_+$ is lower bounded as
\begin{equation}
\begin{split}
    \vm{\bfw_k^{(t+1)}}{\bfp}-\vm{\bfw_k^{(t)}}{\bfp}
    \ge&~ c_\eta (I_1 - |I_2|)\\
    \ge&~ - c_\eta \cdot\bigg( 1 + \sigma\cdot \sqrt{\frac{(1+r^2)\log q}{|\mathcal{D}|}}\bigg).
\end{split}
\end{equation}

To derive the lower bound, we prove the following equation via mathematical induction:
\begin{equation}\label{eqn: lemma2_temp1}
    \vm{\bfw^{(t)}}{\bfp} \ge -c_{\eta} \bigg( 1+\sigma +\sigma t\cdot \sqrt{\frac{(1+r^2)\log q}{|\mathcal{D}|}} \bigg).
\end{equation}
It is clear that \eqref{eqn: lemma2_temp1} holds when $t=0$. Suppose $\eqref{eqn: lemma2_temp1}$ holds for $t$.

When $\vm{\bfw_k^{(t)}}{\bfp}\le -\sigma$, we have
\begin{equation}
    \mathcal{M}_{p}^{(t)}(v;\bfw_k^{(t)}) \neq \bfp, \quad \text{and} \quad \vm{\mathcal{M}_{p}^{(t)}(v;\bfw_k^{(t)})}{\bfp} = 0 \quad \text{for any} \quad v \in \mathcal{D}. 
\end{equation}
Therefore, we have
\begin{equation}
    \begin{split}
        I_1 
    =&  ~ \vm{\mathbb{E}_{v \in \mathcal{D}_+}\mathcal{M}_{p}^{(t)}(v;\bfw_k^{(t)}) - \mathbb{E}_{v \in \mathcal{D}_-}\mathcal{M}_{p}^{(t)}(v;\bfw_k^{(t)})}{\bfp}
     =  ~0,
    \end{split}    
\end{equation}
and
\begin{equation}\label{eqn:lemm2_t3}
    \begin{split}
    ~\vm{\bfw_k^{(t+1)}}{\bfp}
    = & ~\vm{\bfw_k^{(t)}}{\bfp} +c_\eta\cdot (I_1 + I_2)\\ 
    =&~ \vm{\bfw_k^{(t)}}{\bfp} +c_\eta \cdot I_2\\
    \ge&-c_{\eta} \cdot \bigg( 1+\frac{\sigma}{c_\eta} +\sigma t\cdot \sqrt{\frac{(1+r^2)\log q}{|\mathcal{D}|}} \bigg) -c_\eta \sqrt{\frac{(1+r^2)\log q}{|\mathcal{D}|}}\\
    = & ~-c_{\eta} \cdot \bigg( 1+\frac{\sigma}{c_\eta} +\sigma (t+1)\cdot \sqrt{\frac{(1+r^2)\log q}{|\mathcal{D}|}} \bigg).\\
    \end{split}
\end{equation}

When $\vm{\bfw_k^{(t)}}{\bfp}> - \sigma$, we have 
\begin{equation}\label{eqn:lemm2_t4}
    \begin{split}
    &~\vm{\bfw_k^{(t+1)}}{\bfp}\\
    = & ~\vm{\bfw_k^{(t)}}{\bfp} +c_\eta(I_1 + I_2)\\ 
    = &~ \vm{\bfw_k^{(t)}}{\bfp} 
    +c_\eta \cdot \vm{\mathbb{E}_{v \in \mathcal{D}_+}\mathcal{M}_{p}^{(t)}(v;\bfw_k^{(t)}) - \mathbb{E}_{v \in \mathcal{D}_-}\mathcal{M}_{p}^{(t)}(v;\bfw_k^{(t)})}{\bfp} + c_\eta I_2
    \\
    =&~ \vm{\bfw_k^{(t)}}{\bfp} -c_\eta \cdot\vm{\mathbb{E}_{v \in \mathcal{D}_+}\mathcal{M}_{p}^{(t)}(v;\bfw_k^{(t)})}{\bfp}+c_\eta I_2\\
    \ge&~ -\sigma -c_\eta -c_\eta\sigma\sqrt{\frac{(1+r^2)\log q}{|\mathcal{D}|}}.\\
    \end{split}
\end{equation}
Therefore, from \eqref{eqn:lemm2_t3} and \eqref{eqn:lemm2_t4}, we know that \eqref{eqn: lemma2_temp1} holds for $t+1$. 
\end{proof}

\subsection{Proof of Lemma \ref{Lemma: update_unlucky_neuron}}
The bound of the gradient is divided into two parts in
\eqref{eqn: defi_I_34}, where $I_3$ and $I_4$ are respect to the noiseless features and the noise, respectively.
The bound for $I_4$ is similar to that for $I_2$ in proving Lemma \ref{Lemma: update_lucky_neuron}.
However, for a \textit{unlucky neuron}, the gradient derived from $\mathcal{D}_+$ is not always in the direction of $\bfp_+$.
We need extra techniques to characterize the offsets between $\mathcal{D}_+$ and $\mathcal{D}_-$. One critical issue is to guarantee the independence of $\mathcal{M}^{(t)}(v)$ and $y_v$, which is solved by constructing a matched data, see \eqref{eqn: lemma4_d} for the definition. 
We show that the magnitude of $\bfw_k^{(t)}$ scale in the order of $\sqrt{{(1+r^2)}/{|\mathcal{D}|}}$ from \eqref{eqn: bound_33}.
\begin{proof}[Proof of Lemma \ref{Lemma: update_unlucky_neuron}]
    \textbf{Definition of \text{$I_1$} and \text{$I_2$}.} 
Similar to \eqref{eqn: defi_I_12}, we define the items $I_3$ and $I_4$ as
\begin{equation}\label{eqn: defi_I_34}
    \begin{split}
        &\vm{\bfw_k^{(t+1)}}{\bfp}-\vm{\bfw_k^{(t)}}{\bfp}\\
        = &~c_{\eta} \cdot \mathbb{E}_{v\in\mathcal{D}}\vm{y_v
        \cdot
        \mathcal{M}_{p}^{(t)}(v;\bfw_k^{(t)})
        }{\bfp}
        +\mathbb{E}_{v\in\mathcal{D}}\vm{y_v\cdot\mathcal{M}_z^{(t)}(v;\bfw_k^{(t)})}{\bfp}\\
        :=&~ c_{\eta}\cdot (I_3 + I_4).
    \end{split}
\end{equation}

\textbf{Bound of \text{$I_4$}.} Following the similar derivation of \eqref{eqn: bound_I2}, we can obtain     
\begin{equation}\label{eqn: bound_4}
    |I_4| \lesssim \sigma\sqrt{\frac{(1+r^2)\log q}{|\mathcal{D}|}}. 
\end{equation}

\textbf{Bound of \text{$I_3$} when $\bfp = \bfp_+$.}
From \eqref{eqn: defi_I_34}, we have
\begin{equation}\label{eqn: bound_31}
    \begin{split}
    I_3 
    =& ~ \vm{\mathbb{E}_{v\in \mathcal{D}_+}\mathcal{M}_{p}^{(t)}(v;\bfw_k^{(t)}) - \mathbb{E}_{v\in\mathcal{D}_-}\mathcal{M}_{p}^{(t)}(v;\bfw_k^{(t)})}{~\bfp_+}\\
    = &~ \vm{\mathbb{E}_{v\in \mathcal{D}_+}\mathcal{M}_{p}^{(t)}(v;\bfw_k^{(t)}) }{~\bfp_+}\\
    \ge &~ 0.
    \end{split}
\end{equation}

\textbf{Bound of \text{$I_3$} when $\bfp \in \mathcal{P}_N$.}
To bound $I_4$ for any fixed $\bfp \in \mathcal{P}_N$, we maintain the subsets $\mathcal{S}_k(t) \subseteq \mathcal{D}_-$ such that $\mathcal{S}_k(t) = \{v\in \mathcal{D}_- \mid \mathcal{M}_{p}^{(t)}(v;\bfw_k^{(t)})= \bfp_- \}$. 

First,  when $\mathcal{S}_k(t)=\emptyset$, it is easy to verify that 
\begin{equation}\label{eqn:lemma2_t2}
   I_3 = 0.
\end{equation}

Therefore, $\mathcal{S}_k(t+1)=\emptyset$ and $\mathcal{S}_k(t')=\emptyset$ for all $t'\ge t$. 

Second, when $\mathcal{S}_k(t)\neq\emptyset$, then we have \begin{equation}
    I_3 = - \frac{|\mathcal{S}_k(t)|}{|\mathcal{D}_-|} \|\bfp_-\|^2.
\end{equation}
Note that if $\vm{\bfw_k^{(t)}}{\bfp_-}< -\sigma$, then $\mathcal{S}_k(t)$ must be an empty set. Therefore, after at most $t_0 = \frac{\|\bfw_k^{(0)}\|_2+\sigma}{c_\eta}$ number of iterations, $\mathcal{S}_k(t_0) =\emptyset$, and $\mathcal{S}_k(t) =\emptyset$ with $t\ge t_0$ from \eqref{eqn:lemma2_t2}.  

Next, for some large $t_0$ and any $v\in\mathcal{V}$, we define a mapping $\mathcal{F}: \mathcal{V}\longrightarrow \mathbb{R}^{rd}$ such that 
\begin{equation}\label{eqn: construct2}
    \mathcal{F}(v) = \begin{bmatrix} \bfo_{v_1}^\top & \bfo_{v_2}^\top & \cdots & \bfo_{v_r}^\top \end{bmatrix}^\top.
\end{equation}
From \eqref{eqn: defi_I_34}, we have 
\begin{equation}
    I_4
    =  \mathbb{E}_{v\in \mathcal{V} }\vm{ y_v\mathcal{M}_{p}^{(t)}(v;\bfw_k^{(t)})}{\bfp},
\end{equation}
where $\{v_1, \cdots, v_r \} = \mathcal{N}^{(t)}(v)$ and 
\begin{equation}\label{eqn: lemma4_d}
    \begin{cases}
        \bfo_{v_i} = \bfx_{v_i}, \quad \text{if $\bfx_{v_i}\neq \bfp_+$ or $\bfp_-$}\\
        \bfo_{v_i} = \bfzero, \quad \text{if $\bfx_{v_i}= \bfp_+$ or $\bfp_-$}
    \end{cases}.
\end{equation}

When $y=1$, we have 
\begin{equation}\label{eqn:lemma2_1}
    \vm{\mathcal{M}_{p}^{(t)}(\bfx)}{\bfp}=
    \begin{cases}
        \vm{\mathcal{M}_{p}^{(t)}(\mathcal{F}(v))}{\bfp}, \quad \text{if} \quad \mathcal{M}_{p}^{(t)}(\mathcal{F}(v)) \neq \bfp_+ \\
         0 \le \vm{\mathcal{M}_{p}^{(t)}(\mathcal{F}(v))}{\bfp}, \quad \text{if} \quad \mathcal{M}_{p}^{(t)}(\mathcal{F}(v)) = \bfp_+
    \end{cases}.
\end{equation}
When $y=-1$ and $t\ge t_0$, recall that $\mathcal{S}_k(t) = \mathcal{D}_-$, then we have
\begin{equation}\label{eqn:lemma2_2}
    \vm{\mathcal{M}_{p}^{(t)}(\bfx)}{\bfp}=\vm{\mathcal{M}_{p}^{(t)}(\mathcal{F}(\bfx))}{\bfp}.
\end{equation}
Combining \eqref{eqn:lemma2_1} and \eqref{eqn:lemma2_2}, we have
\begin{equation}
\begin{split}
    I_3
    \le&  \mathbb{E}_{ v \in \mathcal{V} } \big[y \cdot \vm{ \mathcal{M}_{p}^{(t)}(\mathcal{F}(v)) }{\bfp}\big].
\end{split}
\end{equation}
From assumptions (\textbf{A2}) and \eqref{eqn: app_a2}, we know that the distribution of $\bfp_+$ and $\bfp_-$ are identical, and the distributions of any class irrelevant patterns $\bfp\in\mathcal{P}_N$ are independent with $y$. Therefore,
it is easy to verify that $y$ and $\mathcal{F}(v)$ are independent with each other, then we have
\begin{equation}\label{eqn: lemma3_temp_2}
\begin{split}
    I_3
    \le &  \mathbb{E}_{v\in\mathcal{D}} y_v\cdot \mathbb{E}_{v}\vm{\mathcal{M}(\mathcal{F}(v))}{\bfp}.\\
\end{split}
\end{equation}

From \eqref{eqn: lemma3_temp_2} and \eqref{eqn: bound_yv}, we have 
\begin{equation}\label{eqn: bound_33}
    I_3 \le \sqrt{\frac{(1+r^2)\log q}{|\mathcal{D}|}}
\end{equation}
for any $\bfp\in\mathcal{P}_N$.

\textbf{Proof of statement 1.} From \eqref{eqn: bound_4} and \eqref{eqn: bound_31}, we have 
\begin{equation}
    \begin{split}
        \vm{\bfw_k^{(t+1)}}{\bfp}-\vm{\bfw_k^{(t)}}{\bfp}
        =&~ c_{\eta}\cdot (I_3 + I_4)\\
        \ge & ~-c_{\eta}|I_4|\\
        \ge & ~-c_{\eta} \sigma \sqrt{\frac{(1+r^2)\log q}{|\mathcal{D}|}}.
    \end{split}
\end{equation}
Therefore, we have 
\begin{equation}
    \begin{split}
        \vm{\bfw_k^{(t)}}{\bfp}-\vm{\bfw_k^{(0)}}{\bfp}
        \ge & ~-c_{\eta} \sigma \sqrt{\frac{(1+r^2)\log q}{|\mathcal{D}|}}\cdot t.
    \end{split}
\end{equation}

\textbf{Proof of statement 2.} 
When $\bfp =\bfp_-$, from the definition of $I_3$ in \eqref{eqn: defi_I_34}, we know
\begin{equation}\label{eqn: bound_32_1}
    \begin{split}
        I_3 
    =& ~ \vm{ \mathbb{E}_{v \in \mathcal{D}_+} \mathcal{M}_{p}^{(t)} (v;\bfw_k^{(t)} ) - \mathbb{E}_{ v \in \mathcal{D}_-} \mathcal{M}_{p}^{(t)} (v;\bfw_k^{(t)}) }{~\bfp_-}
    \\
    =& ~
    - 
    \vm{ \mathbb{E}_{ v \in \mathcal{D}_- } \mathcal{M}_{p}^{(t)}( v;\mathcal{D}_-) }{~\bfp_-}
    \\
    \le &~ 0,
    \end{split}
\end{equation}
The update of $\bfw_k$ in the direction of $\bfp_-$ is upper bounded as
\begin{equation}
\begin{split}
    \vm{\bfw_k^{(t+1)}}{\bfp_-}-\vm{\bfw_k^{(t)}}{\bfp_-}
    \le&~ c_\eta (I_3+I_4)\\
    \le&~ c_\eta I_4\\
    \le&~ c_\eta \cdot\sigma\cdot \sqrt{\frac{(1+r^2)\log q}{|\mathcal{D}|}}.
\end{split}
\end{equation}
Therefore, we have
\begin{equation}
\begin{split}
    \vm{\bfw_k^{(t)}}{\bfp_-}-\vm{\bfw_k^{(0)}}{\bfp}
    \le&~ c_\eta \cdot\sigma\cdot \sqrt{\frac{(1+r^2)\log q}{|\mathcal{D}|}}\cdot t.
\end{split}
\end{equation}

To derive the lower bound, we prove the following equation via mathematical induction:
\begin{equation}\label{eqn: lemma4_temp1}
    \vm{\bfw^{(t)}}{\bfp_-} \ge -c_{\eta} \bigg( 1+\sigma +\sigma t\cdot \sqrt{\frac{(1+r^2)\log q}{|\mathcal{D}|}} \bigg).
\end{equation}
It is clear that \eqref{eqn: lemma4_temp1} holds when $t=0$. Suppose $\eqref{eqn: lemma4_temp1}$ holds for $t$.

When $\vm{\bfw_k^{(t)}}{\bfp_-}\le -\sigma$, we have
\begin{equation}
    \mathcal{M}_{p}^{(t)}(v;\bfw_k^{(t)}) \neq \bfp_-, \quad \text{and} \quad \vm{\mathcal{M}_{p}^{(t)}(v;\bfw_k^{(t)})}{\bfp_-} = 0 \quad \text{for any} \quad v \in \mathcal{D}. 
\end{equation}
Therefore, we have
\begin{equation}
    \begin{split}
        I_3 
    =&  ~ \vm{\mathbb{E}_{v \in \mathcal{D}_+}\mathcal{M}_{p}^{(t)}(v;\bfw_k^{(t)}) - \mathbb{E}_{v \in \mathcal{D}_-}\mathcal{M}_{p}^{(t)}(v;\bfw_k^{(t)})}{\bfp_-}
     =  ~0,
    \end{split}    
\end{equation}
and
\begin{equation}\label{eqn:lemm4_t3}
    \begin{split}
    ~\vm{\bfw_k^{(t+1)}}{\bfp_-}
    = & ~\vm{\bfw_k^{(t)}}{\bfp_-} +c_\eta\cdot (I_3 + I_4)\\ 
    =&~ \vm{\bfw_k^{(t)}}{\bfp_-} +c_\eta \cdot I_4\\
    \ge&-c_{\eta} \cdot \bigg( 1+\frac{\sigma}{c_\eta} +\sigma t\cdot \sqrt{\frac{(1+r^2)\log q}{|\mathcal{D}|}} \bigg) -c_\eta \sqrt{\frac{(1+r^2)\log q}{|\mathcal{D}|}}\\
    = & ~-c_{\eta} \cdot \bigg( 1+\frac{\sigma}{c_\eta} +\sigma (t+1)\cdot \sqrt{\frac{(1+r^2)\log q}{|\mathcal{D}|}} \bigg).\\
    \end{split}
\end{equation}

When $\vm{\bfw_k^{(t)}}{\bfp_-}> - \sigma$, we have 
\begin{equation}\label{eqn:lemm4_t4}
    \begin{split}
    &~\vm{\bfw_k^{(t+1)}}{\bfp_-}\\
    = & ~\vm{\bfw_k^{(t)}}{\bfp_-} +c_\eta(I_3 + I_4)\\ 
    = &~ \vm{\bfw_k^{(t)}}{\bfp_-} 
    +c_\eta \cdot \vm{\mathbb{E}_{v \in \mathcal{D}_+}\mathcal{M}_{p}^{(t)}(v;\bfw_k^{(t)}) - \mathbb{E}_{v \in \mathcal{D}_-}\mathcal{M}_{p}^{(t)}(v;\bfw_k^{(t)})}{\bfp_-} + c_\eta I_4
    \\
    =&~ \vm{\bfw_k^{(t)}}{\bfp_-} -c_\eta \cdot\vm{\mathbb{E}_{v \in \mathcal{D}_+}\mathcal{M}_{p}^{(t)}(v;\bfw_k^{(t)})}{\bfp_-}+c_\eta I_4\\
    \ge&~ -\sigma -c_\eta -c_\eta\sigma\sqrt{\frac{(1+r^2)\log q}{|\mathcal{D}|}}.\\
    \end{split}
\end{equation}
Therefore, from \eqref{eqn:lemm4_t3} and \eqref{eqn:lemm4_t4}, we know that \eqref{eqn: lemma4_temp1} holds for $t+1$. 


\textbf{Proof of statement 3.}
From \eqref{eqn: bound_4} and \eqref{eqn: bound_33}, we have
\begin{equation}
    \begin{split}
        |\vm{\bfw_k^{(t+1)}}{\bfp}-\vm{\bfw_k^{(t)}}{\bfp}|
        =&~ c_{\eta}\cdot |I_3 + I_4|\\
        \le & ~ c_{\eta}\cdot (1 + \sigma) \cdot \sqrt{\frac{(1+r^2)\log q}{|\mathcal{D}|}}.
    \end{split}
\end{equation}
Therefore, we have
\begin{equation}
    \begin{split}
        |\vm{\bfw_k^{(t)}}{\bfp}-\vm{\bfw_k^{(0)}}{\bfp}|
        \le & ~ c_{\eta}\cdot (1 + \sigma) \cdot \sqrt{\frac{(1+r^2)\log q}{|\mathcal{D}|}}\cdot t.
    \end{split}
\end{equation}
\end{proof}

\subsection{Proof of Lemma \ref{Lemma: number_of_lucky_filter}}
The major idea is to show that the weights of lucky neurons will keep increasing in the direction of class-relevant patterns. Therefore, the lucky neurons consistently select the class-relevant patterns in the aggregation function.  
\begin{proof}[Proof of Lemma \ref{Lemma: number_of_lucky_filter}]
    For any $k\in\mathcal{U}(0)$, we have 
    \begin{equation}\label{eqn:lemma3:1}
        \vm{\bfw_k^{(0)}}{(1-\sigma)\cdot \bfp_+} \ge \vm{\bfw_k^{(0)}}{(1+\sigma)\cdot \bfp}
    \end{equation}
    for any $\bfp\in\mathcal{P}/\bfp_+$ by the definition of \textit{lucky neuron} in \eqref{defi:lucky_filter}.
    
    Next, from Lemma \ref{Lemma: update_lucky_neuron}, we have 
    \begin{equation}\label{eqn:lemma3:2}
        \begin{gathered}
            \vm{\bfw_k^{(t)}}{\bfp_+} 
            \ge  c_{\eta}\bigg(\alpha - \sigma\sqrt{\frac{(1+r^2)\log q}{|\mathcal{D}|}}\bigg)\cdot t,
            \\
            \text{and} \quad
            \vm{\bfw_k^{(t)}}{\bfp} 
            \le  c_\eta(1+\sigma)\cdot \sqrt{\frac{(1+r^2)\log q}{|\mathcal{D}|}}\cdot t.
        \end{gathered}
    \end{equation}
    Combining \eqref{eqn:lemma3:1} and \eqref{eqn:lemma3:2}, we have 
    \begin{equation}
        \vm{\bfw_k^{(t)}}{\bfp_+}
        -\frac{1+\sigma}{1-\sigma}\vm{\bfw_k^{(t)}}{\bfp}
        \gtrsim  c_\eta \cdot \bigg( \alpha - (1+4\sigma) \cdot \sqrt{\frac{(1+r^2)\log q}{|\mathcal{D}|}} \bigg)\cdot t.
    \end{equation}
    When $|\mathcal{D}| \gtrsim \alpha^{-2}\cdot (1+r^2)\log q$, we have  $\vm{\bfw_k^{(t)}}{\bfp_+}\ge \frac{1+\sigma}{1-\sigma}\vm{\bfw_k^{(t)}}{\bfp}$.
    
    Therefore, we have $k\in \mathcal{W}(t)$ and
    \begin{equation}
        \mathcal{W}(0) \subseteq \mathcal{W}(t).
    \end{equation}
    One can derive the proof for $\mathcal{U}(t)$ following similar steps above. 
\end{proof}

\subsection{Proof of Lemma \ref{lemma:pruning}}
The proof is built upon the statements of the lucky neurons and unlucky neurons in Lemmas \ref{Lemma: update_lucky_neuron} and \ref{Lemma: update_unlucky_neuron}. The magnitude of the lucky neurons in the direction of $\bfp_+$ is in the order of $\alpha$, while the magnitude of unlucky neurons is at most ${1}/{\sqrt{|\mathcal{D}|}}$. Given a sufficiently large $|\mathcal{D}|$, the magnitude of a lucky neuron is always larger than that of an unlucky neuron. 
\begin{proof}[Proof of Lemma \ref{lemma:pruning}]
    From Lemma \ref{Lemma: number_of_lucky_filter}, we know that (1) if $\bfw_{k_1}^{(0)}$ is lucky neuron, $\bfw_{k_1}^{(0)}$ is still lucky neuron for any $t'\ge 0$; (2) if $\bfw_{k_1}^{(t')}$ is unlucky neuron, $\bfw_{k_1}^{(t'')}$ is still unlucky neuron for any $t''\le t'$ 
    
    For any $k_1\in \mathcal{W}(0)$, from Lemma \ref{Lemma: update_lucky_neuron}, we have 
    \begin{equation}\label{eqn: lemmap_1}
    \begin{split}
        \vm{\bfw_{k_1}^{(t')}}{\bfw_{k_1}^{(t')}}^{\frac{1}{2}} 
        \ge& \vm{\bfw_{k_1}^{(t')}}{\bfp_+}\\
        \ge& c_\eta \cdot \bigg(\alpha - \sigma\sqrt{\frac{(1+r^2)\log q}{|\mathcal{D}|}} \bigg)t'.
    \end{split}
    \end{equation}
    
    For any $k_2\in \mathcal{W}^c(t')$, from Lemma \ref{Lemma: update_lucky_neuron}, we have 
    \begin{equation}\label{eqn: lemmap_2}
    \begin{split}
        \vm{\bfw_{k_2}^{(t')}}{\bfw_{k_2}^{(t')}}^{\frac{1}{2}} 
        =& \sum_{\bfp\in\mathcal{P}}\vm{\bfw_{k_1}^{(t')}}{\bfp}\\
        = & \sum_{\bfp\in\mathcal{P}/\bfp_+} \vm{\bfw_{k_1}^{(t')}}{\bfp} + \vm{\bfw_{k_1}^{(t')}}{\bfp_+}\\
        \le&\sum_{\bfp\in\mathcal{P}/\bfp_+} \vm{\bfw_{k_1}^{(t')}}{\bfp} + \max_{\bfp\in\mathcal{P}/\bfp_+}\vm{\bfw_{k_1}^{(t')}}{\bfp}\\
        \le& L\cdot c_\eta \cdot  (1+ \sigma)\cdot\sqrt{\frac{(1+r^2)\log q}{|\mathcal{D}|}} \cdot t'.
    \end{split}
    \end{equation}
    Therefore, $\vm{\bfw_{k_2}^{(t')}}{\bfw_{k_2}^{(t')}}^{\frac{1}{2}}<\vm{\bfw_{k_1}^{(t')}}{\bfw_{k_1}^{(t')}}^{\frac{1}{2}}$ if $|\mathcal{D}|$ is greater than $\alpha^{-2}(1+r^2)L^2\log q$.
\end{proof}
\subsection{Proof of Lemma \ref{Lemma: partial_independent_var}}
\begin{proof}[Proof of Lemma \ref{Lemma: partial_independent_var} ]
	According to the Definitions in \cite{J04}, there exists a family of $\{ (\mathcal{X}_j, w_j) \}_j$, where $\mathcal{X}_j \subseteq \mathcal{X} $ and $w_j \in [0,1]$, such that $\sum_{j}w_j \sum_{x_{n_j}\in\mathcal{X}_j}x_{n_j}=\sum_{n=1}^{N}x_n$, and $\sum_{j}w_j \le d_{\mathcal{X}}$ by equations (2.1) and (2.2) in \cite{J04}. Then, let $p_j$ be any positive numbers with $\sum_{j}p_j=1$. By Jensen's inequality, for any $s\in\mathbb{R}$, we have 
	\begin{equation}
	e^{s\sum_{n=1}^Nx_n}=e^{\sum_{j}p_j\frac{s w_j}{p_j}X_j}\le \sum_{j}p_je^{\frac{sw_j}{p_j}X_j},
	\end{equation}
	where $X_j = \sum_{x_{n_j}\in\mathcal{X}_j}x_{n_j}$.
	
	Then, we have
	\begin{equation}
	\begin{split}
	\mathbb{E}_{\mathcal{X}}e^{s\sum_{n=1}^Nx_n}
	\le& \mathbb{E}_{\mathcal{X}}\sum_{j}p_je^{\frac{sw_j}{p_j}X_j}
	= \sum_{j} p_j \prod_{\mathcal{X}_j}\mathbb{E}_{\mathcal{X}}e^{\frac{sw_j}{p_j}x_{n_j}}\\
	\le& \sum_{j} p_j \prod_{\mathcal{X}_j}e^{\frac{Cw_j^2}{p_j^2}s^2}\\
	\le & \sum_{j} p_j e^{\frac{C|\mathcal{X}_j|w_j^2}{p_j^2}s^2}.\\
	\end{split}
	\end{equation}
	
	Let $p_j = \frac{w_j|\mathcal{X}_j|^{1/2}}{\sum_{j}w_j|\mathcal{X}_j|^{1/2}}$, then we have 
	\begin{equation}
	\mathbb{E}_{\mathcal{X}}e^{s\sum_{n=1}^Nx_n}			
	\le \sum_{j} p_j e^{C\big(\sum_{j}w_j|\mathcal{X}_j|^{1/2}\big)^2s^2}
	=e^{C\big(\sum_{j}w_j|\mathcal{X}_j|^{1/2}\big)^2s^2}.
	\end{equation}
	
	By Cauchy-Schwarz inequality, we have 
	\begin{equation}
	\big(\sum_{j}w_j|\mathcal{X}_j|^{1/2}\big)^2\le \sum_{j}w_j \sum_{j}w_j |\mathcal{X}_j|\le d_{\mathcal{X}}N.
	\end{equation}
	
	Hence, we have 
	\begin{equation}
	\mathbb{E}_{\mathcal{X}}e^{s\sum_{n=1}^Nx_n}\le 	e^{Cd_{\mathcal{X}}Ns^2}.		
	\end{equation}	
\end{proof}
\section{Extension to multi-class classification}\label{sec: multi-class}
Consider the classification problem with four classes, we use the label $\bfy\in\{+1,-1\}^{2}$ to denote the corresponding class. Similarly to the setup in Section \ref{sec: problem_formulation}, there are four orthogonal class relevant patterns, namely $\bfp_1, \bfp_2, \bfp_3, \bfp_4$. In the first layer (hidden layer), we have $K$ neurons with weights $\bfw_k\in\mathbb{R}^d$. 
In the second layer (linear layer), the weights are denoted as $\bfb_k\in\mathbb{R}^2$ for the $k$-th neuron. Let $\bfW\in\mathbb{R}^{d \times K}$ and  $\bfB\in\mathbb{R}^{2\times K}$ be the collections of $\bfw_k$'s and $\bfb_k$'s, respectively. Then, the output of the graph neural network, denoted as $\bfg\in\mathbb{R}^2$, is calculated as   
\begin{equation}\label{eqn: useless2}
    \bfg(\bfW,\bfB;\bfX_{\mathcal{N}(v)}) = \frac{1}{K}\sum_{k=1}^K \bfb_k \cdot\text{AGG}(\bfX_{\mathcal{N}(v)},\bfw_k).
\end{equation}
Then, the label generated by the graph neural network is written as
\begin{equation}
    \bfy_{\text{est}} = \text{sign}(\bfg).
\end{equation}
\begin{figure}[h]
    \centering
    \includegraphics[width=0.9\linewidth]{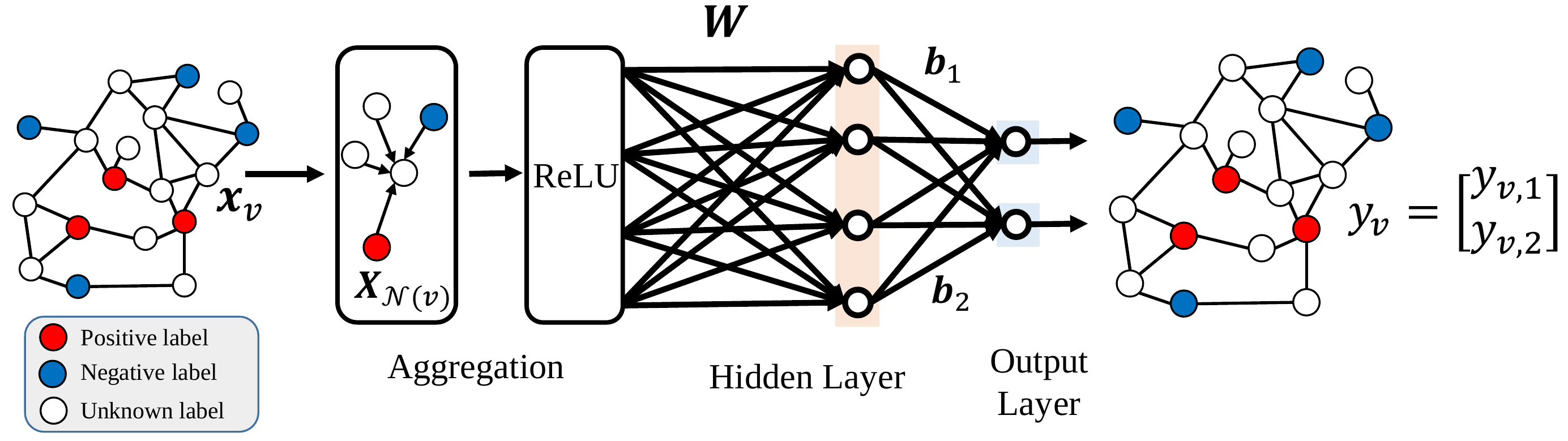}
    \caption{Illustration of graph neural network learning for multi-class classification}
    \label{fig:sage_multi}
\end{figure}
Given the set of data $\mathcal{D}$, we divide them into four groups as
\begin{equation}
    \begin{gathered}
        \mathcal{D}_1 = \{ (\bfx , \bfy) \mid \bfy = ( 1 , 1 ) \},\\
        \mathcal{D}_2 = \{ (\bfx , \bfy) \mid \bfy = ( 1 , -1 ) \},\\
        \mathcal{D}_3 = \{ (\bfx , \bfy) \mid \bfy = ( -1 , 1 ) \},\\
        \text{and} \quad \mathcal{D}_4 = \{ (\bfx , \bfy) \mid \bfy = ( -1 , -1 ) \}.
    \end{gathered}
\end{equation}
The corresponding loss function in \eqref{eqn: ERF} can be revised as 
\begin{equation}
\begin{split}
    \hat{f}_{\mathcal{D}}(\bfW,\bfU) 
    = &-\frac{1}{|\mathcal{D}|}\sum_{ v \in \mathcal{D}} \bfy_v^\top \bfg\big(\bfW,\bfb;\bfX_{\mathcal{N}(v)} \big)\\
\end{split}
\end{equation}

Then, we initialize the weights of $\bfb_k$ as $(1,1), (1,-1), (-1,1),$ and $(-1,-1)$ equally. Next, we divide the weights $\bfw_k$ into four groups based on the value of $\bfb_k$, such that
\begin{equation}
    \begin{gathered}
        \mathcal{W}_1 = \{ k \mid \bfb_k = (1,1)\},\\
        \mathcal{W}_2 = \{ k \mid \bfb_k = (1,-1)\},\\
        \mathcal{W}_3 = \{ k \mid \bfb_k = (-1,1)\},\\
        \text{and} \quad \mathcal{W}_4 = \{ k \mid \bfb_k = (-1,-1)\}.\\
    \end{gathered}
\end{equation}
Hence, for any $k$ in $\mathcal{W}_1$, we have 
\begin{equation}\label{eqn: w1}
    \frac{\partial f}{\partial \bfw} 
    = - \frac{1}{|\mathcal{D}|} \sum_{v \in \mathcal{D}} 
    \bigg(y_{v,1} \cdot  \frac{\partial g_1}{\partial \bfw} - y_{v,2}   \cdot  \frac{\partial g_2}{\partial \bfw}\bigg).
\end{equation}
When $x\in \mathcal{D}_1$, we have 
\begin{equation}\label{eqn: w11}
    y_{v,1} \cdot  \frac{\partial g_1}{\partial \bfw} - y_{v,2}   \cdot  \frac{\partial g_2}{\partial \bfw}
    = -2\phi^{\prime}(\bfw^\top \bfx)\cdot \bfx.
\end{equation}
When $x\in \mathcal{D}_2$, we have 
\begin{equation}
    y_{v,1} \cdot  \frac{\partial g_1}{\partial \bfw} - y_{v,2}   \cdot  \frac{\partial g_2}{\partial \bfw}
    = 0.
\end{equation}
When $x\in \mathcal{D}_3$, we have
\begin{equation}
    y_{v,1} \cdot  \frac{\partial g_1}{\partial \bfw} - y_{v,2}   \cdot  \frac{\partial g_2}{\partial \bfw}
    = 0.
\end{equation}
When $x\in \mathcal{D}_4$, we have 
\begin{equation}
    y_{v,1} \cdot  \frac{\partial g_1}{\partial \bfw} - y_{v,2}   \cdot  \frac{\partial g_2}{\partial \bfw}
    = 2\phi^{\prime}(\bfw^\top \bfx)\cdot \bfx.
\end{equation}
Then,
for any $k\in \mathcal{W}_1$, we have 
\begin{equation}\label{eqn: update_g_2_temp}
\begin{split}
    \bfw_k^{(t+1)} 
    =&~ \bfw_k^{(t)} + c_\eta\cdot\mathbb{E}_{v \in \mathcal{D}_1}\mathcal{M}^{(t)}(v;\bfw_k^{(t)}) - c_\eta\cdot \mathbb{E}_{v\in \mathcal{D}_4}\mathcal{M}^{(t)}(v;\bfw_k^{(t)}),
\end{split}
\end{equation}
Comparing \eqref{eqn: update_g_2_temp} with \eqref{eqn: update_g_2},
one can derived that the weights $\bfw_k^{(t)}$ will update mainly along the direction of $\bfp_1$ but have bounded magnitudes in other directions. By following the steps in \eqref{eqn: w1} to \eqref{eqn: update_g_2_temp}, similar results can be derived for for any $k\in\mathcal{W}_i$ with $1\le i\le 4$.

\end{document}